\newtheorem{theorem}{Theorem}[section]
\newtheorem{corollary}[theorem]{Corollary}
\newtheorem{lemma}[theorem]{Lemma}
\newtheorem{definition}[theorem]{Definition}
\newtheorem{claim}[theorem]{Claim}
\newcommand{\attn}{\textup{Attn}}
\newcommand{\softmax}{\textup{softmax}}
\newcommand{\mask}{\textup{mask}}
\newcommand{\ex}{\mathbf{E}}
\newcommand{\window}{\textup{window}}
\newcommand{\sink}{\textup{sink}}
\newcommand{\poly}{\textup{poly}}
\title{Two Heads Are Better than One: \\Simulating Large Transformers with Small Ones}
\author{
Hantao Yu \\
  Department of Computer Science\\
  Columbia University\\
  New York, NY 10027 \\
  \texttt{hantao.yu@columbia.edu} \\
\And
Josh Alman \\
  Department of Computer Science\\
  Columbia University\\
  New York, NY 10027 \\
  \texttt{josh@cs.columbia.edu}
}
\begin{document}

\maketitle

\begin{abstract}
The quadratic complexity of self‑attention prevents transformers from scaling effectively to long input sequences. On the other hand, modern GPUs and other specialized hardware accelerators are well-optimized for processing small input sequences in transformers during both training and inference. A natural question arises: can we take advantage of the efficiency of small transformers to deal with long input sequences?

In this paper, we show that transformers with long input sequences (large transformers) can be efficiently simulated by transformers that can only take short input sequences (small transformers). Specifically, we prove that any transformer with input length $N$ can be efficiently simulated by only $O((N/M)^2)$ transformers with input length $M \ll N$, and that this cannot be improved in the worst case. However, we then prove that in various natural scenarios including average-case inputs, sliding window masking and attention sinks, the optimal number $O(N/M)$ of small transformers suffice.
\end{abstract}

\section{Introduction}

The transformer architecture \cite{VSPUJGKP17} has revolutionized modern machine learning, natural language processing and computer vision. It achieves state-of-the-art performance on various tasks such as language reasoning \cite{Rae21, BMR20}, image recognition \cite{KDW21, CMS20} and many others. At the core of the transformer architecture is the attention mechanism, which captures correlations between all pairs of tokens. However, this is also a major bottleneck for transformers, as the quadratic complexity (in both time and memory) of the attention mechanism prohibits effective scaling of transformers as the sequence grows in length. Moreover, it has been theoretically proved that the quadratic complexity cannot be avoided (under popular complexity-theoretic assumptions) \cite{AS24}. To address this fundamental issue, there has been a fruitful literature on the design of ``subquadratic alternatives" to transformers, where researchers come up with mechanisms that replace the attention mechanism and take subquadratic time (usually close to linear time) \cite{reformer, performer, flashattention, polysketchformer, longformer, mamba}. However, they usually have worse performance than standard transformers, especially on downstream tasks and translation \cite{machine-translation, repeat, alman2025fundamental}. 

In the meantime, modern GPUs are increasingly optimized for handling short‐to‐moderate transformer contexts \cite{lightseq2, flashattention}. Some companies are even producing specialized hardware for efficient transformer inference that have superior performance on inputs of length between 128 to 2048  \cite{Kim:2024:RNGD,Etched:2024:Announcing}. This approach motivates the following questions: 
\begin{center}
    \emph{Can we use small transformers to perform tasks more efficiently than large transformers? Are (multiple) small transformers inherently capable of dealing with long contexts?}
\end{center} In this paper, we give positive answers to these questions through the lens of representational strength, which studies whether one can select parameters for transformers so that they can perform certain tasks of interest. The representational strength of transformers has been studied broadly in recent years \cite{bhattamishra2024separations, liu2023transformers, SHT24, merrill-sabharwal-2023-parallelism}, and it is believed that it is one of the core reasons why transformers outperform previous architectures such as RNN and LSTM \cite{wen2025rnns, arora2024zoology, claytonrepresentation}. 

Our problem can be stated as follows. Suppose that we have all the parameters of a large transformer $\mathcal{T}$ with input length $N$, as well as an input $X$ that we would like to evaluate $\mathcal{T}$ on. However, to evaluate $\mathcal{T}(X)$, we are not allowed to perform any particularly complicated computations; we are restricted to simple operations, and to making use of a small transformer $\mathcal{O}$ (as an oracle) that can only take input sequences of length $M \ll N$. We can input into $\mathcal{O}$ any sequence and parameters that we can easily compute, and obtain its output. Our goal is to minimize the number of calls to $\mathcal{O}$ that we need to obtain $\mathcal{T}(X)$ for arbitrary input $X$ of length $N$.

Our main results show that roughly $O((N/M)^2)$ oracle calls suffice, which we show is optimal. In addition, our algorithm requires minimal processing outside of the oracle calls, and it has properties needed for efficient training and inference, including that the gradients of its parameters are easily computed, and that its oracle calls can be computed in parallel in only $O(L)$ rounds of adaptivity, where $L$ is the number of layers in the large transformer $\mathcal{T}$. 

In addition, we show that in many scenarios arising in practice, such as when certain masking schemes are used, or when the data is not ``worst-case'' and satisfies some boundedness guarantees, the information-theoretically optimal $O(N/M)$ oracle calls suffice.

Our results provide a new way to deal with long input sequences for transformers, as we prove that any computation performed by large transformers can be decomposed into computations that only use smaller transformers. If the oracles are implemented using a quadratic number of floating-point operations, then our algorithm also still requires a quadratic amount of floating-point operations. However, if modern GPUs enable faster transformer inference with respect to the ``wall-clock'' time when the input sequence is short-to-moderate, then our algorithms allow faster wall-clock time inference. For example, if the oracle can compute the output using $O(M)$ wall-clock time compared to the standard $O(M^2)$ time, then the total wall-clock running time of our algorithms will be $O(N^2/M)$.

Our approach is fundamentally different from designing ``subquadratic alternatives'' to transformers \cite{reformer, performer, longformer, polysketchformer, mamba}. In particular, our algorithm preserves the representational strength of transformers (or even improves it), whereas it has been shown that all the subquadratic alternatives to transformers will lose representational strength as they cannot capture all the pairwise relationship even approximately \cite{alman2025fundamental}. 

Now we define our model of computation and state our main contributions in more detail.

\subsection{Computational Model for Simulating Large Transformers}

We now describe our model of computation in more detail. We are careful to allow only very simple operations beyond oracle calls, to ensure that the vast majority of computation can be performed by efficient hardware for evaluating small transformers, and that the number of oracle calls accurately measures the complexity of the problem.

We are given a large transformer $\mathcal{T}$ with input length $N$, $L$ layers, $H$ attention heads in each layer, and embedding dimension $d$ (all the parameters, including the query, key, value matrices in each of its attention head and multilayer perceptron functions). Throughout this paper, we assume that $L,H \ll N, d = O(\log N), \Omega(\log N) \leq M < o(N)$, and one can typically imagine $M \approx \sqrt{N}$. Our goal is to design an algorithm that (approximately) output $\mathcal{T}(X) \in \mathbb{R}^{N \times d}$ for arbitrary input $X$ (length at most $N$).

We have a limited set of operations we can perform as part of the algorithm. We criticaly have access to a small transformer (oracle) $\mathcal{O}$ that can take as input a sequence of length at most $M \ll N$, as well as the parameters for a transformer which has $L'$ layers and $H'$ attention heads in each layer, and outputs the transformer evaluated on that sequence. Our algorithm is allowed to:
\begin{enumerate}
    \item Feed the oracle $\mathcal{O}$ with input sequences and parameters which are currently in memory to obtain its output;
    \item Processing: Edit existing vectors or matrices in memory by padding at most $O(d^2)$ fixed numbers (constants independent of the input) to them, or arranging (concatenating) matrices in memory.
\end{enumerate} We also assume that all the numbers in input matrices, parameters, and algorithms have $O(\log N)$-bit representations. We say that such an algorithm \emph{simulates} $\mathcal{T}$ if it always outputs a $Y \in \mathbb{R}^{N \times d}$ such that 
\[
\|Y[i,:]-\mathcal{T}(X)[i,:]\|_2 \leq \varepsilon\|\mathcal{T}(X)[i,:]\|_2
\] for all $i \in [N]$, and for very small error $\varepsilon = \Theta(\frac{1}{2^N})$. We want to design algorithms that simulate $\mathcal{T}$ with as fewer oracle calls as possible. (Such an $\varepsilon$ is essentially unavoidable in limited precision architectures, but we will see it will be very helpful in some algorithms below. We also emphasize that our main result, Theorem \ref{thm: main result main result}, is an exact computation in the unlimited precision scenario with $\varepsilon = 0$.)

Notice that any such algorithm can be viewed as a composition of oracles and the padding function. Since we only allow for very simple processing, it is straightforward to compute the gradients of the padding functions, so training the large model could be done via computing the gradients of the small transformer oracles.

\subsection{Main Results}

\textbf{Quadratic small transformers are sufficient and necessary for worst-case inputs.} As summarized below, our main result shows that any computation performed on a large transformer can be decomposed into multiple instances of computation performed on smaller transformers with the same computational complexity or floating-point operations. Since the oracle can only tell us the final output instead of intermediate embeddings, it might be somewhat surprising that we are able to utilize all the layers in small transformers.

\begin{theorem}[Theorem \ref{thm: main result}, Theorem \ref{thm: main result with causal}]
\label{thm: main result main result}
For any transformer $\mathcal{T}$ with $L$ layers, $H$ attention heads in each layer, input length $N$, embedding dimension $d$, there exists an algorithm that simulates $\mathcal{T}$ with $O((\frac{N}{M})^2\cdot\frac{HL}{H'L'})$ calls to a transformer oracle with $L'$ layers, $H'$ attention heads in each layer, input length $M$, embedding dimension $O(\frac{dH'L'}{H})$. The result still holds when we add causal masking to both large and small transformers.
\end{theorem}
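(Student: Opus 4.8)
The plan is to reduce the statement to simulating a single attention head and then decompose attention block‑by‑block. The $L$ layers are handled sequentially: having already simulated layers $1,\dots,\ell$, we simulate layer $\ell+1$ on the layer‑$\ell$ output in memory. Within a layer the token‑wise MLP is trivial to simulate --- partition the $N$ tokens into $O(N/M)$ windows and make one oracle call per window with a transformer whose attention sublayer is made inert (zero value matrices, so the residual carries each token through unchanged) and whose MLP is the target one. The $H$ heads of a layer, and the $O(1)$ ``steps'' of the per‑block computation described below, are packed into the $H'$ heads and $L'$ layers of a single oracle call and into its embedding dimension $O(dH'L'/H)$ --- one oracle layer handles a distinct head‑extraction, $H'$ heads per layer times $L'$ layers gives $H'L'$ head‑extractions per call, and the residual stream of width $dH'L'/H$ is exactly what is needed to keep all $L'H'$ partial results ($\approx d/H$ coordinates each) alive at once. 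This produces the $HL/(H'L')$ factor, and it remains to simulate one attention head of input length $N$ with $O((N/M)^2)$ length‑$M$ oracle calls.

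Partition $[N]$ into $B=\Theta(N/M)$ contiguous blocks of size $\Theta(M)$, leaving room for $O(1)$ auxiliary tokens. For each ordered pair $(a,b)$, one oracle call on (query block $a$) concatenated with (key block $b$), with a mask restricting block‑$a$ queries to block‑$b$ keys and a one‑layer attention, computes the normalized block attention $\hat v_{ab,i}=\sum_{j\in B_b}(e^{\langle q_i,k_j\rangle}/D_{ab,i})\,v_j$ for all $i\in B_a$, where $D_{ab,i}:=\sum_{j\in B_b}e^{\langle q_i,k_j\rangle}$. The difficulty is that $\mathcal T(X)[i]$ is a softmax over \emph{all} keys, i.e.\ $\sum_b w_{ab,i}\hat v_{ab,i}$ with $w_{ab,i}\propto D_{ab,i}$; a transformer's attention only outputs weights that are exponentials of inner products of things in its input, and any softmax output is already normalized, so naively one would need the numbers $\log D_{ab,i}$, which are neither recoverable from a softmax output nor producible by padding constants and concatenating. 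The key device is an auxiliary \sink{} token with a large fixed logit $C=\poly(N)$ (large enough that $e^{C}$ dominates every $D_{ab,i}\le Me^{\poly(N)}$; this is where the permitted $\varepsilon=\Theta(1/2^N)$ slack is spent, the construction being exact in the limit $C\to\infty$) and zero value. Including it makes the head output $\dfrac{\sum_{j\in B_b}e^{\langle q_i,k_j\rangle}v_j}{e^{C}+D_{ab,i}}$, and in extra value coordinates $\dfrac{D_{ab,i}}{e^{C}+D_{ab,i}}$; up to the negligible $e^{C}$ in the denominators this recovers $e^{-C}D_{ab,i}$ and $e^{-C}\sum_{j\in B_b}e^{\langle q_i,k_j\rangle}v_j$, i.e.\ the unnormalized partial denominator and numerator at a single \emph{common} scale $e^{-C}$.

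Recombination then uses $O(N/M)$ further calls per query block: a uniform‑attention reduction over the $B$ block‑tokens (arranged as a depth‑$O(\log_M N)$ tree when $B>M$, so still $O((N/M)^2)$ calls overall) yields $e^{-C}\sum_{j\in[N]}e^{\langle q_i,k_j\rangle}v_j$ and $e^{-C}\sum_{j\in[N]}e^{\langle q_i,k_j\rangle}$ at position $i$, and a last oracle call returns their ratio $\attn(X)[i]$ --- using that the common $e^{-C}$ cancels and, again, a \sink-type construction to realize the division through the oracle rather than outside it. Adding causal masking only restricts the relevant block pairs to $b\le a$ and, for the within‑block part, passes a causal mask to the oracle (or uses the dedicated construction of Theorem~\ref{thm: main result with causal}); the count $O((N/M)^2)$ is unaffected.

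The main obstacle, where essentially all the work lies, is exactly this normalization problem: every quantity available from an oracle is a normalized softmax output, while the target needs blocks reweighted by their exponential‑scale masses $D_{ab,i}$, and the only tools outside the oracle are padding constants and concatenation. The \sink-token construction together with the tiny allowed error is what lets one pull out commonly‑scaled unnormalized partial sums and defer all division to a single final softmax; carrying this out with $\varepsilon=0$ exactly, rather than up to $\Theta(1/2^N)$, is the most delicate point and is where I would expect to spend the most effort.
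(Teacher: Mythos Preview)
Your high-level plan---layer-by-layer simulation, packing $H'L'$ single-head instances into one oracle call (the paper's Lemma~\ref{lem: multiple single to one}), and decomposing a single length-$N$ head into $(N/M)^2$ block pairs---matches the paper's structure. The substantive differences are in how you extract unnormalized block sums and in how you realize the block-pair attention.

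On the first point, you use a sink with \emph{large} logit $C$ so the head outputs are approximately $e^{-C}D_{ab,i}$ and $e^{-C}\sum_{j\in B_b} e^{\langle q_i,k_j\rangle}v_j$, and you correctly flag that pushing this to $\varepsilon=0$ is the sticking point. The paper sidesteps the issue: it appends one synthetic token whose contribution to the denominator is a known \emph{moderate} constant $a$ (concretely $a=\exp(0)$, not something huge) and whose value is zero, so the oracle returns exactly $x=\frac{A_{i,t}}{A_{i,t}+a}$, and the oracle's MLP then inverts this via $A_{i,t}=\frac{a\,x}{1-x}$. No large $C$, no exponentially small intermediates, no approximation; a second call returns $B_{i,t}/A_{i,t}$, and multiplying by the recovered $A_{i,t}$ yields $B_{i,t}$ exactly. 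This is precisely the missing trick you anticipated in your last paragraph.

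On the second point, the phrase ``with a mask restricting block-$a$ queries to block-$b$ keys'' is a gap: the oracle offers only no mask or a causal mask, not an arbitrary bipartite one. The paper instead concatenates the two blocks \emph{column-wise}, feeding an $M\times 2d$ input $[X[S_t,:],X[S_{t'},:]]$; then $W^{Q}$ projects queries from the $S_{t'}$ columns while $W^{K},W^{V}$ project keys and values from the $S_t$ columns, so the $M$ queries automatically attend only to the $M$ intended keys with no masking needed. Finally, the causal case needs more than ``pass a causal mask'': since the oracle is itself causally masked, the paper uses a row-reversal construction (Claim~\ref{claim: app} and the proof of Theorem~\ref{app: main result with causal}) to recover the off-block partial sums; your one-line treatment elides this.
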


Notice that these simulations are \emph{tight}, and roughly $(N/M)^2$ oracle calls are necessary in the worst-case due to computational complexity constraints. To see this, note that when $L = H = L' = H' = 1$, a straightforward algorithm can compute the responses of $T$ oracle calls in time only $\tilde{O}(TM^2)$. Thus, since it is known that even approximation of a large attention requires time $\Omega(N^{2-o(1)})$ (under standard complexity-theoretic assumptions) \cite{AS24}, we must have $T \geq ((N/M)^2)^{1 - o(1)}$.

One might be concerned with the fact that $O((N/M)^2)$ small transformers have many more parameters than one large transformer, since each transformer has $\Theta(d^2)$ parameters, independent of the sequence length. However, this is not a problem because in our construction, we reuse the parameters such that the total number of parameters does not depend on $N$. In fact, all the query, key and value matrices that we feed into the oracle share most entries with the query, key, value matrices in the large transformer that are given. We ultimately only have a small, constant factor blowup on the number of parameters.

\textbf{Linear small transformers are weaker but sufficient with average-case inputs.} Even though we cannot use $O(N/M)$ oracle calls to simulate a large transformer in the worst case, we show that it is possible when we have reasonable additional assumptions on the queries, keys and values.  

\begin{theorem}[Informal version of Theorem \ref{thm: average case}]
\label{thm: main result average case}
Let $\mathcal{T}$ be a transformer with $L$ layers, $H$ attention heads in each layer, input length $N$ and embedding dimension $d$. Suppose that the queries, keys and values in the attention heads are all somewhat bounded in how much they may differ from each other. Then, there exists an algorithm using $O(\frac{N}{M}\cdot\frac{HL}{H'L'})$ oracle calls to simulate $\mathcal{T}$.
\end{theorem}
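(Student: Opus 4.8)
The plan is to peel the transformer apart layer by layer and reduce everything to simulating a single attention head under the boundedness hypothesis, where the gain over Theorem~\ref{thm: main result main result} comes from replacing its $\Theta((N/M)^2)$ calls per head by $O(N/M)$. The position‑wise sublayers (the MLPs, residual connections, and any normalization) are handled exactly as in the proof of Theorem~\ref{thm: main result main result}: they act identically and independently on each token, so we split the $N$ tokens into $N/M$ blocks of size $M$ and process each block with one oracle call, for $O(NL/M)$ calls over all $L$ layers, comfortably within budget. So everything reduces to the attention sublayers, and by the same packing argument as in Theorem~\ref{thm: main result main result} it suffices to analyze one head of one layer, the map $X\mapsto \softmax(QK^\top)V$ with $Q,K,V\in\mathbb R^{N\times d}$ the query/key/value matrices of that head.

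The structural consequence of the hypothesis we exploit is that the rows of $A:=\softmax(QK^\top)$ are all close, in $\ell_1$, to one common probability vector. Writing $w^{(i)}_j=\exp(\langle q_i,k_j\rangle)$, the bound on how much the queries differ gives $|\langle q_i-q_{i'},k_j\rangle|=O(\eta)$ uniformly, hence $w^{(i)}_j/w^{(i')}_j=e^{O(\eta)}$ and $\|A[i,:]-A[i',:]\|_1=O(\eta)$; combined with the bound on how much the values differ, the outputs $o_i=\sum_j A[i,j]\,v_j$ all lie within relative error $O(\eta)$ of a single vector $\bar o$, the output associated with any fixed representative query, say $q_1$. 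It therefore suffices to compute $\bar o=\big(\sum_j w_j v_j\big)\big/\big(\sum_j w_j\big)$ with $w_j=\exp(\langle q_1,k_j\rangle)$ once, and fold its broadcast into the subsequent position‑wise oracle calls. To compute $\bar o$: partition the keys/values into $N/M$ blocks and, for block $b$, feed the oracle $[Q_1;K_b;V_b]$ together with a constant, reused parameter setting realizing a one‑layer ``partial attention'' head, reading off from position $1$ of its output the block aggregates $S_b=\sum_{j\in b}w_j v_j$ and $Z_b=\sum_{j\in b}w_j$; this is $N/M$ calls. We then aggregate the $N/M$ pairs into $\big(\sum_b S_b,\sum_b Z_b\big)$ via a few rounds of a ``summation gadget'' (a constant‑size transformer whose uniform‑weight attention computes block averages, rescaled by the constant block size through padding, grouping into blocks of $M$), costing $O(N/M^2)\le O(N/M)$ further calls, and finally divide with a constant‑size ``reciprocal gadget'' transformer. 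Over the $HL$ heads and $L$ position‑wise sublayers, and packing $H'L'$ head‑layers' worth of independent work per oracle call, this gives $O\!\big(\frac{N}{M}\cdot\frac{HL}{H'L'}\big)$ calls.

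I expect the main obstacle to be controlling how the per‑sublayer error $O(\eta)$ (together with the $2^{-\Theta(\cdot)}$ round‑off from each bounded‑precision oracle call) propagates across the $L$ layers: collapsing each attention output to its representative $\bar o$ perturbs the input to the next layer, and one must argue that the boundedness hypothesis — which is a statement about the queries/keys/values along the \emph{true} trajectory — still holds, with comparable constants, along the perturbed trajectory. Since a transformer layer is only Lipschitz, a naive bound degrades by a constant factor per layer, so the formal statement of Theorem~\ref{thm: average case} has to quantify the hypothesis strongly enough (roughly, $\eta$ small relative to $\varepsilon$ times an $L$‑fold Lipschitz factor) to absorb the accumulation; making that quantitative condition both natural and sufficient is the technical heart. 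The secondary, more mechanical obstacle is checking that the partial‑attention head, the summation and reciprocal gadgets, and the broadcast can all be built using only the allowed operations (padding by $O(d^2)$ fixed constants and concatenation) and fit inside an oracle with $L'$ layers, $H'$ heads, and embedding dimension $O(dH'L'/H)$; this mirrors the gadget constructions already used for Theorem~\ref{thm: main result main result} and should go through with the same ideas. Finally one verifies the argument is unaffected by causal masking, since restricting each query to a prefix of keys only shrinks the index sets defining $S_b,Z_b$ and leaves the ``rows of $A$ are close'' estimate intact.
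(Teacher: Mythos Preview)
Your core structural step---collapsing every attention output $o_i$ to a single representative $\bar o$ and broadcasting it---is where the argument breaks. You read the informal hypothesis as ``the queries are all close to one another,'' so that $|\langle q_i-q_{i'},k_j\rangle|=O(\eta)$ and hence the softmax rows nearly coincide. But the paper's formalization (Theorem~\ref{thm: average case}) is different: it assumes $1/C \le a_{i,j}=\exp(\langle q_i,k_j\rangle)\le C$ for a \emph{fixed} constant $C$, together with a lower bound on $\|\sum_j b_{i,j}\|_2$. Under that hypothesis the softmax rows $A[i,:]$ and $A[i',:]$ can differ by $\Theta(1)$ in $\ell_1$, and the outputs $o_i,o_{i'}$ can differ by a constant, so replacing each $o_i$ by a common $\bar o$ incurs $O(1)$ error rather than $O(\varepsilon)$. (Concretely: take $a_{1,j}=C$ on the first half of the keys and $1/C$ on the second half, and reverse this for $a_{2,j}$; with $v_j=\pm 1$ accordingly, $o_1$ and $o_2$ have opposite signs.) Even under your stronger reading, the conclusion would be that the attention sublayer is essentially a constant map on its input---not the regime the theorem is after.

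The idea the paper actually uses is \emph{randomized sampling of keys, separately for each query}. Draw a uniformly random permutation $\tau$ of $[N]$; for $i\in S_t$, estimate $\sum_{j=1}^N a_{i,j}$ by $\tfrac{N}{M}\sum_{j\in S_t} a_{i,\tau(j)}$ and likewise $\sum_j b_{i,j}$ by $\tfrac{N}{M}\sum_{j\in S_t} b_{i,\tau(j)}$. These are unbiased, and the boundedness $a_{i,j}\in[1/C,C]$ (resp.\ the condition $DN\max_j\|b_{i,j}\|_2\le\|\sum_j b_{i,j}\|_2$) is exactly what lets Hoeffding give a $(1\pm\varepsilon)$ multiplicative guarantee once $M\gtrsim \varepsilon^{-2}\log N$. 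The crucial point is that after permuting the keys, the pair (queries in $S_t$, keys/values in $\tau(S_t)$) fits into a single oracle call via the same augmented-input trick as Lemma~\ref{lem: attention is self-reducible}, so one call per block yields estimates for \emph{all $M$ queries in that block simultaneously}---hence $O(N/M)$ calls per head, and then the $H'L'$-packing of Lemma~\ref{lem: multiple single to one} gives the stated bound. Your partial-attention and summation gadgets are reasonable machinery, but they compute one aggregate over all keys for a single fixed query; what is needed is a per-query estimate for all $N$ queries, and the random permutation is precisely what squeezes $N$ such estimates out of $N/M$ calls.
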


Models such as Hierarchical Transformers \cite{hierarchical, YM19, chalkidis2022explorationhierarchicalattentiontransformers} split the input sequence into chunks of size $M$ and send each chunk into a transformer before aggregating the outputs. We give the first provable guarantees for this approach, showing that $O(N/M)$ small transformers have approximately equal expressivity as a large transformer when the input data satisfies our assumptions. This provides a possible explanation of the success of Hierarchical Transformers and relevant ideas from an expressivity viewpoint.

On the other hand, we supplement Theorem \ref{thm: main result average case} with its converse, which shows that a linear number of small transformers are at most as expressive as one large transformer for worst-case inputs (we only prove the statement for single-head transformers to illustrate the message). As a result, when the inputs follow assumptions in Theorem \ref{thm: main result average case}, a linear number of small transformers are \emph{equivalent} to a larger one in expressive power.

\begin{theorem}[Theorem \ref{thm: linear calls are weaker}]
\label{thm:introbigtosmall}
Given $N/M$ instances of single layer, single head transformers with input length $M$ and embedding dimension $d$, there exists an algorithm that simulates them with one call of a single layer, single head transformer with input length $O(N)$ and embedding dimension $O(d)$, along with $O(N/M)$ many matrix multiplications of size $M \times d \times d$.
\end{theorem}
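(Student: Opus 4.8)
The plan is to concatenate the $N/M$ input sequences $X_1,\dots,X_{N/M}\in\mathbb{R}^{M\times d}$ into one sequence of length $N$ and to build a single single-head attention layer whose attention is ``block diagonal'', so that on the $i$-th block of $M$ tokens it computes exactly the output of the $i$-th small transformer on $X_i$. The only reason one layer does not suffice directly is that the small transformers may carry different query/key/value matrices and different MLPs; this is precisely what the $\bigo(N/M)$ permitted matrix multiplications of size $M\times d\times d$ will absorb. First, for each block $i$ compute $Q^{(i)}=X_iW_Q^{(i)}$, $K^{(i)}=X_iW_K^{(i)}$, $V^{(i)}=X_iW_V^{(i)}$ (three $M\times d\times d$ products per block, so $3N/M=\bigo(N/M)$ in all), and form the augmented token
\[
\tilde X[(i,j),:]=\bigl(\,X_i[j,:],\ Q^{(i)}[j,:],\ K^{(i)}[j,:],\ V^{(i)}[j,:],\ \sqrt{C}\,\phi(i),\ 1\,\bigr),
\]
where $C$ is a large constant and $\phi(i)=(b_1,\dots,b_m,1-b_1,\dots,1-b_m)$ with $b_1\cdots b_m$ the binary expansion of $i$ and $m=\lceil\log_2(N/M)\rceil$, so that $\uinner{\phi(i)}{\phi(i')}$ equals $m$ if $i=i'$ and is at most $m-1$ otherwise. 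The embedding dimension is $4d+2m+1=\bigo(d)$ in the regime of interest (where $d=\Omega(\log(N/M))$); if one instead permits an attention mask, $\phi$ may be dropped and cross-block pairs masked directly.

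Next, choose the big transformer's query and key projections (padded identity projections, composed with a fixed rescaling folded into $Q^{(i)}$ to reconcile any $1/\sqrt{d}$ versus $1/\sqrt{d'}$ normalization) so that the pre-softmax score between tokens $(i,j)$ and $(i',j')$ equals
\[
\uinner{Q^{(i)}[j,:]}{K^{(i')}[j',:]}+C\bigl(\uinner{\phi(i)}{\phi(i')}-m\bigr),
\]
which is a bilinear form in the augmented tokens (the $-Cm$ term uses the constant coordinate). This is the true pre-softmax score of $\attn_i$ when $i=i'$ and is at most $-C$ when $i\neq i'$. Since all legitimate scores are $\poly(N)$ in magnitude, picking $C=\poly(N)$ — representable in $\bigo(\log N)$ bits — drives every cross-block softmax weight below $\varepsilon=2^{-\Theta(N)}$, so the attention output restricted to block $i$ agrees with $\attn_i(X_i)$ within the allowed error, and the value projection simply reads off the $V^{(i)}$ coordinates. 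Residual connections are available because $X_i[j,:]$ was kept as a coordinate block; we reserve fresh coordinates for the attention output so the MLP's first linear map can zero out the stray residual contributions of the $Q,K,V,\phi$ blocks. Finally, the big transformer's single MLP simulates all of $\mathrm{MLP}_1,\dots,\mathrm{MLP}_{N/M}$ at once: a first ReLU layer converts $\phi(i)$ into a one-hot block selector (thresholding $\uinner{\phi(i)}{\phi(i')}$ at $m$), after which a fixed network applies the appropriate $\mathrm{MLP}_i$ to $X_i[j,:]$ together with its attention output; this enlarges only the hidden width, not the embedding dimension. Splitting the length-$N$ output into $N/M$ blocks recovers $\mathcal{T}_1(X_1),\dots,\mathcal{T}_{N/M}(X_{N/M})$.

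The main obstacle is making the block-diagonalization simultaneously exact (or $\varepsilon$-accurate) and cheap: the cross-block penalty must be realized as a genuine bilinear attention score assembled from $\bigo(\log N)$-bit entries using only $\bigo(d)$ extra embedding coordinates, and one then has to verify that a $\poly(N)$-bounded constant $C$ already suppresses the softmax leakage across blocks below the target precision $\varepsilon$. Once this calibration of $C$ and the $\bigo(\log(N/M))$-dimensional block encoding are pinned down, the remaining steps — coordinate bookkeeping for the residual stream and the selection gadget inside the MLP — are routine.
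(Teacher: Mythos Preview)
Your approach is correct and matches the paper's: precompute $Q^{(i)},K^{(i)},V^{(i)}$ with the allotted $O(N/M)$ matrix multiplications, then append an $O(\log(N/M))$-dimensional block tag to each token so that cross-block pre-softmax scores are depressed by a $\poly(N)$ constant and vanish after softmax up to the permitted $2^{-\Theta(N)}$ error. The paper's block encoding differs only cosmetically (it uses vectors $u_i\in\{0,-B\}^r$ with exactly $r/2$ nonzeros paired with their complements $v_i\in\{0,B\}^r$, so that $\langle u_i,v_i\rangle=0$ without needing your $-Cm$ offset), and it dispatches the per-instance MLPs by assuming them identity WLOG rather than via your selection gadget.
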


We briefly comment on the small matrix multiplications in Theorem~\ref{thm:introbigtosmall}. Note that they could be computed in the straightforward way in nearly linear time $O(Md^2)$ (since $d = O(\log N)$) and thus do not substantially contribute to the total running time. This implies, in particular, that they could not simulate the transformer oracles on their own, and are only ``assisting'' the large transformer oracle. Their presence seems unavoidable because of the $\Theta(N/M)$ weight matrices of the oracles which must be simulated by a single large transformer, which only has a constant number of weight matrices. Moreover, we emphasize that our other constructions are even simpler, and do not need such small matrix computations outside of the oracle calls.

\textbf{Efficient simulation of transformers with sliding window and StreamingLLMs.} Sliding window and StreamingLLM \cite{streamingllm} are popular ways to make transformer inference more memory efficient. Both sliding window and StreamingLLM are based on the observation that 
certain attention scores are often higher than others. Sliding window is based on the intrinsic structure of languages, where each token is typically more correlated to the previous few tokens. Therefore, for each query we only take into account the contributions of the keys that are positionally close to it. The StreamingLLM framework is motivated by the observation that autoregressive LLMs have a surprisingly large amount of attention score concentrated to the initial tokens, and thus each query only takes into account keys that are positionally close to it, as well as the first few (usually $3\sim 5$) keys, which are called ``attention sinks''.

We show that in both cases we can use a linear number of small transformer oracle calls to simulate them, even in the worst case. As summarized below, our result indicates that oracles can capture efficient attention based on sliding windows and attention sinks efficiently.

\begin{theorem}[Theorem \ref{thm: streamingllm}]
For any transformer $\mathcal{T}$ with $L$ layers, $H$ attention heads in each layer, input length $N$, embedding dimension $d$, constant-size sliding window, there exists an algorithm that simulates $\mathcal{T}$ with $O(\frac{N}{M}\cdot\frac{HL}{H'L'})$ calls to a transformer oracle with $L'$ layers, $H'$ attention heads in each layer, input length $M$ and embedding dimension $O(\frac{dH'L'}{H})$ with causal masking. This result still holds if we have constant-size attention sinks.
\end{theorem}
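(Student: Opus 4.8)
The plan is to exploit that sliding-window attention — and sliding-window-plus-sinks attention — is \emph{local}: with a window of constant size $w$, after $L$ layers the output at position $i$ depends only on the input tokens at positions $\{i-L(w-1),\dots,i\}$ together with the $s=O(1)$ sink tokens at the front. So I would first partition $[N]$ into $O(N/M)$ consecutive blocks $B_1,\dots,B_{N/b}$ of size $b=\Theta(M)$, chosen so that $b+L(w-1)+s\le M$ (possible since $w,s$ and $L$ are small relative to $M$). For each block $B_j$, the value $\mathcal{T}(X)[B_j,:]$ is determined by the restriction of $X$ to the $\le M$ positions formed by $B_j$, its left halo of $L(w-1)$ tokens, and the $s$ sink positions, and in fact running $\mathcal{T}$'s own parameters on exactly this sub-sequence already reproduces the correct values on $B_j$ — the standard receptive-field argument, since the halo positions whose values ``degrade'' at layer $\ell$ near the left edge are never consulted when forming the values we keep. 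Hence it suffices to simulate an $L$-layer, $H$-head transformer with a constant window plus $O(1)$ sinks, run on $\le M$ tokens, by $O(HL/(H'L'))$ calls to the causal oracle; summing over the $O(N/M)$ blocks gives the theorem.

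The heart is converting the oracle's mandatory causal mask into the desired window-plus-sink mask. I would do this by modifying, in every attention sublayer, the query and key matrices so that the score between (local) positions $p$ and $q$ is shifted by the additive bias
\[
\beta(p,q) \;=\; -C\,\Big(\prod_{t=0}^{w-1}(p-q-t)\Big)\Big(\prod_{r=1}^{s}(q-r)\Big),
\]
for a suitably large $C=\poly(N)$. Then $\beta(p,q)=0$ exactly when $q\in\{p-w+1,\dots,p\}$ (a vanishing factor in the first product) or $q\in\{1,\dots,s\}$ (a vanishing factor in the second), i.e.\ precisely on the positions a windowed-plus-sink query should attend to; and for every other $q\le p$ (the only $q$ the causal mask keeps) both products are $\ge1$, so $\beta(p,q)\le -C$ and the corresponding softmax weight falls below the error budget $\varepsilon=\Theta(2^{-N})$. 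Since $\beta$ is a polynomial of total degree $w+s=O(1)$ in $(p,q)$, it equals $\langle u_p,v_q\rangle$ with $u_p$ built from $1,p,\dots,p^{w}$ and $v_q$ from $1,q,\dots,q^{w+s}$; all coefficients are $O(\log N)$-bit, and this adds only $O(1)$ embedding coordinates. Thus the oracle, with causal masking and these augmented parameters, evaluates a genuine $L$-layer, $H$-head, causal transformer $\mathcal{T}'$ of embedding dimension $d+O(1)$ whose restriction to $B_j$ equals $\mathcal{T}(X)[B_j,:]$.

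Finally, I would simulate $\mathcal{T}'$ on $\le M$ tokens with the causal oracle. Since the input length is already $O(M)$, there is no cross-block pairing to carry out, and one invokes the head- and layer-packing mechanism from the proof of Theorem~\ref{thm: main result main result} (which already permits causal masking): this costs $O(HL/(H'L'))$ oracle calls and embedding dimension $O(dH'L'/H)$, both unaffected by the $O(1)$ increase from the bias gadget. Multiplying by $O(N/M)$ blocks yields the claimed $O(\frac{N}{M}\cdot\frac{HL}{H'L'})$ calls, and the pure sliding-window statement is the case $s=0$.

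I expect the bias gadget, not the blocking, to be the main obstacle: one must check at once that $\beta$ vanishes \emph{identically} on the attended set (so the softmax there is unperturbed and the simulation is exact, not merely $\varepsilon$-approximate), that it is uniformly $\le-\poly(N)$ off it while every number stays $O(\log N)$-bit, that it adds only $O(1)$ coordinates, and that it composes correctly through all $L$ layers after positions are re-indexed locally inside each oracle call — which works because the window depends only on position \emph{differences} (preserved by local re-indexing) and the sinks occupy local positions $1,\dots,s$. A secondary point is making the $O(1)$ powers $p,\dots,p^{w}$ of the position index available to the first attention layer: immediate if the positional encoding already exposes them, and otherwise handled by prepending one trivial layer (zero value matrix, near-identity MLP) that writes them into fresh residual coordinates, changing the call count by at most a constant factor.
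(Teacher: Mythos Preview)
Your approach is genuinely different from the paper's and largely sound. The paper never invokes the receptive-field/halo argument or a bias gadget; instead it works strictly layer by layer (reducing via Corollary~\ref{cor: multilayer, multihead to single} and Lemma~\ref{lem: multiple single to one} to the case $H=L=H'=L'=1$) and computes the windowed numerator and denominator as \emph{differences of two causal prefix sums}: $\sum_{j=i-w+1}^{i}\exp(\langle q_i,k_j\rangle)=\sum_{j\le i}(\cdot)-\sum_{j\le i-w}(\cdot)$, each of which a single causal oracle call produces via the synthetic-token trick of Claim~\ref{claim: app}; sinks are handled by one more subtraction, $\sum_{j\le s}(\cdot)=\sum_{j\le i}(\cdot)-\sum_{s<j\le i}(\cdot)$. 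Your polynomial bias $\beta(p,q)=-C\prod_{t=0}^{w-1}(p-q-t)\prod_{r=1}^{s}(q-r)$ is a clean alternative conversion of causal to window-plus-sink attention and behaves exactly as you claim (zero on the attended set, $\le -C$ on the causal-but-forbidden set, realizable as an inner product in $O(1)$ extra coordinates with $O(\log N)$-bit entries). What your route buys is that all $L$ layers are run on a block at once, exploiting locality structurally; what the paper's route buys is that it avoids the bias gadget entirely and, crucially, places no constraint relating $L$ to $M$.

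That constraint is the one real gap in your plan. You assert $b+L(w-1)+s\le M$ is ``possible since $w,s$ and $L$ are small relative to $M$,'' but the paper only assumes $L\ll N$ and $M=\Omega(\log N)$, which does not force $L=O(M)$; if $L(w-1)\ge M$ the halo alone overflows the oracle and you no longer get $O(N/M)$ blocks. The fix is easy but should be stated: when $L(w-1)$ is not $o(M)$, process $\ell_0=\Theta(M/w)$ layers at a time, giving $O(L/\ell_0)$ stages each costing $O(N/M)\cdot O(H\ell_0/(H'L'))$ calls, for the same total $O(\frac{N}{M}\cdot\frac{HL}{H'L'})$. One smaller point to make explicit: the position powers needed by $\beta$ must be available at \emph{every} layer of your $\mathcal{T}'$, not just the first, so your value projections and layer MLPs must pass those $O(1)$ coordinates through unchanged; and since $\beta$ is only driven to $-C$ rather than $-\infty$ on forbidden pairs, you incur a per-layer $\exp(-\Theta(C))$ error that compounds over $L$ layers---still controllable within the $\varepsilon=\Theta(2^{-N})$ budget by taking $C=\poly(N)$, but worth saying.
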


\subsection{Related Work}

\paragraph{Representational strength and limitations of transformers.} The representational strength of transformers has been intensively studied in recent years from a variety of perspectives. To list a few, \cite{merrill-etal-2022-saturated, merrill-sabharwal-2023-parallelism, strobl-etal-2024-formal} study the class of problems that transformers can solve from a circuit complexity viewpoint; \cite{bhattamishra-etal-2020-ability, liu2023transformers, hahn-2020-theoretical} aim to understand whether transformers can recognize formal languages; \cite{claytonrepresentation} focus on reasoning tasks and show that transformers are inherently capable of solving sparse averaging; \cite{SHT24} gives important connections between transformers and the massively parallel computation model; \cite{hu2025computational, hu2024on} uses computational complexity to characterize the computational limits of diffusion transformers and low-rank adaptation for transformers; \cite{Likhosherstov_Choromanski_Weller_2023} studies attention's capability of approximating sparse matrices; \cite{Yun2020Are,  yun2} shows that transformers and many subquadratic variants are universal approximators for sequence-to-sequence functions; 

\paragraph{Fast attention mechanisms.} There has been a fruitful literature of dealing with long input sequence by designing ``subquadratic alternatives" to transformers, which are variants on the attention mechanism which can be performed in subquadratic time. For example, researchers have studied various sparse attention mechanisms that only consider the query-key pairs that have high correlation, including Reformer \cite{reformer}, Longformer \cite{longformer}, and Hyperattention \cite{han2024hyperattention}. Additionally, there has been work on kernel/low-rank attention that approximates attention mechanism using kernels such as Performer \cite{performer} and Polysketchformer \cite{polysketchformer}, and there has been a growing interest in state space models such as Mamba \cite{mamba}. See \cite{survey} for a comprehensive survey on efficient attention mechanisms. However, \cite{alman2025fundamental} proves that none of these subquadratic models can capture all pairwise correlations even approximately as the sequence length grows.

\section{Preliminaries}

\subsection{Transformers}

We first define the standard attention mechanism in Transformer. 
\begin{definition}[Attention Mechanism] \label{def:attention}
Given input $X \in \mathbb{R}^{N \times d}$, query, key, value matrices $W^Q,W^K,W^V\in \mathbb{R}^{d \times m}$, the \emph{attention mechanism} computes
$$
\attn(X) = \softmax ((XW^Q)(XW^K)^{\top})(XW^V) \in \mathbb{R}^{N \times m}.
$$
\end{definition}

Here we say $N$ is the \emph{context length}, and $d$ is the embedding dimension. We will also call each attention mechanism an \emph{attention head} in the transformer architecture. For notational convenience, we let 
\[
\{q_1,\ldots,q_N\} \in \mathbb{R}^m, \{k_1,\ldots,k_N\} \in \mathbb{R}^m, \{v_1,\ldots,v_N\} \in \mathbb{R}^m
\] be the rows of $XW^Q, XW^K, XW^V$ respectively. We will call them the queries, keys and values. As a result, the attention mechanism is computing 
\[
\frac{1}{\sum_{j=1}^{N}\exp(\langle q_i,k_j\rangle)} \sum_{j=1}^N \exp(\langle q_i,k_j\rangle) \cdot v_j
\]
for each query $q_i$. 

Another important component in transformers is the \emph{multilayer perceptron} (MLP). An MLP is a feed-forward, fully-connected neural network consisting of one or more hidden layers using ReLU activation. The universal approximation theorem states that any continuous function with a finite support can be approximated by a neural network with one hidden layer \cite{HSW89}. In light of this, in many relevant works \cite{claytonrepresentation,SHT24}, MLPs are modeled as arbitrary functions on compact domains.

In this paper, our goal is to use small transformers to simulate large transformers, and we would like to ensure that the MLPs in the small transformers are as simple as possible. We will therefore assume that MLPs in small transformers compute functions $\phi:\mathbb{R}^{d} \rightarrow \mathbb{R}^{O(d)}$ such that:
\begin{enumerate}
    \item They are at least as strong as the MLPs in the large transformers, i.e. they can do whatever computation that MLPs in the large transformers can do, and
    \item They can do basic arithmetic operations on the input vector $x \in \mathbb{R}^{d}$ or pad fixed numbers to it (both are simple continuous functions) as long as they take $O(d^2)$ time.
\end{enumerate} When a MLP $\phi$ is applied on a matrix, it will be applied row-wise to output another matrix. In other words, it is applied on each token given a sequence of tokens. 

An \emph{attention layer} $f$ with $H$ attention heads consists of $H$ attention mechanisms with query embedding $W^Q_h, W^K_h, W^V_h \in \mathbb{R}^{m \times m}$ for the $h$-th attention such that $m = \frac{d}{H}$. The input $X$ is partitioned into $H$ matrices $X[:,D_1], \ldots, X[:, D_H] \in \mathbb{R}^{N \times m}$ column-wise, where $D_i = \{\frac{(i-1)d}{H}+1,\ldots,\frac{id}{H}\}$ for all $i$, such that the $h$-th attention head computes
\[
\softmax((X[:,D_h]W^Q_h)(X[:,D_h]W^V_h)^{\top})(X[:,D_h]W^V_h) \in \mathbb{R}^{N \times m}.
\] All attention outputs are concatenated column-wise and fed through a \emph{layer MLP} $\psi$ such that the output of attention layer $f$ is 
\[
f(X) :=  \psi\Big(\big[\softmax((X[:,D_h]W^Q_h)(X[:,D_h]W^V_h)^{\top})(X[:,D_h]W^V_h)\big]_{h=1}^{H}\Big)\in \mathbb{R}^{N \times d}.
\]

\begin{definition}[Transformer]
A \emph{transformer} $\mathcal{T}$ with $L$ layers and $H$ attention heads in each layer consists of an input MLP $\phi: \mathbb{R}^{d'} \rightarrow \mathbb{R}^d$ applied token-wise on the input $X \in \mathbb{R}^{N \times d'}$, $L$ attention layers $f_1,\ldots,f_L: \mathbb{R}^{N \times d} \rightarrow \mathbb{R}^{N \times d}$ which contain $L$ layer MLPs $\psi_1,\ldots,\psi_L$ applied token-wise at the end of each attention layer. For each $2 \leq \ell \leq L$, 
\[
X^{(1)} = \psi_1(f_1(\phi(X))), X^{(\ell)} = \psi_{\ell}(f_{\ell}(X^{(\ell-1)})).
\] Finally, the transformer $\mathcal{T}$ outputs $\mathcal{T}(X) = X^{(L)}$.
\end{definition} We will simplify the notion of positional encoding into input MLP $\phi$ and assume that the input MLP has positional information of the tokens. In other words, if $x_i = X[i,:]$ is the $i$-th input token, then $\phi(x_i)$ is also a function of $i$. 

Transformer is powerful as computational model, and we refer the readers to Appendix \ref{app: transformers arithmetic} for more operations that transformers can do that will be useful in our proofs.

\textbf{Transformer Oracle.} A \emph{transformer oracle} $\mathcal{O}$ is a small transformer that can only take inputs of length at most $M$ (its embedding dimension, number of layers, number of heads in each layer, causal masking etc will be specified in result statements). In this paper we are mostly concerned with simulating transformers with large input length $N$ using transformer oracles with input length $M$ such that $M \ll N$ (recall that we assume $\Omega(\log N) \leq M < o(N)$). 

\subsection{Causal masking, sliding window and StreamingLLM}

We first define the most commonly used causal masking in attention heads.
\begin{definition}[Causal Masking Attention] 
Given input $X \in \mathbb{R}^{N \times d}$, query, key, value matrices $W^Q,W^K,W^V\in \mathbb{R}^{d \times m}$, the \emph{attention mechanism with causal masking} computes
\[
\attn(X) = \softmax\Big(\mask((XW^Q)(XW^K)^{\top})\Big)(XW^V) \in \mathbb{R}^{N \times m},
\] where the $\mask$ function sets all upper triangular entries (not including diagonal entries) to $-\infty$.
\end{definition}

Another commonly used masking scheme for efficient transformer inference/training is sliding window, where we only keep the keys whose indices are close to the query index.

\begin{definition}[Sliding Window Attention] 
Given input $X \in \mathbb{R}^{N \times d}$, query, key, value matrices $W^Q,W^K,W^V\in\mathbb{R}^{d \times m}$ and window size $r \geq 1$, the \emph{attention mechanism with sliding window of size $r$} computes
\[
\attn(X) = \softmax\Big(\window((XW^Q)(XW^K)^{\top})\Big)(XW^V) \in \mathbb{R}^{N \times m},
\] where the $\window$ function sets all entries in $\{(i,j): j > i \textup{ or } j \leq i-r\}$ to $-\infty$.
\end{definition}

In other words, for each query $q_i$ we only look at $k_j$ such that $i-r+1 \leq j \leq i$.

Finally, StreamingLLM \cite{streamingllm} is a framework designed for efficient training with a finite length window. Upon having a fixed-size sliding window, each query also attends to the first $s$ keys (called ``attention sinks''), where $s$ is usually a small positive constant (around $3\sim 5$).

\begin{definition}[Attention Sink] 
Given input $X \in \mathbb{R}^{N \times d}$, query, key, value matrices $W^Q,W^K,W^V\in  \mathbb{R}^{d \times m}$ and window size $r \geq 1$, sink size $s \geq 1$, the \emph{attention mechanism with attention sink} computes
\[
\attn(X) = \softmax\Big(\sink((XW^Q)(XW^K)^{\top})\Big)(XW^V) \in \mathbb{R}^{N \times m},
\] where the $\sink$ function sets all entries in $\{(i,j): j > i \textup{ or } s < j \leq i-r\}$ to $-\infty$.
\end{definition}

Transformers with causal masking attention, sliding window, and StreamingLLMs are defined exactly the same as transformers except that we replace standard attention mechanisms by attention with causal masking, attention with sliding window and attention with sinks.

\subsection{Notation}
Throughout the paper, we denote $X \in \mathbb{R}^{N \times d}$ as the input to the large transformer, where $N$ is the input length and $d$ is the embedding dimension. For a $N \times d$ matrix $X$, we use $X[i,:]$ to denote its $i$-th row, $X[:,j]$ to denote its $j$-th column, and $X[i,j]$ to denote its $(i,j)$-th entry. Given sets $S \subseteq [N], D \subseteq [d]$, we use $X[S,:]$ to denote the submatrix consisting of the rows in $S$,  $X[:,D]$ to denote the submatrix consisting of the columns in $D$, and $X[S,D]$ to denote the submatrix consisting of the entries in $S \times D$ of $X$.

We use $W^Q,W^K,W^V$ to denote the query, key and value matrices for attention heads, and we use $q_i,k_i,v_i$ to denote the $i$-th row of $XW^Q, XW^K, XW^V$ respectively. We also let $S_t = \{(t-1)M+1,tM\}$ for all $1 \leq t \leq N/M$.

We use $\mathbf{1}_{a \times b}$ to denote the $a \times b$ matrix whose entries are all $1$, and $\mathbf{0}_{a \times b}$ to denote the $a \times b$ matrix whose entries are all $0$.

We now turn to proving our results. We give proof sketches and main ideas here; full proofs are deferred to the appendix.

\section{Quadratic calls suffice for simulation}
\label{sec: quadratic calls suffice}

In this section, we prove that 
$O((\frac{N}{M})^2\cdot\frac{HL}{H'L'})$ small transformers with $L'$ layers and $H'$ attention heads in each layer suffice to simulate a large transformer with $L$ layers and $H$ attention heads in each layer (Theorem \ref{thm: main result}). Our proof roadmap is illustrated in Figure \ref{fig:enter-label}, where the arrows $A \rightarrow B$ indicate that $A$ can be simulated by $B$. Complete proofs of all the statements can be found in Appendix \ref{app: quadratic calls suffice}. We first show that this is the case when they both only have a single attention head, i.e $H' = H = L = L' = 1$.

\begin{figure}
    \centering
    \includegraphics[width=0.85\linewidth]{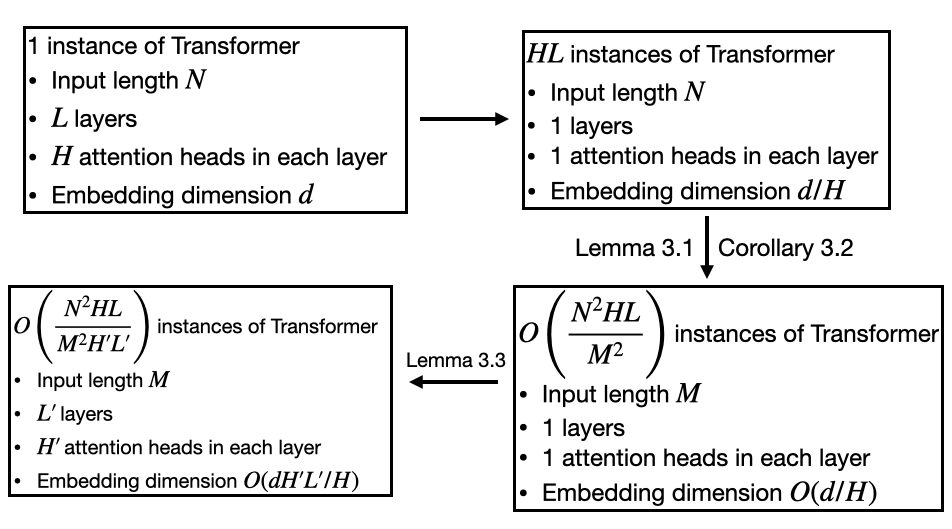}
    \caption{Proof Roadmap}
    \label{fig:enter-label}
\end{figure}

\begin{lemma}
\label{lem: attention is self-reducible}
For any single layer, single head transformer $\mathcal{T}$ with input length $N$, embedding dimension $d$, there exists an algorithm that simulates $\mathcal{T}$ with $O(\frac{N^2}{M^2})$ calls to a transformer oracle with input length $M$ and embedding dimension $O(d)$. 
\end{lemma}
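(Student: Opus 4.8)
The plan is to cut the $N$ indices into $\Theta(N/M)$ contiguous blocks $S_1,\dots,S_{N/M}$ of size at most $M/3$, compute an \emph{unnormalized, partial} attention for every ordered pair of blocks using $O(1)$ oracle calls, and then combine the partial results with further oracle calls that only perform additions and one division. Write $q_i,k_j,v_j\in\mathbb{R}^{d}$ for the query/key/value vectors that $\mathcal{T}$'s single head produces after its input MLP $\phi$. For $i\in S_t$ the desired output is $\psi\big(\big(\sum_u P_{t,u}(i)\big)/\big(\sum_u Z_{t,u}(i)\big)\big)$, where $\psi$ is $\mathcal{T}$'s layer MLP and
\[
P_{t,u}(i):=\sum_{j\in S_u}e^{\langle q_i,k_j\rangle}\,v_j\in\mathbb{R}^{d},
\qquad
Z_{t,u}(i):=\sum_{j\in S_u}e^{\langle q_i,k_j\rangle}\in\mathbb{R},
\]
so it suffices to produce $[P_{t,u}(i);Z_{t,u}(i)]$ for all $i\in S_t$ and all pairs $(t,u)$, then sum over $u$ and divide.

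For a fixed pair $(t,u)$ I would call the oracle once on the length-$(|S_t|+|S_u|+1)\le M$ sequence built from the raw input rows $X[i,:]$ ($i\in S_t$, tagged as ``query tokens''), the raw rows $X[j,:]$ ($j\in S_u$, tagged as ``key tokens''), and one fixed ``sink token''; into each token I pad only a constant number of tags together with its original index (a fixed number for this call), and I let the oracle's input MLP apply $\phi$ and route the tags. Taking the oracle's $W^Q,W^K$ to be $\mathcal{T}$'s, zero-extended, plus one extra coordinate pair that adds a bias $-B$ (with $B=\poly(N)$, hence $O(\log N)$ bits) exactly when both endpoints are query tokens makes each query token attend only to the key tokens (with the true scores $\langle q_i,k_j\rangle$) and to the sink token (with score $0$). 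Setting the value of key token $j$ to $[v_j;0]$ and of the sink token to $[\mathbf{0}_d;1]$, the oracle's softmax output at query token $i$ equals $\frac{1}{Z_{t,u}(i)+1}[P_{t,u}(i);1]$; its last coordinate gives $Z_{t,u}(i)$ and hence $P_{t,u}(i)$, and the oracle's layer MLP performs this $O(d^2)$-time recovery and outputs $[P_{t,u}(i);Z_{t,u}(i)]$. This is $O(1)$ calls for each of the $\Theta((N/M)^2)$ pairs, all with embedding dimension $d+O(1)=O(d)$.

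It remains, for each query block $t$, to form $\sum_{u=1}^{N/M}[P_{t,u}(i);Z_{t,u}(i)]$ for every $i\in S_t$. A sum of at most $M$ vectors is one oracle call: feed them as a sequence, use uniform attention (all scores equal) to average them, and let the layer MLP rescale by the known length. To respect the length bound $M$ I pack several indices $i$ into one call, keeping their contributions apart by adding to the attention score the bilinear mask $-C(a_i-a_j)^2$ — realizable by padding the $3$ coordinates $(a_i,a_i^2,1)$ to the query side and $(2Ca_j,-C,-Ca_j^2)$ to the key side, with $a_i$ the index tag and $C=\poly(N)$ — so a token only averages with tokens sharing its tag; when $N/M>M$ this is carried out over a few rounds of balanced summation. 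A direct count shows this uses $O((N/M)^2)$ calls regardless of how $M$ compares to $\sqrt N$, and the layer MLP of the last such call also performs the division by $Z$ and applies $\psi$. Every intermediate quantity has magnitude at most $2^{\poly(N)}$ and is bounded below by $2^{-\poly(N)}$ for $O(\log N)$-bit inputs, so each MLP map used ($x\mapsto 1/x-1$, reciprocal, scalar-times-vector) is continuous on the relevant compact domain and the accumulated error stays far below the allowed $\varepsilon=\Theta(2^{-N})$.

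The main obstacle is precisely that an oracle call returns a \emph{normalized} softmax average while a block decomposition needs \emph{unnormalized} numerators and denominators that can be added across blocks; the sink-token gadget is the device that lets a single call surrender both $P_{t,u}(i)$ and $Z_{t,u}(i)$. The remaining work is bookkeeping: the query-vs-key bias, the block-diagonal bias used for batched summation, the recovery of $P,Z$, the final division, and the propagation of each token's original position through $\phi$ must all be implemented using only the oracle's built-in MLPs and attention plus padding of $O(\log N)$-bit constants — with no matrix multiplication outside the oracle — and one checks each is an $O(d^2)$-time continuous map permitted by the model.
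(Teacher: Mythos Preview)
Your proof is correct and follows essentially the same strategy as the paper: decompose the $N\times N$ attention into $\Theta((N/M)^2)$ block pairs, use a synthetic ``sink'' token so that one oracle call reveals both the partial numerator $P_{t,u}(i)$ and the partial normalizer $Z_{t,u}(i)$, then aggregate across $u$ and divide. The paper differs only in implementation details: it places the query block and key block in separate \emph{columns} of an $M\times 2d$ input (so the query/key separation is exact rather than enforced by your large bias $-B$, making the construction exact with $\varepsilon=0$), and it uses two calls per block pair --- one with all-ones values to recover $Z_{t,u}$ and one with the true values to get $P_{t,u}/Z_{t,u}$ --- instead of your single combined call with values $[v_j;0]$ and $[\mathbf 0_d;1]$.
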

\begin{proof}[Proof Sketch.]

Our high-level idea is to partition the $N \times N$ attention matrix of $\mathcal{T}$ into $\frac{N^2}{M^2}$ blocks of size $M \times M$, and then use a constant number of oracles to separately handle each block. In particular, each block corresponds to two sub-intervals of length $M$ out of the input sequence of length $N$ (one interval for the rows, or queries, and one interval for the columns, or keys), so we can aim to have an oracle for sequence length $O(M)$ compute the contribution of each block. To be more precise, as in Definition \ref{def:attention} above, let
\[
\{q_1,\ldots,q_N\} \in \mathbb{R}^m, \{k_1,\ldots,k_N\} \in \mathbb{R}^m, \{v_1,\ldots,v_N\} \in \mathbb{R}^m
\] be the rows of $XW^Q, XW^K, XW^V$ respectively. Define  
\[
a_{i,j} = \exp(\langle q_i,k_j \rangle), \text{ and } b_{i,j} = \exp(\langle q_i,k_j\rangle)\cdot v_j.
\] The goal of the attention mechanism is to compute, for all $i$, 
\[
\frac{1}{\sum_{j=1}^{N}\exp(\langle q_i,k_j\rangle)}\cdot \sum_{j=1}^{N}\exp(\langle q_i,k_j\rangle)\cdot v_j = \frac{\sum_{j=1}^{N}b_{i,j}}{\sum_{j=1}^{N}a_{i,j}} = \frac{\sum_{t=1}^{N/M}\sum_{j\in S_t}b_{i,j}}{\sum_{t=1}^{N/M}\sum_{j\in S_t}a_{i,j}}.
\] The main technical difficulty is that one oracle call is not able to give us information on the sum of $a_{i,j}$ (or $b_{i,j}$) over all $j \in [N]$. However, we show that one oracle call allows us to compute $\sum_{j \in S_t}a_{i,j}$, where $S_t = \{(t-1)M+1,\ldots,tM\}$ such that $N/M$ oracle calls suffice to give us $\sum_{j=1}^{N}a_{i,j}$. We do this by adding in one synthetic token to the sequence so that its contribution is fixed, i.e. its inner product with all the keys will be the same and known. In addition, we assign its corresponding value token to be $0$ and other value tokens to be $1$ such that the output does not contain the synthetic token's value, while the normalizing term still counts its contribution. As a result, the output of the oracle will give us 
\[
\frac{\sum_{j \in S_t}a_{i,j}}{\sum_{j \in S_t}a_{i,j}+a},
\] where $a$ is the attention value (that we set and thus know in advance) for the normalizing term. This information allows us to compute $\sum_{j \in S_t}a_{i,j}$ as we can solve a linear equation using MLP. Secondly, we will directly feed the oracle with $X[S_t,:],W^Q,W^K,W^V$ to obtain
\[
\frac{\sum_{j \in S_t}b_{i,j}}{\sum_{j \in S_t}a_{i,j}},
\] and since we already know $\sum_{j \in S_t}a_{i,j}$, we can compute $\sum_{j \in S_t}b_{i,j}$, which will furthermore give us $\sum_{j=1}^{N}b_{i,j}$ by summing them up.
\end{proof}

We now move on to generalizing Lemma \ref{lem: attention is self-reducible} to general $H, L, H', L'$, i.e., when the transformer $\mathcal{T}$ and the oracles can have multiple heads and layers. A first attempt to do this might use different layers of $\mathcal{O}$ to simulate different layers of $\mathcal{T}$, but this appears difficult to implement, since Lemma \ref{lem: attention is self-reducible} requires some processing between layers of attentions that is not available to us when the different layers are connected only through MLPs within an oracle. Indeed, since the output of each attention head needs processing before it can be used in another attention head, it is unclear how to take advantage of more than one layer of each oracle in this way. 
We instead take a different approach: we use all the attention heads in all the layers of the oracle completely independently from each other to simultaneously simulate $\Theta(H' L')$ different attention heads, and we use these all together to simulate one layer at a time of $\mathcal{T}$. 

First, it is not hard to generalize Lemma \ref{lem: attention is self-reducible} to general $H,L$ (but still $H' = L' = 1$) by separately simulating each attention head in $\mathcal{T}$ regardless of which layer it is in:

\begin{corollary}
\label{cor: multilayer, multihead to single} 
For any transformer $\mathcal{T}$ with $L$ layers, $H$ attention heads in each layer, input length $N$, embedding dimension $d$, there exists an algorithm that simulates $\mathcal{T}$ with $O((\frac{N}{M})^2\cdot HL)$ calls to a single head, single layer transformer oracle with input length $M$ and embedding dimension $O(\frac{d}{H})$.
\end{corollary}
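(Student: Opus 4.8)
The plan is to reduce the general multi-head, multi-layer transformer $\mathcal{T}$ to many independent single-head, single-layer attention computations, each of which is handled by Lemma~\ref{lem: attention is self-reducible}. The key observation is that Lemma~\ref{lem: attention is self-reducible}, as stated, simulates a single-layer single-head transformer, which already includes an input MLP $\phi$ and a layer MLP $\psi$ around the one attention head. So the natural strategy is: process $\mathcal{T}$ layer by layer, and within each layer, simulate each of the $H$ attention heads by a separate invocation of (the algorithm behind) Lemma~\ref{lem: attention is self-reducible}, then assemble the results via padding and concatenation to recover the input to the next layer.

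First I would set up the layer-by-layer structure. Suppose inductively that we have already computed $X^{(\ell-1)} \in \mathbb{R}^{N\times d}$ in memory (the base case $\ell=1$ uses $X$ itself, with $\phi$ folded into the first round as described below). For layer $\ell$, recall that the $h$-th attention head operates on the column block $X^{(\ell-1)}[:,D_h] \in \mathbb{R}^{N\times m}$ with $m = d/H$, producing $Y_h := \softmax((X^{(\ell-1)}[:,D_h]W^Q_{\ell,h})(X^{(\ell-1)}[:,D_h]W^K_{\ell,h})^\top)(X^{(\ell-1)}[:,D_h]W^V_{\ell,h}) \in \mathbb{R}^{N\times m}$, and then $X^{(\ell)} = \psi_\ell([Y_1,\ldots,Y_H])$. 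For each fixed $h$, the map $X^{(\ell-1)}[:,D_h] \mapsto Y_h$ is exactly a single-head attention mechanism on embedding dimension $m = d/H$ with identity-like input and output MLPs, so by Lemma~\ref{lem: attention is self-reducible} (applied with embedding dimension $m$ in place of $d$), it can be simulated using $O((N/M)^2)$ calls to a single-head, single-layer oracle with input length $M$ and embedding dimension $O(m) = O(d/H)$. Summing over $h \in [H]$ gives $O((N/M)^2 \cdot H)$ oracle calls for one layer, and over $\ell \in [L]$ gives the claimed $O((N/M)^2 \cdot HL)$ total.

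The remaining bookkeeping is to check that the processing steps needed to (i) extract the column block $X^{(\ell-1)}[:,D_h]$, (ii) feed it into the sub-routine of Lemma~\ref{lem: attention is self-reducible}, (iii) concatenate the outputs $Y_1,\ldots,Y_H$ column-wise, and (iv) apply $\psi_\ell$, all fall within our allowed operations. Step (i) and (iii) are just submatrix selection and concatenation, which are allowed. Step (iv): the layer MLP $\psi_\ell$ is applied token-wise, and in our model the small transformers' MLPs are assumed to be at least as strong as those of $\mathcal{T}$; the cleanest way to handle this is to fold $\psi_\ell$ into the output MLP of one of the single-head oracle simulations for layer $\ell$ (or, equivalently, into the input MLP $\phi$ step at the start of layer $\ell+1$), rather than performing it as a standalone operation — this is the same trick by which $\phi$ gets absorbed at the very first layer. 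So the input MLP $\phi$ of $\mathcal{T}$ is handled by the Lemma~\ref{lem: attention is self-reducible} sub-routines in layer $1$, and each $\psi_\ell$ is handled by the sub-routines in layer $\ell$, using condition (1) on the oracle MLPs.

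I expect the main obstacle to be purely a matter of careful accounting rather than a new idea: one must verify that the dimension blowups compose correctly (the oracle for head $h$ needs embedding dimension $O(m) = O(d/H)$, not $O(d)$, so we really do need Lemma~\ref{lem: attention is self-reducible} stated with a general embedding dimension, which it is), and that folding the layer MLPs $\psi_\ell$ and positional/input MLP $\phi$ into the oracle calls does not secretly require more than $O(d^2)$-time token-wise operations or violate the $O(\log N)$-bit precision constraint. There is also a mild subtlety that the different heads and layers must be simulated in the right order of adaptivity — heads within a layer are independent and can be done in parallel, but layer $\ell$ must wait for layer $\ell-1$ — which gives the $O(L)$ rounds of adaptivity mentioned in the introduction; but for the statement of Corollary~\ref{cor: multilayer, multihead to single} we only need to count total oracle calls, so this is not an obstacle to the proof itself.
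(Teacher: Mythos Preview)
Your proposal is correct and takes essentially the same approach as the paper: simulate $\mathcal{T}$ layer by layer and head by head, invoking Lemma~\ref{lem: attention is self-reducible} on each head's $d/H$-dimensional slice to get $O((N/M)^2)$ oracle calls per head, then concatenate and repeat. The paper's own proof is a two-sentence version of exactly this argument, so your additional bookkeeping about folding $\phi$ and $\psi_\ell$ into oracle MLPs and verifying the embedding-dimension accounting is more detail than the paper provides, but entirely in the same spirit.
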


Next we show that a transformer with $L'$ layers, $H'$ attention heads in each layer, input length $M$, and embedding dimension $O(\frac{dH'L'}{H})$ can be used to simulate $H'L'$ instances of single head, single layer transformers with input length $M$ and embedding dimension $\frac{d}{H}$. In other words, we are able to independently use each attention head in the transformer, regardless of which of the $L'$ layers it appears in:

\begin{lemma}
\label{lem: multiple single to one} 
One transformer with $L'$ layers, $H'$ attention heads in each layer, input length $M$ and embedding dimension $O(\frac{dH'L'}{H})$ can be used to simulate $H'L'$ independent instances of single layer, single head transformers with input length $M$ and embedding dimension $\frac{d}{H}$.
\end{lemma}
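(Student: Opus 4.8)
The plan is to exhibit one transformer with $L'$ layers and $H'$ heads per layer whose behavior on a carefully padded input encodes the outputs of $H'L'$ independent single-head, single-layer transformers. The key structural fact I would exploit is that, within a single attention layer, the $H'$ heads operate on \emph{disjoint} column blocks $D_1,\dots,D_{H'}$ of the embedding, and that the only cross-head interaction happens through the layer MLP $\psi$. So the first step is to choose the embedding dimension of the big oracle to be $O(dH'L'/H)$, i.e. allocate a block of width $d/H$ for each of the $H'L'$ instances. I would arrange the input so that instance $(\ell,h)$ (for $\ell\in[L'],h\in[H']$) lives in the column block assigned to head $h$ of layer $\ell$, and pad all other column blocks with fixed constants (e.g. zeros, or a fixed token whose query/key inner products are known and whose value is $0$), using only the allowed $O(d^2)$-padding operations.

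The second step is to make the $L'$ layers ``pass through'' the data for the instances not currently being processed. Concretely, in layer $\ell$ only the $H'$ blocks earmarked for layer $\ell$ should be acted on nontrivially by the attention heads; the remaining $H'(L'-1)$ blocks must survive to their target layer unchanged. I would accomplish this by two devices: (i) set the query/key matrices on the pass-through blocks so that every attention score is a fixed known constant, making softmax uniform, and set the value matrix to the identity on those blocks, so each head simply outputs the per-block average of its own value tokens — not the identity, but a known linear distortion; and (ii) use the layer MLP $\psi_\ell$, which by assumption can do basic arithmetic and padding in $O(d^2)$ time and is at least as strong as the large transformer's MLPs, to undo that averaging distortion on the pass-through blocks and to apply the correct single-layer transformer's MLP on the blocks that were just processed in layer $\ell$. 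Since $\psi_\ell$ acts token-wise and the per-block averaging is a fixed linear map, inverting it token-wise is a routine linear computation; alternatively, I can avoid inversion entirely by using a ``sink'' token trick as in Lemma~\ref{lem: attention is self-reducible} so that each pass-through head's output already contains a clean copy of the desired block. The input MLP $\phi$ of the oracle handles the initial token-wise MLPs of all $H'L'$ instances at once, again block-wise.

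Finally, after layer $L'$, every instance's output sits in its designated column block, and a single application of the final layer MLP (or a trivial padding step afterward) extracts them; we then read off the $H'L'$ outputs. I would wrap up by checking that the precision and bit-length constraints are respected — all padded constants are fixed $O(\log N)$-bit numbers, and the uniform-softmax distortion factors are exactly computable — so the simulation is exact ($\varepsilon=0$) as in Theorem~\ref{thm: main result main result}.

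The main obstacle is step two: guaranteeing that the $H'(L'-1)$ not-yet-processed blocks traverse an attention layer \emph{without corruption}. A naive identity value matrix does not help because softmax-averaging mixes a block's own tokens across sequence positions, and a single head cannot both average (for the instance it simulates) and not-average (for the instances it carries). The fix I expect to use is to never ask a head to do both: dedicate, in each layer $\ell$, head $h$ to instance $(\ell,h)$ and \emph{also} let that same head carry the pass-through load for instances assigned to later layers by appending their data as extra coordinates inside block $D_h$ with query/key weights zeroed out on those coordinates — so the attention pattern is driven solely by instance $(\ell,h)$'s data, while the carried coordinates are multiplied by the same (known, query-dependent) softmax weights and can be renormalized token-wise by $\psi_\ell$. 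Verifying that this renormalization is well-defined (the normalizing sums are strictly positive and known) and fits in $O(d^2)$ MLP time is the crux of the argument; everything else is bookkeeping about column-block indices.
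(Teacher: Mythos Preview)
Your overall plan mirrors the paper's: allocate a width-$d/H$ column block to each instance $(h,\ell)$, let head $h$ in layer $\ell$ compute that instance, and try to pass the remaining blocks through untouched. You also correctly isolate the hard step --- getting the $H'(L'-1)$ inactive blocks to survive a layer whose attention pattern is fixed by the active block --- and the paper's own proof is terse at exactly this point, so you are engaging with the right issue rather than missing a published trick.

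The gap is that your pass-through mechanism cannot work. In your final scheme, head $h$ in layer $\ell$ zeros the query/key weights on the carried coordinates, so the softmax weights $w_{ij}$ are determined solely by instance $(h,\ell)$'s data; with identity value weights on a carried block $\ell'$, the output at position $i$ is $\sum_{j=1}^{M} w_{ij}\,X_{h,\ell'}[j,:]$, a convex combination over \emph{all} tokens with input-dependent coefficients. The layer MLP $\psi_\ell$ is token-wise and sees only this single mixed vector (together with the active instance's output at $i$). Even if you also expose the normalizer $\sum_j \exp(\langle q_i,k_j\rangle)$ via the sink-token device, you are left with one linear combination of the $M$ unknown rows $X_{h,\ell'}[1,:],\dots,X_{h,\ell'}[M,:]$; no scalar ``renormalization'' can isolate $X_{h,\ell'}[i,:]$ from that. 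Device~(i) fails for the same reason in sharper form: uniform softmax sends every position to the constant $\tfrac{1}{M}\sum_j X_{h,\ell'}[j,:]$, a rank-one map in the sequence dimension that no token-wise $\psi_\ell$ can invert.

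A repair that does work is to stop asking one head to both compute and carry. Reserve a constant fraction of the heads as ``carriers'' whose $W^Q,W^K$ act on a few extra coordinates holding a large positional signal, forcing their attention to be (near-)identity and hence exactly preserving their columns; let the layer MLP shuttle blocks between compute-heads and carrier-heads after each layer. This costs only a constant factor in the number of heads or in embedding width, which the $O(\cdot)$ in the lemma absorbs and which is all that Theorem~\ref{thm: main result} needs.
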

\begin{proof}[Proof Sketch]
Consider first when $L' = 1$. A transformer with $H'$ attention heads naturally partitions (by definition) the embedding dimension $O(\frac{dH'}{H})$ into $H'$ parts of size $O(\frac{d}{H})$, and each head separately computes an attention mechanism on one of those parts. The result follows almost directly, with some care to details about MLPs and aggregation.

More care is needed when $L' > 1$. We partition the $\Theta(\frac{dH'L'}{H})$ coordinates of the embedding dimension into $L'$ parts of size $\Theta(\frac{dH'}{H})$ each, and the key idea is that each layer will operate on one of those parts while leaving the rest unchanged. Indeed, weights for the query and keys can be selected so that only the relevant part of the coordinates will impact the attention matrices at each layer, then weights for the values can be selected so that the other parts are passed through the layer without being changed.
\end{proof}

Finally, we combine Lemma \ref{lem: attention is self-reducible}, Corollary \ref{cor: multilayer, multihead to single} and Lemma \ref{lem: multiple single to one} to obtain our main result.

\begin{theorem}
\label{thm: main result}
For any transformer $\mathcal{T}$ with $L$ layers, $H$ attention heads in each layer, input length $N$, embedding dimension $d$, there exists an algorithm that simulates $\mathcal{T}$ with $O((\frac{N}{M})^2\cdot\frac{HL}{H'L'})$ calls to a transformer oracle with $L'$ layers, $H'$ attention heads in each layer, input length $M$, embedding dimension $O(\frac{dH'L'}{H})$.
\end{theorem}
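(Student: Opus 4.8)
\textbf{Proof proposal for Theorem \ref{thm: main result}.} The plan is to compose the three ingredients already established: Lemma \ref{lem: attention is self-reducible}, Corollary \ref{cor: multilayer, multihead to single}, and Lemma \ref{lem: multiple single to one}. Corollary \ref{cor: multilayer, multihead to single} tells us that $\mathcal{T}$ (with $L$ layers and $H$ heads per layer) can be simulated using $O((N/M)^2 \cdot HL)$ calls to a single-layer, single-head oracle with input length $M$ and embedding dimension $O(d/H)$. Lemma \ref{lem: multiple single to one} tells us that one oracle with $L'$ layers, $H'$ heads per layer, and embedding dimension $O(dH'L'/H)$ can be used to simulate $H'L'$ independent instances of such single-layer, single-head transformers. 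So the composition is immediate in spirit: group the $O((N/M)^2 \cdot HL)$ single-head calls into batches of $H'L'$, and answer each batch with one call to the richer oracle, giving a total of $O((N/M)^2 \cdot HL / (H'L'))$ calls, which is exactly the claimed bound.

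First I would record the reduction chain formally: fix the algorithm $\mathcal{A}_1$ from Corollary \ref{cor: multilayer, multihead to single} that simulates $\mathcal{T}$ and makes a sequence of $T = O((N/M)^2 \cdot HL)$ oracle queries $(X_1, \Theta_1), \dots, (X_T, \Theta_T)$ to a single-layer single-head oracle of embedding dimension $O(d/H)$. I would note that these queries may be adaptive (later queries depend on earlier responses), but the adaptivity structure is benign — the queries within a single layer of $\mathcal{T}$ are non-adaptive among themselves, and Lemma \ref{lem: multiple single to one} lets us bundle $H'L'$ of them that are simultaneously available. Partition $\{1, \dots, T\}$ into $\lceil T/(H'L') \rceil$ consecutive groups, each of size at most $H'L'$ (padding the last group and any cross-layer boundary groups with trivial dummy queries — e.g.\ identity-like attention instances whose answers are discarded — which is permissible since the processing model allows padding $O(d^2)$ fixed constants). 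For each group, invoke Lemma \ref{lem: multiple single to one} to realize those $\le H'L'$ single-head instances via a single call to the $(L', H')$-oracle of embedding dimension $O(dH'L'/H)$, then extract the individual outputs and feed them back to $\mathcal{A}_1$.

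The main things to verify carefully are bookkeeping rather than deep: (i) the embedding dimension matches — Lemma \ref{lem: multiple single to one} requires the bundled single-head instances to have embedding dimension $d/H$ (up to constants), which is exactly what Corollary \ref{cor: multilayer, multihead to single} produces, and the resulting oracle dimension $O(dH'L'/H)$ is what the theorem claims; (ii) the simple-processing budget is respected — the repackaging between $\mathcal{A}_1$ and the batched oracle calls consists only of concatenating the $X_i$'s and $\Theta_i$'s into the block-structured input that Lemma \ref{lem: multiple single to one} expects, and slicing the output back apart, all of which is concatenation plus padding of $O(d^2)$ fixed numbers per token, hence within the allowed operations; (iii) the error budget composes — each of the (constantly many, per block) oracle-level approximation errors $\varepsilon = \Theta(2^{-N})$ accumulates over the $L$ layers and the arithmetic in the MLPs, but since we only chain $O(L)$ layers and $L \ll N$, the total error stays within the required relative tolerance (and in the unlimited-precision case every step is exact, giving $\varepsilon = 0$).

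The step I expect to be the mild obstacle is (iii), specifically propagating the relative error guarantee through the compositions: the oracle guarantee is relative error per token, and when the output of one simulated layer is fed as input to the next, one must argue that a relative perturbation to $X^{(\ell-1)}$ yields only a controlled relative perturbation to $X^{(\ell)}$ after an attention layer and MLP. This is a Lipschitz-stability argument using the $O(\log N)$-bit boundedness of all quantities and the continuity of softmax and the MLPs; because there are only $L = O(\mathrm{polylog})$ layers, a crude per-layer blowup of, say, a fixed polynomial factor is absorbed comfortably into the $2^{-\Omega(N)}$ slack. I would state this as a short stability lemma (or cite the arithmetic-capabilities appendix) and then the theorem follows by assembling the three pieces.
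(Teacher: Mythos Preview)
Your proposal is correct and follows exactly the paper's approach: the paper's proof of Theorem \ref{thm: main result} is literally the one-line statement ``This follows from Corollary \ref{cor: multilayer, multihead to single} and Lemma \ref{lem: multiple single to one},'' and your write-up simply spells out the batching, bookkeeping, and error-composition details that the paper leaves implicit. The adaptivity and error-propagation points you raise are sound and are handled (or sidestepped via exact computation) elsewhere in the paper's constructions.
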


We additionally show that these results still hold when both the large and small transformers have causal masking. The proofs are similar to the proof of Theorem \ref{thm: main result}, and are deferred to Appendix \ref{app: quadratic calls suffice}.

\begin{theorem}
\label{thm: main result with causal}
For any transformer $\mathcal{T}$ with $L$ layers, $H$ attention heads in each layer, input length $N$, embedding dimension $d$ and causal masking, there exists an algorithm that simulates $\mathcal{T}$ with $O((\frac{N}{M})^2\cdot\frac{HL}{H'L'})$ calls to a transformer oracle with $L'$ layers, $H'$ attention heads in each layer, input length $M$, embedding dimension $O(\frac{dH'L'}{H})$ and causal masking.
\end{theorem}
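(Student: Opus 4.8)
The plan is to follow the same three-step structure used to prove Theorem~\ref{thm: main result}, adapting each step to respect the causal mask. As in the unmasked case, it suffices to simulate one attention head of $\mathcal{T}$ at a time, so the heart of the argument is a causal analog of Lemma~\ref{lem: attention is self-reducible}: simulating a single-layer, single-head \emph{causal} transformer of length $N$ with $O((N/M)^2)$ calls to a causal oracle of length $O(M)$. Given that, the causal analogs of Corollary~\ref{cor: multilayer, multihead to single} and Lemma~\ref{lem: multiple single to one} go through essentially verbatim: splitting the heads of the large transformer and splitting the embedding coordinates across the $L'$ layers of the oracle never interacts with masking, since causal masking acts identically and independently within each head, and the ``pass-through'' of the unused coordinate blocks between oracle layers can still be realized because each token can be made to attend (almost) only to itself, which is permitted under a causal mask.

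For the single-head causal self-reduction, I would again partition the $N\times N$ attention matrix into $M\times M$ blocks indexed by a query block $S_s$ and a key block $S_t$. Blocks with $t>s$ are entirely above the diagonal and contribute nothing, so we discard them; this leaves $\sum_{s}s=O((N/M)^2)$ relevant blocks, matching the target count. For a strictly lower-triangular block $t<s$ the full $M\times M$ block is unmasked, but the oracle forces a causal mask; to work around this I would feed the oracle the length-$2M$ sequence obtained by concatenating the tokens of block $t$ (playing the role of keys) \emph{before} the tokens of block $s$ (playing the role of queries). Under the causal mask each query token then attends to all $M$ key tokens, plus possibly earlier query tokens and itself; using the standard embedding gadgets from Appendix~\ref{app: transformers arithmetic}, we add a few coordinates and modify $W^Q,W^K$ so that every query–query and query–self inner product becomes effectively $-\infty$, hence drops out of the softmax, while the query–key inner products between the two halves are unchanged. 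Exactly as in Lemma~\ref{lem: attention is self-reducible}, one call with a synthetic fixed-contribution token (value $\mathbf{0}$, real values $\mathbf{1}$) lets us solve a linear equation for $\sum_{j\in S_t}a_{i,j}$, and one direct call gives $\sum_{j\in S_t}b_{i,j}/\sum_{j\in S_t}a_{i,j}$, from which $\sum_{j\in S_t}b_{i,j}$ follows.

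The only genuinely new case is the diagonal block $t=s$, where we want the causally masked partial sums $\sum_{j\in S_s,\,j\le i}a_{i,j}$ and $\sum_{j\in S_s,\,j\le i}b_{i,j}$. Here I would feed the oracle $X[S_s,:]$ directly (length $M$) with its own causal mask, prepending one synthetic token which, being first, is attended to by every query, and recovering the normalization the same way as above. Summing the lower-triangular blocks' contributions with the diagonal block's contribution reconstructs the numerator $\sum_{j\le i}b_{i,j}$ and denominator $\sum_{j\le i}a_{i,j}$ of the causal attention output at each $i\in S_s$; dividing (an operation realizable by an MLP) yields $\mathcal{T}$'s output, and the aggregation over the $\le N/M$ contributing blocks per query is done exactly as in the proof of Theorem~\ref{thm: main result}. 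Combining with the causal versions of Corollary~\ref{cor: multilayer, multihead to single} and Lemma~\ref{lem: multiple single to one} gives the stated bound.

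I expect the main obstacle to be the lower-triangular blocks: making the oracle's built-in causal mask harmless there requires the keys-before-queries reordering \emph{together with} a carefully chosen embedding that annihilates all query–query and query–self attention scores, and one has to verify that this reordering is consistent with the positional information carried by the input MLP $\phi$ (which we assumed encodes position) — i.e., that we can re-index the positions of the two concatenated blocks without disturbing the values of $\langle q_i,k_j\rangle$ we actually need. The diagonal blocks, by contrast, are handled essentially for free by the oracle's own causal masking.
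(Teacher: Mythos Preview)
Your proposal is correct and follows the paper's overall scaffolding: reduce to a single-head causal analog of Lemma~\ref{lem: attention is self-reducible}, then invoke causal versions of Corollary~\ref{cor: multilayer, multihead to single} and Lemma~\ref{lem: multiple single to one}, which the paper likewise notes are unaffected by masking. Your handling of the diagonal blocks---prepend a synthetic token and let the oracle's own causal mask produce the required prefix sums---is exactly what the paper does (its Claim~\ref{claim: app}).

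Where you genuinely diverge is on the strictly lower off-diagonal blocks. You \emph{row}-concatenate the key block before the query block into a length-$2M$ sequence and add an embedding gadget that drives all query--query scores to effectively $-\infty$, so that under the oracle's causal mask each query sees precisely the $M$ intended keys. This is clean, but it is inherently approximate (fine within the allowed $\varepsilon=\Theta(2^{-N})$, by the same large-constant arithmetic used in the proof of Theorem~\ref{thm: linear calls are weaker}) and doubles the oracle context to $2M$. The paper instead keeps the \emph{column}-concatenated length-$(M{+}1)$ input from Lemma~\ref{lem: attention is self-reducible} and lets the oracle's causal mask act directly: that returns only the ``lower triangle'' of the off-diagonal block, namely $\sum_{j'\le j}a_{i,(t'-1)M+j'}$ where $j$ is $i$'s local position. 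To recover the missing ``upper triangle'' the paper \emph{reverses} the token order (via Lemma~\ref{lem: attention can permute}) so that the causal mask exposes exactly the complementary partial sum, and then adds the two halves to reconstruct the full block sum \emph{exactly}. In short, the paper trades your $-\infty$ suppression gadget for a reversal trick, buying exactness and oracle context $M{+}O(1)$ at the price of twice as many (still $O(1)$) calls per block; both routes give $O((N/M)^2)$ calls and prove the theorem.
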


\section{Efficient simulation with average-case input assumptions}
\label{sec: average case big}

\subsection{Linear calls suffice for average-case inputs}
\label{sec: average case}

In this section we prove that if the queries, keys and values in the attention heads are somewhat bounded in how much they may differ from each other, then $O(\frac{N}{M})$ small transformers suffice to approximate a large transformer. We provide a proof sketch below, and the complete proof can be found in Appendix \ref{app: average case}.

\begin{theorem}
\label{thm: average case}
Let $\mathcal{T}$ be a transformer with $L$ layers, $H$ attention heads in each layer, input length $N$ and embedding dimension $d$. Suppose there exist absolute constants $C,D>0$ such that 
\[
\frac{1}{C}\leq a_{i,j} \leq C, \textup{ and } DN\cdot \max_{j}\|b_{i,j}\|_2 \leq \Big\|\sum_{j=1}^{N}b_{i,j}\Big\|_2
\] where
\[
a_{i,j} = \exp(\langle q_i,k_j\rangle), b_{i,j} = \exp(\langle q_i,k_j\rangle)\cdot v_j
\] for any query $q_i,k_j,v_j$ in any attention head. There exists an algorithm using $O(\frac{N}{M}\cdot \frac{HL}{H'L'})$ oracle calls to a small transformer with $L'$ layers, $H'$ attention heads in each layer, embedding dimension $O(\frac{dH'L'}{H})$ to obtain an $(1+ \varepsilon)$ approximation of $\mathcal{T}$ with probability at least $0.9$ for any fixed constant $\varepsilon>0$.
\end{theorem}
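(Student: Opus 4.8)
The plan is to mirror the structure of Section~\ref{sec: quadratic calls suffice}: using Corollary~\ref{cor: multilayer, multihead to single} and Lemma~\ref{lem: multiple single to one}, it suffices to handle one attention head of one layer at a time, so the core task is to replace the exhaustive $O((N/M)^2)$-block decomposition of Lemma~\ref{lem: attention is self-reducible} by a \emph{random-sampling} estimator that uses only $O(N/M)$ oracle calls. Fix an attention head and recall $a_{i,j}=\exp\langle q_i,k_j\rangle$, $b_{i,j}=a_{i,j}v_j$, $A_i=\sum_{j=1}^N a_{i,j}$, $B_i=\sum_{j=1}^N b_{i,j}$, so the head must output $B_i/A_i$ for every $i$. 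I would draw $\Theta(\log N)$ key indices uniformly at random (freshly once per layer), call the resulting set $T$, partition the $N$ queries into $\lceil 2N/M\rceil$ batches of size at most $M/2$, and for each batch issue one oracle call whose input sequence consists of that batch's query tokens together with the $|T|$ sampled key/value tokens. The oracle's raw output on a query token $i$ is then exactly $\bigl(\sum_{j\in T}a_{i,j}v_j\bigr)/\bigl(\sum_{j\in T}a_{i,j}\bigr)$, which is the natural ratio estimator of $B_i/A_i$ (the $N/|T|$ rescalings cancel), so no post-processing beyond the oracle call is needed.

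Two gadgets make this work inside a single (maskless) oracle call. First, to stop the batched query tokens from attending to one another or to themselves, I would augment the embedding by one coordinate and choose the oracle's $W^Q,W^K,W^V$ so that every query token has a large positive entry in that coordinate of its query and a large negative entry in that coordinate of its key, while every sampled key/value token has zero there; then $\langle q_i,k_{i'}\rangle$ between two query tokens acquires a dominating negative term and contributes nothing to the softmax, whereas $\langle q_i,k_j\rangle$ for a sampled $j$ is unchanged. Second, the value matrix is set to zero out the padded coordinate, and the output rows of the key/value tokens are simply discarded. This is exactly the kind of $O(d^2)$ padding/rearranging allowed by the model, and it keeps the oracle's embedding dimension $O(d)$; the reduction through Lemma~\ref{lem: multiple single to one} then yields the stated oracle with $L'$ layers, $H'$ heads, and embedding dimension $O(dH'L'/H)$.

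The heart of the argument is the concentration analysis, and this is where both hypotheses are used. Conditioned on the (approximate) layer input, $T$ is independent of $\{a_{i,j},b_{i,j}\}_j$; since $1/C\le a_{i,j}\le C$, Bernstein/Hoeffding gives $\sum_{j\in T}a_{i,j}=(1\pm\varepsilon)\tfrac{|T|}{N}A_i$ with failure probability $e^{-\Omega(\varepsilon^2|T|/C^4)}$; and since $\|B_i\|_2\ge DN\max_j\|b_{i,j}\|_2$, a vector Bernstein bound (the dimension $d=O(\log N)$ costs only a logarithmic factor) gives $\bigl\|\sum_{j\in T}b_{i,j}-\tfrac{|T|}{N}B_i\bigr\|_2\le \varepsilon\tfrac{|T|}{N}\|B_i\|_2$ with failure probability $\mathrm{poly}(d)\cdot e^{-\Omega(\varepsilon^2 D^2|T|)}$. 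Dividing, the oracle's output lies within a $(1\pm O(\varepsilon))$ relative factor of $B_i/A_i$. Choosing $|T|=\Theta(\varepsilon^{-2}\,\mathrm{poly}(C,1/D)\log(HLN))=\Theta(\log N)\le M/2$ and taking a union bound over all $N$ queries, all $H$ heads, and all $L$ layers keeps the total failure probability below $0.1$.

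The main obstacle I anticipate is controlling how this per-head relative error propagates across the $L$ layers. When the approximate output of layer $\ell-1$ is fed into layer $\ell$, the relevant quantities $a_{i,j},b_{i,j}$ are perturbed, whereas the hypotheses are stated for the \emph{exact} transformer; I would therefore run the induction with slightly relaxed constants (e.g.\ $\tfrac{1}{2C}\le a_{i,j}\le 2C$) that survive a small input perturbation, and use the boundedness assumptions together with the regularity of the layer MLPs (continuous on a compact domain, hence stable) to show that the relative error at layer $\ell$ is at most a constant factor larger than at layer $\ell-1$ plus the fresh sampling error, so that after $L$ layers it is still $O(\varepsilon)$ after rescaling $\varepsilon$ by a $\mathrm{poly}$ factor. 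Making this last step fully quantitative --- in particular pinning down exactly which regularity of the MLPs is needed and how the constant in the error recursion depends on $C$ and $D$ --- is the delicate part of the proof.
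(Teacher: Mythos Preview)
Your proposal is correct and shares the paper's core idea: randomly sample the keys, estimate each $B_i/A_i$ by its sampled analogue, and invoke Hoeffding-type concentration using the two boundedness hypotheses. The packaging of the oracle call is where you differ. The paper draws a single random permutation $\tau$ of $[N]$ and pairs each query block $S_t$ with the key block $\tau(S_t)$, feeding the two into the oracle \emph{column}-concatenated as $[X[S_t,:],X[\tau(S_t),:]]\in\mathbb{R}^{M\times 2d}$; then $W^{Q'}$ reads off the first $d$ columns and $W^{K'}$ the last $d$, so every query attends to exactly the $M$ sampled keys with no masking gadget and no additive error. Your row-concatenation with a large-$R$ suppressing coordinate achieves the same effect but is slightly fussier and introduces a (negligible) extra error term. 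On the other hand, your version makes do with only $|T|=\Theta(\log N)$ sampled keys rather than $M$, and reads the ratio off in a single call per batch; the paper also ends up using the sampled ratio as its final estimator, but presents a separate ``Step~1'' denominator computation that is really only needed in the analysis. Finally, your explicit discussion of how per-head relative errors propagate through the $L$ layers is more careful than the paper, which simply appeals to Corollary~\ref{cor: multilayer, multihead to single} and Lemma~\ref{lem: multiple single to one} and does not track the accumulation; your instinct that this is the delicate step is well placed.
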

\begin{proof}[Proof Sketch]
    The high-level idea is to partition the queries into $N/M$ parts of size $M$ each, and then permute the keys (but not the queries) using a random permutation $\tau$. We then aim to approximate the desired quantities $\sum_{j=1}^{N}\exp(\langle q_i,k_j\rangle)$ and $\sum_{j=1}^{N}\exp(\langle q_i,k_j\rangle)\cdot v_j$ using rescalings of $\sum_{j \in S_t}\exp(\langle q_i,k_{\tau(j)}\rangle)$ and $\sum_{j \in S_t}\exp(\langle q_i,k_{\tau(j)}\rangle)\cdot v_{\tau(j)}$ (where $S_t$ is the part of size $M$ that contains query $q_i$). This can be seen as estimating the desired sums by sampling only $M$ of the $N$ summands at random. At the same time, by blocking the queries and keys like this, the samples can be computed using oracle calls similar to Lemma \ref{lem: attention is self-reducible} above.
\end{proof}

\subsection{Linear small transformers are weaker than large transformers}
\label{sec: linear is weaker}

We also show that $N/M$ small transformers can be simulated by a large transformer along with an oracle for performing very small matrix multiplications ($M \times d$ with $d \times d$). We only prove the statement for single head transformers to illustrate the need for linear amount of oracle calls.

\begin{theorem}
\label{thm: linear calls are weaker}
Given $N/M$ instances of single layer, single head transformers with input length $M$ and embedding dimension $d$, there exists an algorithm that simulates them with one call of a single layer, single head transformer with input length $O(N)$ and embedding dimension $O(d)$, along with $O(N/M)$ many matrix multiplications of size $M \times d \times d$.
\end{theorem}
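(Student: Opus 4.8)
The plan is to construct a single large transformer whose input consists of the $N/M$ blocks of size $M$ stacked into one sequence of length $N$, process each block essentially independently inside the attention head (so that the head recovers the $N/M$ separate attention computations), and then use the matrix multiplications outside the oracle to patch together the $N/M$ different sets of weight matrices that the single head cannot natively represent. Concretely, suppose the $N/M$ small transformers have query/key/value matrices $W^Q_t, W^K_t, W^V_t \in \mathbb{R}^{d \times m}$ and input blocks $X_t \in \mathbb{R}^{M \times d}$ for $t \in [N/M]$. The first step is to precompute, outside the oracle, the matrix products $Q_t := X_t W^Q_t$, $K_t := X_t W^K_t$, and $V_t := X_t W^V_t$ for each $t$; each is a multiplication of size $M \times d \times d$ (after padding $m \le d$), so there are $O(N/M)$ of them and each costs $O(Md^2) = O(M \log^2 N)$ time, which is near-linear and does not dominate. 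Stack the resulting $Q_t, K_t, V_t$ into sequences $Q, K, V \in \mathbb{R}^{N \times d}$, and also append to each row a block-index tag $t$ (this is allowed by the padding/arranging operations and by the fact that positional information is available through the input MLP).

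The second step is to set up the single large attention head so that, on this length-$N$ input, it reproduces all $N/M$ block computations at once. The large transformer has free weight matrices, so I choose them to be (a scaled version of) the identity on the coordinates holding $Q$, $K$, $V$, so that the pre-softmax score for rows $i,j$ is $\langle Q[i,:], K[j,:]\rangle$ — exactly $\langle q^{(t)}_i, k^{(t)}_j\rangle$ when $i,j$ lie in the same block $t$. To kill cross-block contributions I use the block-index tags: add to the query and key embeddings a large auxiliary coordinate encoding $t$ (e.g. $q$ gets $+\Lambda \cdot (1, -2t)$ and $k$ gets $(2t, t^2)$-type coordinates) so that the extra inner-product term is $-\Lambda (t_i - t_j)^2$, which is $0$ within a block and $\le -\Lambda$ across blocks; taking $\Lambda$ large enough (polynomially in $N$, within the $O(\log N)$-bit budget this is fine, or exploiting the $\varepsilon = \Theta(2^{-N})$ slack in the definition of "simulate") makes all cross-block softmax weights negligible. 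Then the softmax normalizes within each block automatically, and the value aggregation $\sum_{j} \mathrm{softmax}(\cdots) V[j,:]$ restricted to block $t$ equals exactly $\mathrm{attn}$ of the $t$-th small transformer. Finally the layer MLP $\psi$ of the large transformer can implement all the per-block MLPs $\psi_t$ simultaneously, since it has access to the block tag $t$ on each token and MLPs are modeled as (token-wise, hence block-wise) arbitrary functions; likewise the input MLP $\phi$ can be taken to be identity / a simple rearrangement since we have already precomputed $Q_t, K_t, V_t$ externally. Reading off the rows of the oracle's output in blocks yields the $N/M$ desired outputs.

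The main obstacle I expect is handling the differing weight matrices $W^Q_t, W^K_t, W^V_t$ cleanly: a single attention head applies one fixed triple of weight matrices to its whole input, so it cannot itself realize $N/M$ distinct linear maps on $N/M$ blocks. This is precisely why the external $M \times d \times d$ multiplications are needed — they absorb the block-dependent linear maps, leaving the oracle to do only the block-diagonal softmax-attention, which a single uniform head can do. A secondary, more technical point is making the block-separation argument rigorous under the limited-precision / $O(\log N)$-bit constraint: I need the "large penalty" $\Lambda$ to be simultaneously (i) big enough that leaked cross-block mass is below the tolerance and (ii) representable in $O(\log N)$ bits; since the simulation definition already permits error $\varepsilon = \Theta(2^{-N})$ and, more to the point, the relative error only needs to be small, I can either route this through the $\varepsilon$ slack or, in the exact regime, note that a single uniform block structure with hard separation (an appropriate masking-style argument) gives it exactly. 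I also need to double-check that appending the block-index coordinates stays within the allowed $O(d^2)$-entry padding and $O(d)$ embedding-dimension blowup, which it does since we add only a constant number of extra coordinates per token.
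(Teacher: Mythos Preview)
Your proposal is correct and follows essentially the same approach as the paper: precompute $Q_t,K_t,V_t$ via the external $M\times d\times d$ products, concatenate them into a length-$N$ input $[Q,K,V]$ with identity-block weight matrices, and append extra coordinates that drive cross-block inner products very negative so that the single softmax effectively block-diagonalizes. The only cosmetic difference is the separation gadget---you use an $O(1)$-coordinate $-\Lambda(t_i-t_j)^2$ penalty, whereas the paper appends $O(\log(N/M))$ binary-type coordinates $u_i\in\{0,-B\}^r$ (with $v_i=B\mathbf{1}+u_i$) to get a uniform $-B^2$ cross-block drop; both fit in the $O(d)$ embedding blowup and the $\varepsilon=\Theta(2^{-N})$ error budget.
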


The key idea behind Theorem \ref{thm: linear calls are weaker} is to concatenate the tokens from all $N/M$ input sequences into a single long sequence of length $N$, but then slightly increase the embedding dimension in a way which makes tokens from different short sequences highly uncorrelated with each other. Thus, the large attention will not give much weight to pairs of tokens from different short sequences. The complete proof is delayed to Appendix \ref{app: linear is weaker}.

\section{Simulation of transformers with sliding window and StreamingLLMs}
\label{sec: sw+streamingllm}

In our last section, we show that small transformers work well when we add sliding window masking to attention heads, and when in the StreamingLLM framework. Our constructions are similar to above, but with additional techniques to take advantage of the masking structures, and can be found in the Appendix \ref{app: sliding window and streamingllm}.

When we consider sliding window attention, we only need to deal with keys that are close to each query. It is intuitive to see that in such scenario, the attention scores of consecutive $M-r$ queries can be covered with a $M \times M$ block matrix, which can be computed using our oracle. Transformers with attention sink is similar to transformers with sliding window, except that we have an extra ``sink window" in the beginning for all the queries. These sinks windows can be computed using $O(\frac{N}{M})$ oracle calls.

\begin{theorem}
\label{thm: streamingllm}
For any transformer $\mathcal{T}$ with $L$ layers, $H$ attention heads in each layer, input length $N$, embedding dimension $d$, constant-size sliding window, there exists an algorithm that simulates $\mathcal{T}$ with $O(\frac{N}{M}\cdot\frac{HL}{H'L'})$ calls to a transformer oracle with $L'$ layers, $H'$ attention heads in each layer, input length $M$ and embedding dimension $O(\frac{dH'L'}{H})$ with causal masking. This result still holds if we have constant-size attention sink.
\end{theorem}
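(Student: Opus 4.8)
The plan is to reuse the two-step reduction behind Theorem~\ref{thm: main result with causal}: first reduce, head by head and layer by layer, to simulating a single sliding-window (resp.\ attention-sink) attention head of input length $N$ and embedding dimension $O(d/H)$ by single-head single-layer causal oracles of input length $M$; this is the analogue of Corollary~\ref{cor: multilayer, multihead to single} and goes through unchanged, since the masking is applied identically to every head and only restricts which entries of the score matrix survive. Then pack $H'L'$ such single-head single-layer causal oracles into one causal transformer oracle with $H'$ heads, $L'$ layers and embedding dimension $O(dH'L'/H)$, which is the analogue of Lemma~\ref{lem: multiple single to one} and is likewise unaffected by masking (this is exactly what is already done for causal masking in Theorem~\ref{thm: main result with causal}). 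So the crux is the single-head single-layer case, and the new ingredient relative to Lemma~\ref{lem: attention is self-reducible} is that a quadratic number of calls is no longer needed: since the window size $r$ and sink size $s$ are constants, each query attends to only $O(1)$ keys, so many queries can share one oracle call.

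For a single sliding-window head I would cover the $N$ queries by $O(N/M)$ groups, each group being a set of $\Theta(M)$ queries bundled with the $O(1)$ keys each of them needs, arranged into an input of length $\le M$. Concretely, partition the queries into blocks of $\Theta(M)$ consecutive indices, and for the block handled by one oracle call, form mini-blocks of $r$ consecutive tokens --- one length-$r$ window per query --- packing $\Theta(M/r)=\Theta(M)$ mini-blocks into the single length-$\le M$ input, with the large transformer's $W^Q,W^K,W^V$ and with the oracle's input MLP set so that the original positional encodings are not reapplied (the tokens of $X^{(\ell-1)}$ already carry them). Inside each mini-block the causal mask already makes the last token attend to exactly its length-$r$ window; to kill attention across mini-blocks, augment the embedding with $O(1)$ extra coordinates (produced by the oracle's input MLP as a function of position) so that $W^Q,W^K$ contribute a bilinear term $-(u-u')B$ to the score between a query in mini-block $u$ and a key in mini-block $u'$. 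This term is exactly $0$ within a mini-block, so it does not distort the true scores, and is $\le -B$ across mini-blocks, so choosing $B$ larger than the (polynomially bounded, as all numbers have $O(\log N)$-bit representations) magnitude of the true scores plus $\Theta(N)$ forces every cross-mini-block softmax weight below $\varepsilon$. Hence each oracle call returns, up to $\varepsilon$, the exact window-attention output of its $\Theta(M)$ queries, $O(N/M)$ calls cover all queries, and a final layer MLP (inside the oracle or in post-processing) reassembles $X^{(\ell)}$. The attention-sink case is identical except that each mini-block additionally carries the $s=O(1)$ sink tokens $X^{(\ell-1)}[\{1,\dots,s\},:]$ at its front, so each query sees its $s+r=O(1)$ relevant keys; only the mini-block length and hidden constants change. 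Queries near the start of the sequence, where the window clamps at index $1$ or overlaps the sinks, are absorbed into $O(1)$ special-case oracle calls.

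The step I expect to be the main obstacle is exactly this cross-mini-block suppression: a causal oracle fed a length-$M$ chunk attends to every earlier token of the chunk rather than only to the length-$r$ window, so we must force the unwanted pairs down to the demanding $\varepsilon=\Theta(2^{-N})$ accuracy of the simulation definition while leaving the genuine in-window scores bit-for-bit unchanged --- which is what the $-(u-u')B$ term, being identically zero within a mini-block, is designed to do. The remaining work --- tracking positional encodings for tokens relocated into mini-blocks, handling the boundary groups, and checking that the two reductions survive the masking --- is routine but is where care is needed.
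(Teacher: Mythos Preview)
Your reduction to the single-head single-layer case and the packing via Lemma~\ref{lem: multiple single to one} match the paper; the single-head construction, however, is genuinely different. You pack $\Theta(M/r)$ disjoint length-$r$ mini-blocks (length $s{+}r$ with sinks) into one oracle input, duplicating tokens, and kill cross-mini-block attention with an additive $-(u-u')B$ penalty---the isolation trick of Theorem~\ref{thm: linear calls are weaker}---so the oracle directly returns the windowed attention for $\Theta(M)$ queries up to $\varepsilon=\Theta(2^{-N})$. The paper instead writes each windowed sum as a difference of two causal prefix sums, $\sum_{j=i-r+1}^{i}(\cdot)=\sum_{j\le i}(\cdot)-\sum_{j\le i-r}(\cdot)$: it partitions the queries into chunks of size $M-r$, uses the synthetic-token trick of Claim~\ref{claim: app} to obtain $\sum_{j\le i}\exp\langle q_i,k_j\rangle$ and the corresponding $v_j$-weighted ratio for every $i$ in the chunk with one causal call each, then repeats with the queries shifted by $r$ relative to the keys to get the $\sum_{j\le i-r}$ terms; subtracting and dividing yields the output exactly. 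Sinks are handled by the analogous identity with a shift by $s$. The paper's route is exact, avoids large penalty constants and token duplication, and exploits rather than fights the causal oracle's native prefix-sum behaviour; yours is more direct (the oracle outputs the final windowed attention, with no separate normalizer-then-divide pipeline) and handles sinks uniformly. Both achieve $O(N/M)$ calls per head.
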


\section{Acknowledgments}

We thank Daniel Hsu, Jingwen Liu and Vatsal Sharan for useful discussions.

\bibliographystyle{alpha}
\bibliography{sample}

\newcommand{\etalchar}[1]{$^{#1}$}
\begin{thebibliography}{KDW{\etalchar{+}}21}

\bibitem[AET{\etalchar{+}}24]{arora2024zoology}
Simran Arora, Sabri Eyuboglu, Aman Timalsina, Isys Johnson, Michael Poli, James Zou, Atri Rudra, and Christopher Re.
\newblock Zoology: Measuring and improving recall in efficient language models.
\newblock In {\em The Twelfth International Conference on Learning Representations}, 2024.

\bibitem[AS24]{AS24}
Josh Alman and Zhao Song.
\newblock Fast attention requires bounded entries.
\newblock In {\em Proceedings of the 37th International Conference on Neural Information Processing Systems}, NIPS '23, Red Hook, NY, USA, 2024. Curran Associates Inc.

\bibitem[AY25]{alman2025fundamental}
Josh Alman and Hantao Yu.
\newblock Fundamental limitations on subquadratic alternatives to transformers.
\newblock In {\em The Thirteenth International Conference on Learning Representations}, 2025.

\bibitem[BAG20]{bhattamishra-etal-2020-ability}
Satwik Bhattamishra, Kabir Ahuja, and Navin Goyal.
\newblock On the {A}bility and {L}imitations of {T}ransformers to {R}ecognize {F}ormal {L}anguages.
\newblock In Bonnie Webber, Trevor Cohn, Yulan He, and Yang Liu, editors, {\em Proceedings of the 2020 Conference on Empirical Methods in Natural Language Processing (EMNLP)}, pages 7096--7116, Online, November 2020. Association for Computational Linguistics.

\bibitem[BHBK24]{bhattamishra2024separations}
Satwik Bhattamishra, Michael Hahn, Phil Blunsom, and Varun Kanade.
\newblock Separations in the representational capabilities of transformers and recurrent architectures.
\newblock In {\em The Thirty-eighth Annual Conference on Neural Information Processing Systems}, 2024.

\bibitem[BPC20]{longformer}
Iz~Beltagy, Matthew~E. Peters, and Arman Cohan.
\newblock Longformer: The long-document transformer.
\newblock {\em arXiv:2004.05150}, 2020.

\bibitem[CDF{\etalchar{+}}22]{chalkidis2022explorationhierarchicalattentiontransformers}
Ilias Chalkidis, Xiang Dai, Manos Fergadiotis, Prodromos Malakasiotis, and Desmond Elliott.
\newblock An exploration of hierarchical attention transformers for efficient long document classification, 2022.

\bibitem[CLD{\etalchar{+}}21]{performer}
Krzysztof~Marcin Choromanski, Valerii Likhosherstov, David Dohan, Xingyou Song, Andreea Gane, Tam{\'{a}}s Sarl{\'{o}}s, Peter Hawkins, Jared~Quincy Davis, Afroz Mohiuddin, Lukasz Kaiser, David~Benjamin Belanger, Lucy~J. Colwell, and Adrian Weller.
\newblock Rethinking attention with performers.
\newblock In {\em 9th International Conference on Learning Representations, {ICLR} 2021, Virtual Event, Austria, May 3-7, 2021}. OpenReview.net, 2021.

\bibitem[CMS{\etalchar{+}}20]{CMS20}
Nicolas Carion, Francisco Massa, Gabriel Synnaeve, Nicolas Usunier, Alexander Kirillov, and Sergey Zagoruyko.
\newblock End-to-end object detection with transformers.
\newblock In {\em Computer Vision – ECCV 2020: 16th European Conference, Glasgow, UK, August 23–28, 2020, Proceedings, Part I}, page 213–229, Berlin, Heidelberg, 2020. Springer-Verlag.

\bibitem[Dee21]{Rae21}
Google DeepMind.
\newblock Scaling language models: Methods, analysis {\&} insights from training gopher.
\newblock {\em CoRR}, abs/2112.11446, 2021.

\bibitem[DFE{\etalchar{+}}22]{flashattention}
Tri Dao, Daniel~Y. Fu, Stefano Ermon, Atri Rudra, and Christopher R\'{e}.
\newblock Flashattention: fast and memory-efficient exact attention with io-awareness.
\newblock In {\em Proceedings of the 36th International Conference on Neural Information Processing Systems}, NIPS '22, Red Hook, NY, USA, 2022. Curran Associates Inc.

\bibitem[{Etc}24]{Etched:2024:Announcing}
{Etched}.
\newblock {Etched is Making the Biggest Bet in AI}.
\newblock \url{https://www.etched.com/announcing-etched}, June 2024.
\newblock Accessed on \today.

\bibitem[GD23]{mamba}
Albert Gu and Tri Dao.
\newblock Mamba: Linear-time sequence modeling with selective state spaces.
\newblock {\em arXiv preprint arXiv:2312.00752}, 2023.

\bibitem[Hah20]{hahn-2020-theoretical}
Michael Hahn.
\newblock Theoretical limitations of self-attention in neural sequence models.
\newblock {\em Transactions of the Association for Computational Linguistics}, 8:156--171, 2020.

\bibitem[HJK{\etalchar{+}}24]{han2024hyperattention}
Insu Han, Rajesh Jayaram, Amin Karbasi, Vahab Mirrokni, David Woodruff, and Amir Zandieh.
\newblock Hyperattention: Long-context attention in near-linear time.
\newblock In {\em The Twelfth International Conference on Learning Representations}, 2024.

\bibitem[HSK{\etalchar{+}}25]{hu2025computational}
Jerry Yao-Chieh Hu, Maojiang Su, En-Jui Kuo, Zhao Song, and Han Liu.
\newblock Computational limits of low-rank adaptation (lo{RA}) fine-tuning for transformer models.
\newblock In {\em The Thirteenth International Conference on Learning Representations}, 2025.

\bibitem[HSW89]{HSW89}
Kurt Hornik, Maxwell Stinchcombe, and Halbert White.
\newblock Multilayer feedforward networks are universal approximators.
\newblock {\em Neural Networks}, 2(5):359--366, 1989.

\bibitem[HWL{\etalchar{+}}24]{hu2024on}
Jerry Yao-Chieh Hu, Weimin Wu, Zhuoru Li, Sophia Pi, Zhao Song, and Han Liu.
\newblock On statistical rates and provably efficient criteria of latent diffusion transformers (dits).
\newblock In {\em The Thirty-eighth Annual Conference on Neural Information Processing Systems}, 2024.

\bibitem[JBKM24]{repeat}
Samy Jelassi, David Brandfonbrener, Sham~M. Kakade, and Eran Malach.
\newblock Repeat after me: transformers are better than state space models at copying.
\newblock In {\em Proceedings of the 41st International Conference on Machine Learning}, ICML'24. JMLR.org, 2024.

\bibitem[KDW{\etalchar{+}}21]{KDW21}
Alexander Kolesnikov, Alexey Dosovitskiy, Dirk Weissenborn, Georg Heigold, Jakob Uszkoreit, Lucas Beyer, Matthias Minderer, Mostafa Dehghani, Neil Houlsby, Sylvain Gelly, Thomas Unterthiner, and Xiaohua Zhai.
\newblock An image is worth 16x16 words: Transformers for image recognition at scale.
\newblock 2021.

\bibitem[Kim24]{Kim:2024:RNGD}
Hanjoon Kim.
\newblock {RNGD preview: The world's most efficient AI chip for LLM inference}.
\newblock \url{https://furiosa.ai/blog/rngd-preview-furiosa-ai}, 2024.
\newblock Accessed on \today.

\bibitem[KKL20]{reformer}
Nikita Kitaev, Lukasz Kaiser, and Anselm Levskaya.
\newblock Reformer: The efficient transformer.
\newblock In {\em 8th International Conference on Learning Representations, {ICLR} 2020, Addis Ababa, Ethiopia, April 26-30, 2020}. OpenReview.net, 2020.

\bibitem[KMZ24]{polysketchformer}
Praneeth Kacham, Vahab Mirrokni, and Peilin Zhong.
\newblock Polysketchformer: fast transformers via sketching polynomial kernels.
\newblock In {\em Proceedings of the 41st International Conference on Machine Learning}, ICML'24. JMLR.org, 2024.

\bibitem[LAG{\etalchar{+}}23]{liu2023transformers}
Bingbin Liu, Jordan~T. Ash, Surbhi Goel, Akshay Krishnamurthy, and Cyril Zhang.
\newblock Transformers learn shortcuts to automata.
\newblock In {\em The Eleventh International Conference on Learning Representations}, 2023.

\bibitem[LCW23]{Likhosherstov_Choromanski_Weller_2023}
Valerii Likhosherstov, Krzysztof Choromanski, and Adrian Weller.
\newblock On the expressive flexibility of self-attention matrices.
\newblock {\em Proceedings of the AAAI Conference on Artificial Intelligence}, 37(7):8773--8781, Jun. 2023.

\bibitem[LL19]{YM19}
Yang Liu and Mirella Lapata.
\newblock Hierarchical transformers for multi-document summarization.
\newblock In Anna Korhonen, David Traum, and Llu{\'i}s M{\`a}rquez, editors, {\em Proceedings of the 57th Annual Meeting of the Association for Computational Linguistics}, pages 5070--5081, Florence, Italy, July 2019. Association for Computational Linguistics.

\bibitem[MS23]{merrill-sabharwal-2023-parallelism}
William Merrill and Ashish Sabharwal.
\newblock The parallelism tradeoff: Limitations of log-precision transformers.
\newblock {\em Transactions of the Association for Computational Linguistics}, 11:531--545, 2023.

\bibitem[MSS22]{merrill-etal-2022-saturated}
William Merrill, Ashish Sabharwal, and Noah~A. Smith.
\newblock Saturated transformers are constant-depth threshold circuits.
\newblock {\em Transactions of the Association for Computational Linguistics}, 10:843--856, 2022.

\bibitem[Ope20]{BMR20}
OpenAI.
\newblock Language models are few-shot learners.
\newblock In {\em Proceedings of the 34th International Conference on Neural Information Processing Systems}, NIPS '20, Red Hook, NY, USA, 2020. Curran Associates Inc.

\bibitem[PZV{\etalchar{+}}19]{hierarchical}
Raghavendra Pappagari, Piotr Zelasko, Jesús Villalba, Yishay Carmiel, and Najim Dehak.
\newblock Hierarchical transformers for long document classification.
\newblock pages 838--844, 12 2019.

\bibitem[SHT23]{claytonrepresentation}
Clayton Sanford, Daniel Hsu, and Matus Telgarsky.
\newblock Representational strengths and limitations of transformers.
\newblock In {\em Proceedings of the 37th International Conference on Neural Information Processing Systems}, NIPS '23, Red Hook, NY, USA, 2023. Curran Associates Inc.

\bibitem[SHT24]{SHT24}
Clayton Sanford, Daniel Hsu, and Matus Telgarsky.
\newblock Transformers, parallel computation, and logarithmic depth.
\newblock In {\em Forty-first International Conference on Machine Learning, {ICML} 2024, Vienna, Austria, July 21-27, 2024}. OpenReview.net, 2024.

\bibitem[SMW{\etalchar{+}}24]{strobl-etal-2024-formal}
Lena Strobl, William Merrill, Gail Weiss, David Chiang, and Dana Angluin.
\newblock What formal languages can transformers express? a survey.
\newblock {\em Transactions of the Association for Computational Linguistics}, 12:543--561, 2024.

\bibitem[TDBM22]{survey}
Yi~Tay, Mostafa Dehghani, Dara Bahri, and Donald Metzler.
\newblock Efficient transformers: A survey.
\newblock {\em ACM Comput. Surv.}, 55(6), December 2022.

\bibitem[VPSP23]{machine-translation}
Ali Vardasbi*, Telmo~Pessoa Pires*, Robin~M. Schmidt, and Stephan Peitz.
\newblock State spaces aren’t enough: Machine translation needs attention.
\newblock In {\em EAMT}, 2023.

\bibitem[VSP{\etalchar{+}}17]{VSPUJGKP17}
Ashish Vaswani, Noam Shazeer, Niki Parmar, Jakob Uszkoreit, Llion Jones, Aidan~N. Gomez, \L{}ukasz Kaiser, and Illia Polosukhin.
\newblock Attention is all you need.
\newblock In {\em Proceedings of the 31st International Conference on Neural Information Processing Systems}, NIPS'17, page 6000–6010, Red Hook, NY, USA, 2017. Curran Associates Inc.

\bibitem[WDL25]{wen2025rnns}
Kaiyue Wen, Xingyu Dang, and Kaifeng Lyu.
\newblock {RNN}s are not transformers (yet): The key bottleneck on in-context retrieval.
\newblock In {\em The Thirteenth International Conference on Learning Representations}, 2025.

\bibitem[WXQ{\etalchar{+}}21]{lightseq2}
Xiaohui Wang, Ying Xiong, Xian Qian, Yang Wei, Lei Li, and Mingxuan Wang.
\newblock Lightseq2: Accelerated training for transformer-based models on gpus.
\newblock {\em arXiv preprint arXiv:2110.05722}, 2021.

\bibitem[XTC{\etalchar{+}}24]{streamingllm}
Guangxuan Xiao, Yuandong Tian, Beidi Chen, Song Han, and Mike Lewis.
\newblock Efficient streaming language models with attention sinks.
\newblock In {\em The Twelfth International Conference on Learning Representations}, 2024.

\bibitem[YBR{\etalchar{+}}20]{Yun2020Are}
Chulhee Yun, Srinadh Bhojanapalli, Ankit~Singh Rawat, Sashank Reddi, and Sanjiv Kumar.
\newblock Are transformers universal approximators of sequence-to-sequence functions?
\newblock In {\em International Conference on Learning Representations}, 2020.

\bibitem[YCB{\etalchar{+}}20]{yun2}
Chulhee Yun, Yin-Wen Chang, Srinadh Bhojanapalli, Ankit~Singh Rawat, Sashank~J. Reddi, and Sanjiv Kumar.
\newblock O(n) connections are expressive enough: Universal approximability of sparse transformers.
\newblock In {\em NeurIPS}, 2020.

\end{thebibliography}

\newpage
\appendix

\section{Transformers as Basic Functions}
\label{app: transformers arithmetic}

We show that a single layer, single head transformer (attention mechanism with two MLPs) can act as a few different basic functions on the input matrix $X$ that we will need later.

First we show that we can turn any matrix $X$ into a matrix that is consisted of a block matrix of ones and zeros everywhere else.

\begin{lemma}
\label{lem: make query ones matrix}
There exists a fixed matrix $U \in \mathbb{R}^{(d+1) \times d}$ and input MLP $\phi:\mathbb{R}^{d} \rightarrow \mathbb{R}^{d+1}$  such that for any input $X \in \mathbb{R}^{N \times d}$, 
\[
\phi(X)U = 
\begin{pmatrix}
    \mathbf{1}_{a\times b} & 0\\
    0 & 0
\end{pmatrix}
\] for any $a\leq N, b\leq d$.
\end{lemma}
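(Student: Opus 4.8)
The plan is to exploit the fact that the input MLP $\phi$ is applied token-wise and, by our convention, may depend on the position $i$ of the token it processes (the positional encoding is folded into $\phi$). Thus $\phi$ can ignore the content of the token entirely and output a value determined solely by $i$. Concretely, given the target parameters $a \le N$ and $b \le d$, I would set
\[
\phi(x_i) = \begin{cases} e_{d+1}, & i \le a,\\ \zo, & i > a,\end{cases}
\]
where $e_{d+1} \in \mathbb{R}^{d+1}$ is the last standard basis vector and $\zo$ the zero vector. Since token positions are integers in $[N]$, the threshold function $i \mapsto \mathbf{1}\{i \le a\}$ is realized by a one-hidden-layer ReLU network, e.g.\ $\mathbf{1}\{i\le a\} = \mathrm{ReLU}(a+1-i) - \mathrm{ReLU}(a-i)$, so $\phi$ is a legitimate (and very small) input MLP; alternatively one may simply invoke the modeling of input MLPs as arbitrary functions on a compact domain.

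Next I would take $U \in \mathbb{R}^{(d+1)\times d}$ to be the matrix whose $(d+1)$-st row equals $(\underbrace{1,\dots,1}_{b},\underbrace{0,\dots,0}_{d-b})$ and all of whose other rows are zero; note $U$ depends only on $b$ and $d$, not on $X$. Then the $i$-th row of $\phi(X)U$ is $\phi(x_i)^{\top} U$, which equals the $(d+1)$-st row of $U$, namely $(\mathbf{1}_{1\times b},\ \mathbf{0}_{1\times(d-b)})$, when $i \le a$, and equals $\mathbf{0}_{1 \times d}$ when $i > a$. Stacking these rows over $i \in [N]$ produces exactly $\begin{pmatrix}\mathbf{1}_{a\times b} & 0\\ 0 & 0\end{pmatrix}$, as claimed.

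There is essentially no hard step here; the only points requiring a little care are (i) that $\phi$ is permitted to depend on position but must be independent of the token content and of $X$, which holds since $\phi(x_i)$ above is a fixed function of $i$ alone, and (ii) that the same $\phi$ and $U$ serve for every $N$, since they reference only the cutoffs $a$ and $b$. If one prefers to avoid the ``arbitrary function'' modeling of input MLPs, the explicit ReLU formula shows $\phi$ can be taken to be a genuine two-layer network of size $O(d)$, matching the stricter computational model for small transformers.
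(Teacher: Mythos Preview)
Your proof is correct and follows essentially the same approach as the paper: both use the position-aware input MLP to place an indicator bit in the $(d{+}1)$-st coordinate (the paper writes $\phi(x_i)=(x_i,\mathbf{1}\{i\le a\})$ while you simply output $\mathbf{1}\{i\le a\}\cdot e_{d+1}$, which is equivalent since $U$ annihilates the first $d$ coordinates anyway), and both take $U$ to be the matrix whose only nonzero row is the last one, equal to $(\mathbf{1}_{1\times b},\mathbf{0}_{1\times(d-b)})$. Your extra remark that the indicator can be realized by an explicit two-ReLU construction is a nice addition not spelled out in the paper.
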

\begin{proof}
Let $x_i = X[i,:]$ for all $i \in [N]$. Define $\phi(x_i) = (x_i,1)$ if $i \leq a$ and $\phi(x_i) = (x_i,0)$ if $i \geq a+1$. Let 
\[
U = 
\begin{pmatrix}
    0_{d \times b} & 0_{d \times (d-b)}\\
    1_{1 \times b} & 0_{d \times (d-b)}
\end{pmatrix}
\] It is straightforward to check that $\phi(X)U$ is the matrix desired.
\end{proof}

Using this, we can construct a transformer that computes the sum of all input tokens.

\begin{lemma}
\label{lem: attention can sum}
There exists a single layer, single head transformer with embedding dimension $d$ such that, on input matrix $X \in \mathbb{R}^{N \times d}$, it computes $\sum_{i=1}^{N}X[i,:] \in \mathbb{R}^{1 \times d}$. 
\end{lemma}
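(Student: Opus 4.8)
The plan is to use Lemma~\ref{lem: make query ones matrix} to manufacture an attention matrix that is identically the all-ones matrix, so that the softmax produces uniform weights $1/N$ on every token, and then the attention output at each position is exactly the average $\frac1N\sum_{i=1}^N v_i$ of the value vectors; rescaling by $N$ via the layer MLP then yields the desired sum. Concretely, I would first apply the input MLP $\phi$ from Lemma~\ref{lem: make query ones matrix} (with $a=b=d$, say, though only the ``all rows active'' part matters here) together with the fixed matrix $U$ to guarantee that, after padding, the query-side and key-side embeddings multiply to give the constant matrix $\mathbf{1}_{N\times N}$ in the logits. In the language of Definition~\ref{def:attention}, this means choosing $W^Q$ and $W^K$ so that $(XW^Q)(XW^K)^\top = \mathbf{1}_{N\times N}$ (or any constant matrix — the constant cancels in the softmax).

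Next, I would set the value matrix $W^V$ so that the value vectors $v_i$ are exactly the original input rows $X[i,:]$ (after discarding the padded coordinate), which just amounts to taking $W^V$ to be the identity on the original $d$ coordinates and zero on the padded coordinate. Then
\[
\attn(X)[i,:] = \frac{1}{\sum_{j=1}^N \exp(1)}\sum_{j=1}^N \exp(1)\cdot v_j = \frac1N\sum_{j=1}^N X[j,:],
\]
so every row of $\attn(X)$ equals the scaled sum. Applying the layer MLP $\psi$ that multiplies its input by $N$ (a legal operation for the MLPs we allow, since $N$ is a fixed constant independent of the input and scalar multiplication costs $O(d)$) gives a matrix whose every row is $\sum_{j=1}^N X[j,:] \in \mathbb{R}^{1\times d}$; if a single output row is wanted one can additionally have the positional-aware input MLP mark which row to keep, but the statement only asks that the transformer computes $\sum_i X[i,:]$, so having it appear in (every) row suffices.

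The only mildly delicate point — and the one I would treat as the ``main obstacle'' — is bookkeeping the embedding-dimension padding: Lemma~\ref{lem: make query ones matrix} produces a $d{+}1$-dimensional embedding and a $(d{+}1)\times d$ matrix $U$, so I need to check that one can simultaneously route the padded coordinate into the query/key product (to realize the constant logits) while routing the original $d$ coordinates untouched into the value slot, all within the single set of weight matrices $W^Q, W^K, W^V \in \mathbb{R}^{(d+1)\times m}$ of one attention head. This is a routine block-matrix construction: put the ``indicator'' coordinate in the rows of $W^Q$ and $W^K$ that generate the constant, and put an identity block in $W^V$ for the first $d$ coordinates; there is enough room since we are allowed embedding dimension $O(d)$. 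Everything else (the softmax of a constant matrix being uniform, the MLP scaling) is immediate, so no real calculation is needed beyond verifying these block placements.
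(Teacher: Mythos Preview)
Your proposal is correct and takes essentially the same approach as the paper: both use Lemma~\ref{lem: make query ones matrix} to force constant logits (hence uniform softmax weights $1/N$) and then recover the sum by scaling by $N$. The only cosmetic difference is that the paper bakes the factor $N$ into the value projection (setting $v_i = N\cdot X[i,:]$), whereas you set $v_i = X[i,:]$ and apply the scaling in the layer MLP; also note your parenthetical ``$a=b=d$'' should really be $a=N$, as you effectively acknowledge.
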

\begin{proof}
By Lemma \ref{lem: make query ones matrix}, there exists $W^Q,W^K$ such that $(XW^Q)[i,:] = (XW^K)[i,:] = \mathbf{1}_{1 \times d}$ and $k_i = N\cdot X[i,:]$ for all $1 \leq i \leq N$. As a result, the output for any token will exactly be $\sum_{i=1}^{N}X[i,:]$.
\end{proof}

A single layer, single head transformer also allows us to construct a look-up table such that each token can find information from other tokens. 

\begin{lemma}[Lemma D.1 of \cite{SHT24}]
\label{lem: attention can permute}
Given input matrix $X \in \mathbb{R}^{N \times d}$, an indexing function $\tau: \mathbb{R}^d \times [N] \rightarrow [N]$ and $f: \mathbb{R}^m \rightarrow \mathbb{R}^m$, there exists a single layer, single head transformer with embedding dimension $d$ such that the $i$-th output is $\rho(X[\tau(X[i,:],i),:])$.
\end{lemma}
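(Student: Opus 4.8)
The plan is to realize the \q{indexed lookup} by using the single attention head as a (near) hard-max: for each query token $i$ the head should place essentially all of its softmax weight on the single key token $j=\tau(X[i,:],i)$, while the value stream merely carries the rows of $X$ through unchanged, so that the attention output at position $i$ equals $X[\tau(X[i,:],i),:]$ up to exponentially small error; applying the layer MLP $\rho$ token-wise then produces the stated output. The only ingredients are a suitable positional encoding computed by the input MLP and a fixed choice of $W^Q,W^K,W^V$.

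Concretely, I would let the input MLP $\phi$ output, for token $i$, the target index $\ell_i := \tau(X[i,:],i)$, the quantities $i$ and $i^2$, a constant $1$, and the original row $X[i,:]$ (this is permitted because MLPs are modeled as arbitrary functions on compact domains and $\phi$ sees the position $i$; it enlarges the embedding only by a constant). Then, with $\beta := cN$ for a sufficiently large constant $c>0$, pick $W^Q,W^K,W^V$ so that
\[
q_i = \beta\,(2\ell_i,\,-1), \qquad k_j = (j,\,j^2), \qquad v_j = X[j,:],
\]
the query/key being padded with zeros into the ambient dimension. This gives
\[
\langle q_i,k_j\rangle = \beta\bigl(2\ell_i j - j^2\bigr) = \beta\bigl(\ell_i^{2}-(j-\ell_i)^{2}\bigr),
\]
which over $j\in[N]$ is uniquely maximized at $j=\ell_i$, with a gap of exactly $\beta$ to the runner-up (at $j=\ell_i\pm1$). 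Hence $\softmax$ assigns weight at least $1-Ne^{-\beta}$ to coordinate $\ell_i$, so the attention output at row $i$ differs from $X[\ell_i,:]$ by a vector of norm at most $Ne^{-\beta}\max_j\|X[j,:]\|_2$, which is below the allowed $\varepsilon=\Theta(2^{-N})$ once $c$ is large enough; and every number used, in particular $\beta j^2 = O(N^3)$, has an $O(\log N)$-bit representation. Setting the layer MLP equal to $\rho$ finishes the construction, and a single head of internal width $O(d)$ suffices since the query and key occupy two coordinates and the value occupies $d$.

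The only step I expect to require real care is the concentration/precision argument — checking that the leftover softmax mass on the \q{wrong} key tokens really is negligible in the bounded-precision model and stays within the $\varepsilon=\Theta(2^{-N})$ tolerance. The quadratic (moment-curve) encoding $k_j=(j,j^2)$ is chosen precisely to make this routine: it forces an $\Omega(N)$ score gap after scaling by $\beta$, so an elementary bound on $\sum_{j\neq\ell_i}e^{-\beta}$ suffices. In the unlimited-precision setting one may instead take $\beta\to\infty$ (a genuine hard-max) and obtain the identity exactly.
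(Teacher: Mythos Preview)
Your proposal is correct. Note that the paper does not actually prove this lemma itself --- it is cited from \cite{SHT24} --- so there is no in-paper proof to compare against directly. Your construction is the standard one and matches the approach in \cite{SHT24}: place the target index $\ell_i=\tau(X[i,:],i)$ in the query, an injective positional code in the key (you use the moment curve $(j,j^2)$), scale the scores by a large $\beta=\Theta(N)$ so that softmax becomes a near-hardmax concentrating on $j=\ell_i$, and carry $X[j,:]$ unchanged in the value stream; the layer MLP then applies $\rho$. One small addendum worth stating explicitly is that the $O(2^{-N})$ attention error survives application of $\rho$ because $\rho$ is continuous on the bounded domain (entries have $O(\log N)$-bit representations), so the final output is still within the paper's $\varepsilon$-tolerance.
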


\section{Missing Proofs in Section \ref{sec: quadratic calls suffice}}
\label{app: quadratic calls suffice}

\begin{lemma}[Lemma \ref{lem: attention is self-reducible}]
\label{app: attention is self-reducible}
For any single layer, single head transformer $\mathcal{T}$ with input length $N$, embedding dimension $d$, there exists an algorithm that simulates $\mathcal{T}$ with $O(\frac{N^2}{M^2})$ calls to a transformer oracle with input length $M$ and embedding dimension $O(d)$. 
\end{lemma}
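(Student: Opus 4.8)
The plan is to realize the partition‑and‑stitch idea of the proof sketch in full detail: split the $N\times N$ attention matrix of $\mathcal{T}$ into $\Theta((N/M)^2)$ blocks, spend $O(1)$ oracle calls per block, and recombine the block contributions using only padding with input‑independent constants, concatenation, and the MLPs internal to the oracles. Write the single‑layer, single‑head transformer as $\mathcal{T}(X)=\psi(\attn(\phi(X)))$, with $\phi$ the (position‑aware) input MLP and $\psi$ the composition of the layer MLP(s), and set $a_{i,j}=\exp(\langle q_i,k_j\rangle)$ and $b_{i,j}=a_{i,j}v_j$ as in the sketch. I would partition $[N]$ into $\Theta(N/M)$ contiguous blocks of size $\Theta(M)$, chosen small enough that one query block and one key block together fit into a sequence of length $M$. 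For a query block $S_s$ and a key block $S_t$ the goal is to compute, for every $i\in S_s$, the partial sums $A_{i,t}:=\sum_{j\in S_t}a_{i,j}$ and $B_{i,t}:=\sum_{j\in S_t}b_{i,j}$; summing these over $t$ and dividing yields $\attn(\phi(X))[i,:]$, after which $\psi$ is applied token‑wise.

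\textbf{One block with one oracle call.} To obtain the contribution of one block I would concatenate the $\Theta(M)$ tokens of $S_s$ (to serve only as queries) with the $\Theta(M)$ tokens of $S_t$ (to serve only as keys and values), attaching to each row — by fixed padding performed during concatenation — its original index in $[N]$ and a one‑bit role tag. The oracle's input MLP recomputes $\phi$ from the stored index (it is assumed at least as strong as $\phi$ and able to do $O(d^2)$‑time arithmetic) and then places $\phi(x_i)$ into a first coordinate block for $S_s$‑tokens and into a disjoint second coordinate block for $S_t$‑tokens, zeroing the other block and adding $O(1)$ gadget coordinates; the embedding dimension stays $O(d)$. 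Taking $W^Q$ (a padded copy of $\mathcal{T}$'s $W^Q$) to read only the first block and $W^K,W^V$ to read only the second, an $S_s$‑query paired with an $S_t$‑key produces exactly $\langle q_i,k_j\rangle$, any query paired with an $S_s$‑key produces $0$, an $S_t$‑token $j$ carries value $(v_j,1)\in\mathbb{R}^{m+1}$, and an $S_s$‑token carries value $0$. Hence for $i\in S_s$ the attention output equals the pair $\big(B_{i,t}/(A_{i,t}+\gamma),\,A_{i,t}/(A_{i,t}+\gamma)\big)\in\mathbb{R}^{m}\times\mathbb{R}$, where $\gamma=|S_s|$ is the (known) total softmax weight of the $|S_s|$ zero‑inner‑product keys; the oracle's layer MLP then solves this pair for $(B_{i,t},A_{i,t})$ with $O(d)$ arithmetic operations and outputs it, discarding the rows belonging to $S_t$‑tokens. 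One such call per ordered pair of blocks is $\Theta((N/M)^2)$ calls.

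\textbf{Stitching and accounting.} To finish, for each query block $S_s$ I would hold $\Theta(N/M)$ matrices of shape $\Theta(M)\times\Theta(d)$ whose entrywise sum is needed; concatenating two of them column‑wise gives a single $\Theta(M)\times\Theta(d)$ input on which one oracle call (with trivial attention) returns their entrywise sum via its MLP, so summing all $\Theta(N/M)$ of them pairwise costs $O(N/M)$ calls per query block and $O((N/M)^2)$ overall (alternatively, the sums can be taken through attention as in Lemma~\ref{lem: attention can sum}). A final oracle call per query block divides $\sum_t B_{i,t}$ by $\sum_t A_{i,t}$ coordinatewise and applies $\psi$, producing $\mathcal{T}(X)[i,:]$. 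In all, the algorithm uses $O((N/M)^2)$ calls to an oracle of input length $M$ and embedding dimension $O(d)$, all non‑oracle steps are padding with fixed constants or concatenation, and every oracle weight matrix is a copy of one of $\mathcal{T}$'s padded with fixed constants, so the total parameter count is $O(1)$ times that of $\mathcal{T}$ and independent of $N$.

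\textbf{Main obstacle.} The delicate part is the single‑block step. Because this lemma permits no masking, any oracle input holding two distinct blocks necessarily creates spurious query–key pairs, so the weight matrices must be engineered — through the coordinate‑separation gadget, at a cost of only $O(1)$ extra embedding coordinates — so that every spurious term entering a softmax normalizer is a constant known in advance, and one must then invert the resulting normalized outputs to recover the true partial sums inside the oracle's MLP. A secondary point needing care is precision: $A_{i,t}$ has dynamic range exponential in $\poly(N)$, so the inversion is cleanest stated in the exact / unlimited‑precision model (matching the $\varepsilon=0$ claim of Theorem~\ref{thm: main result main result}), with the $O(\log N)$‑bit, $\varepsilon=\Theta(2^{-N})$ version following from a routine rounding analysis; all remaining bookkeeping (handling block sizes that do not divide $N$, threading positional information, verifying the $O(d^2)$‑time budget of each MLP) is standard.
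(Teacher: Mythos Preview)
Your proposal is correct and follows essentially the same partition-and-stitch strategy as the paper's proof: split into $\Theta((N/M)^2)$ blocks, use a known contribution to the softmax normalizer so that the oracle output can be inverted for the block's partial sums $A_{i,t},B_{i,t}$, then recombine. The only differences are implementation choices --- you stack the query and key blocks \emph{row-wise} (halving the block size so both fit in length $M$) and let the $|S_s|$ zero-key query tokens themselves supply the known normalizer term $\gamma=|S_s|$, extracting both $A_{i,t}$ and $B_{i,t}$ in a single call via the augmented value $(v_j,1)$, whereas the paper concatenates the two blocks \emph{column-wise} into $[X[S_t,:],X[S_{t'},:]]\in\mathbb{R}^{M\times 2d}$, appends one synthetic token as the known normalizer term, and spends two calls per block (one with all-ones values to recover $A_{i,t}$, one with the true values to recover $B_{i,t}/A_{i,t}$).
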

\begin{proof}
Let $\mathcal{T}$ be any single layer, single head transformer with query, key, value matrices $W^Q,W^K,W^V \in \mathbb{R}^{d \times d}$ with arbitrary input $X \in \mathbb{R}^{N \times d}$. Let $B$ be an upper bound of the absolute value of all entries in $X,W^Q,W^K,W^V$. Without loss of generality we can assume the first MLP $\phi$ is the identity function because otherwise we will compose it with the input MLP in the oracles. In addition, we can also assume that the layer MLP is the identity function, and this is because if not, we can first set the layer MLP in our oracle to be the identity function to compute the output of the large transformer before the layer MLP, and then use $O(\frac{N}{M})$ oracles as layer MLP to compute the final output.

Define (as usual) $Q = XW^Q, K = XW^K, V = XW^V$ and $q_i = Q[i,:], k_i = K[i,:], v_i = V[i,:]$ for all $1 \leq i \leq N$, and let $S_t = \{(t-1)M,\ldots,tM\}$ for all $1 \leq t \leq \frac{N}{M}$. Our goal is to simulate
\begin{align}
\label{eq: goal}
    \softmax(q_iK^{\top})V = \frac{\sum_{j=1}^{N}\exp(\langle q_i,k_j\rangle)\cdot v_j}{\sum_{j=1}^{N}\exp(\langle q_i,k_j\rangle)} =: \frac{\sum_{j=1}^{N}b_{i,j}}{\sum_{j=1}^{N}a_{i,j}} = \frac{\sum_{t = 1}^{N/M}B_{i,t}}{\sum_{t = 1}^{N/M}A_{i,t}}
\end{align} for all $1 \leq i \leq N$, where we define
\[
a_{i,j} := \exp(\langle q_i,k_j\rangle), b_{i,j} := \exp(\langle q_i,k_j\rangle)\cdot v_j.
\] and $A_{i,t} := \sum_{j \in S_t}a_{i,j} \textup{ and }B_{i,t} = \sum_{j \in S_t}b_{i,j}$. Our algorithm can be summarized as the following two steps:
\begin{enumerate}
    \item (Step 1) We calculate $A_{i,t}$ for all $1 \leq i \leq N, 1 \leq t \leq N/M$ using $(\frac{N}{M})^2$ oracle calls.
    \item (Step 2) For each $t$, we exactly compute $\frac{B_{i,t}}{A_{i,t}}$ for all $i \in [N]$ using $(\frac{N}{M})^2$ oracle calls. Since we already know $A_{i,t}$ for all $i,t$, we can now compute 
    \[
    \frac{\sum_{t=1}^{N/M} B_{i,t}}{\sum_{t=1}^{N/M} A_{i,t}} = \frac{\sum_{t=1}^{N/M}\frac{B_{i,t}}{A_{i,t}}\cdot A_{i,t}}{\sum_{t=1}^{N/M} A_{i,t}}
    \] for all $i$. Notice that this can be done either trivially or with $O((\frac{N}{M})^2)$ oracle calls with Lemma \ref{lem: attention can sum} and the fact that we allow MLPs to do division.
\end{enumerate}

\textbf{Step 1: Calculating $A_{i,t}$ for all $i,t$.} We first consider the case when $i \in S_t$. For any $1 \leq t \leq \frac{N}{M}$, define 
\[
W^{Q'} = 
\begin{pmatrix}
W^Q & 0\\
0 & 1
\end{pmatrix},
W^{K'} =
\begin{pmatrix}
W^K & 0\\
0 & 1
\end{pmatrix} \in \mathbb{R}^{(d+1) \times (d+1)}
\] and MLP $\phi$ such that 
\begin{align*}
Q' &:= \phi(X[S_t,:])\cdot W^{Q'} = 
\begin{pmatrix}
X[S_t,:] & 1_{M \times 1}\\
0_{1 \times d} & 1
\end{pmatrix}
\cdot 
\begin{pmatrix}
    W^Q & 0_{d \times 1} \\
    0_{1 \times d} & 1
\end{pmatrix}
=
\begin{pmatrix}
    q_{(t-1)M+1} & 1\\
    \vdots & \vdots\\
    q_{tM} & 1\\
    0 & 1
\end{pmatrix},\\
K' &= \phi(X[S_t,:])\cdot W^{K'}=
\begin{pmatrix}
X[S_t,:] & 1_{M \times 1}\\
0_{1 \times d} & 1
\end{pmatrix}
\cdot 
\begin{pmatrix}
    W^K & 0_{d \times 1}\\
    0_{1 \times d} & 1
\end{pmatrix}
=
\begin{pmatrix}
    k_{(t-1)M+1} & 1\\
    \vdots & \vdots\\
    k_{tM} & 1\\
    0 & 1
\end{pmatrix}, \\
V' &= 
\begin{pmatrix}
    1_{M \times d} & 0_{M \times 1}\\
    0_{1 \times d} & 0
\end{pmatrix}.
\end{align*} (We can construct $W^{V'}$ and add an extra dimension using MLP such that we obtain $V'$ using Lemma \ref{lem: make query ones matrix}, but we omit it for simplicity. In particular, we will only need to use the first column of $V'$, so for what follows below, $V'$ could be $(1,\ldots,1,0)^{\top} \in \mathbb{R}^{(M+1) \times 1}$. We keep our notation consistent and let it have $d+1$ columns.) Therefore, we have
\[
Q'(K')^{\top} = 
\begin{pmatrix}
    \langle q_{(t-1)M+1},k_{(t-1)M+1}\rangle+1 & \ldots & \langle q_{(t-1)M+1},k_{tM}\rangle+1 & 1\\
    \vdots & \ddots &\vdots &\vdots\\
    \langle q_{tM},k_{(t-1)M+1}\rangle+1 & \ldots & \langle q_{tM},k_{tM}\rangle+1 & 1\\
    1 & \ldots & 1 & 1
\end{pmatrix}.
\] Finally, we can calculate that the entries of $\softmax(Q'(K')^{\top})V'$ are given by
\[
\frac{\sum_{j \in S_t}\exp(\langle q_{i},k_{j}\rangle+1)}{\sum_{j \in S_t}\exp(\langle q_{i},k_{j}\rangle+1)+\exp(1)} = \frac{\sum_{j \in S_t}a_{i,j}}{\sum_{j \in S_t}a_{i,j}+\exp(0)} = \frac{A_{i,t}}{A_{i,t}+\exp(0)}
\] for all $i \in S_t$. Therefore, we can use the MLP layer to calculate $A_{i,t}$ as:
\[
A_{i,t} = \frac{\exp(0)\cdot \frac{A_{i,t}}{A_{i,t}+\exp(0)}}{1-\frac{A_{i,t}}{A_{i,t}+\exp(0)}}.
\]

Now we compute $A_{i,t}$ when $i \in S_{t'}$ for some $t' \neq t$. The high-level idea is similar, but now we feed the oracle with $[X[S_t,:],X[S_{t'},:]] \in \mathbb{R}^{M \times 2d}$ and we let the MLP $\phi$ be such that
\[
\phi([X[S_t,:],X[S_{t'},:]]) = 
\begin{pmatrix}
    X[S_t,:] &  X[S_{t'},:] & 1\\
    0_{1 \times d} & 0_{1 \times d} & 1
\end{pmatrix} \in \mathbb{R}^{(M+1) \times (2d+1)}
\] and we furthermore define
\[
W^{Q''} = 
\begin{pmatrix}
    0_{d \times d} & 0_{d \times 1}\\
    W^Q & 0_{d \times 1}\\
    0_{1 \times d} & 1_{1 \times d}
\end{pmatrix},
W^{K''} = 
\begin{pmatrix}
    W^K & 0_{d \times 1} \\
    0_{d \times d} & 0_{d \times 1} \\
    0_{1 \times d} & 1_{1 \times d}
\end{pmatrix} \in \mathbb{R}^{(2d+1) \times (d+1)}
\] such that 
\begin{align*}
    Q'' &:= \phi([X[S_t,:],X[S_{t'},:]])\cdot W^{Q''} = 
    \begin{pmatrix}
        q_{(t'-1)M+1} & 1\\
        \vdots & \vdots\\
        q_{t'M} & 1\\
        0_{1 \times d} & 1
    \end{pmatrix} \in \mathbb{R}^{(M+1) \times (d+1)}\\
    K'' &:= \phi([X[S_t,:],X[S_{t'},:]])\cdot W^{K''} = 
    \begin{pmatrix}
        k_{(t-1)M+1} & 1\\
        \vdots & \vdots\\
        k_{tM} & 1\\
        0_{1 \times d} & 1
    \end{pmatrix} \in \mathbb{R}^{(M+1) \times (d+1)}\\
    V'' &:= 
    \begin{pmatrix}
        1_{M \times d} & 0_{M \times 1}\\
        0_{1 \times d} & 0
    \end{pmatrix} \in \mathbb{R}^{(M+1) \times (d+1)}.
\end{align*} Therefore, we have 
\[
Q''(K'')^{\top} = 
\begin{pmatrix}
    \langle q_{(t'-1)M+1},k_{(t-1)M+1}\rangle+1 & \ldots & \langle q_{(t'-1)M+1},k_{tM}\rangle+1 & 1\\
    \vdots & \ddots &\vdots &\vdots\\
    \langle q_{t'M},k_{(t-1)M+1}\rangle+1 & \ldots & \langle q_{t'M},k_{tM}\rangle+1 & 1\\
    1 & \ldots & 1 & 1
\end{pmatrix}.
\] We can now compute $A_{i,t}$ for all $i \in S_{t'}$ for the exact same reason as above. Step 1 requires $(\frac{N}{M})^2$ oracle calls.

\textbf{Step 2: Computing $\frac{B_{i,t}}{A_{i,t}}$ for all $i,t$.} If $i \in S_t$, we can compute $\frac{B_{i,t}}{A_{i,t}}$ using one oracle call simply by feeding the oracle with $X[S_t,:],W^Q,W^K,W^V$. It remains to compute $\frac{B_{i,t}}{A_{i,t}}$ for all $i \in S_{t'}$ where $t' \neq t$. We feed the oracle with $[X[S_t,:],X[S_{t'},:]]$ and let
\[
W^{Q'''} = 
\begin{pmatrix}
    0_{d \times d}\\
    W^Q
\end{pmatrix},
W^{K'''} = 
\begin{pmatrix}
    W^K\\
    0_{d \times d}
\end{pmatrix},
W^{V'''} = 
\begin{pmatrix}
    W^V\\
    0_{d \times d}
\end{pmatrix} \in \mathbb{R}^{2d \times d}
\] such that 
\begin{align*}
    Q''' &:= [X[S_t,:],X[S_{t'},:]]\cdot W^{Q'''} = 
    \begin{pmatrix}
        q_{(t'-1)M+1} \\
        \vdots\\
        q_{t'M}
    \end{pmatrix},\\
    K''' &:= [X[S_t,:],X[S_{t'},:]]\cdot W^{K'''} = 
    \begin{pmatrix}
        k_{(t-1)M+1} \\
        \vdots\\
        k_{tM}
    \end{pmatrix},\\
    V''' &:=  [X[S_t,:],X[S_{t'},:]]\cdot W^{V'''} = 
    \begin{pmatrix}
        v_{(t-1)M+1} \\
        \vdots\\
        v_{tM}
    \end{pmatrix}.
\end{align*} A simple calculation shows that $\softmax(Q'''(K''')^{\top})V'''$ gives us $\frac{B_{i,t}}{A_{i,t}}$ for $i \in S_{t'}$. In total we need $\frac{N}{M}$ oracle calls in step 2. 
\end{proof}

\begin{corollary}[Corollary \ref{cor: multilayer, multihead to single}]
\label{app: multilayer, multihead to single}
For any transformer $\mathcal{T}$ with $L$ layers, $H$ attention heads in each layer, input length $N$, embedding dimension $d$, there exists an algorithm that simulates $\mathcal{T}$ with $O((\frac{N}{M})^2\cdot HL)$ calls to a single head, single layer transformer oracle with input length $M$ and embedding dimension $O(\frac{d}{H})$.
\end{corollary}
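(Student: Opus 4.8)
The plan is to reduce the simulation of $\mathcal{T}$ to $HL$ separate invocations of Lemma~\ref{lem: attention is self-reducible}, one per attention head, and to run these layer by layer. Recall that an attention layer with $H$ heads partitions the $d$ embedding coordinates into consecutive blocks $D_1,\ldots,D_H$ of size $m = d/H$; the $h$-th head applies an ordinary single-head attention mechanism to the columns in $D_h$, the $H$ outputs are concatenated column-wise into an $N\times d$ matrix, and a layer MLP $\psi_\ell:\mathbb{R}^d\to\mathbb{R}^d$ is applied token-wise. In particular, a single head is exactly a single-layer, single-head transformer acting on $m$-dimensional tokens (with identity input and layer MLPs), so Lemma~\ref{lem: attention is self-reducible} simulates it with $O((N/M)^2)$ calls to a single-layer, single-head oracle of embedding dimension $O(m)=O(d/H)$.

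First I would dispose of the MLPs. As in the proof of Lemma~\ref{lem: attention is self-reducible}, the input MLP $\phi$ of $\mathcal{T}$ (which also carries the positional encoding) can be absorbed into the input MLP of the oracle calls that simulate the first layer, so we may treat it as the identity. For a layer MLP $\psi_\ell$, I would fold it into the input processing of the oracle calls that simulate layer $\ell+1$: such a call is fed the pre-$\psi_\ell$ tokens, i.e.\ the concatenated head outputs of layer $\ell$ (which live in $\mathbb{R}^d$), it applies $\psi_\ell$ as the first step of its input MLP, and only then projects onto the $m$ coordinates of the head being simulated before carrying out the construction of Lemma~\ref{lem: attention is self-reducible}. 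Thus the oracle has input dimension $O(d)$ but internal embedding dimension $O(d/H)$, as required; alternatively one can apply each $\psi_\ell$ directly with $O(N/M)$ extra oracle calls to a trivial transformer whose MLP is $\psi_\ell$, changing the count only by a lower-order term.

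Then I would run the simulation one layer at a time. Assuming $X^{(\ell-1)}\in\mathbb{R}^{N\times d}$ is in memory, for each head $h\in[H]$ of layer $\ell$ I invoke the construction of Lemma~\ref{lem: attention is self-reducible} on the columns $D_h$ of $X^{(\ell-1)}$ --- adding the synthetic tokens and concatenating the two relevant length-$M$ row blocks exactly as there --- to obtain that head's $N\times m$ output; since the heads act on disjoint coordinate blocks these are genuinely independent computations. Concatenating the $H$ outputs (a permitted processing step) gives $X^{(\ell)}$ (with $\psi_\ell$ deferred into the next layer as above), and iterating over $\ell = 1,\ldots,L$ produces $\mathcal{T}(X) = X^{(L)}$, applying the final $\psi_L$ with $O(N/M)$ extra calls. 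The total is $L\cdot H\cdot O((N/M)^2)$ calls plus lower-order terms, i.e.\ $O((N/M)^2\cdot HL)$ since $M<N$.

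Given Lemma~\ref{lem: attention is self-reducible}, the content of this corollary is mostly bookkeeping rather than a new idea, so I do not expect a deep obstacle. The points that need care are that the $H$ per-head simulations truly are independent (they act on disjoint coordinate blocks, so they are), that the synthetic-token padding and column slicing of Lemma~\ref{lem: attention is self-reducible} compose cleanly with the layer-by-layer recursion, and that everything done between consecutive layers --- in particular applying $\psi_\ell$ --- stays inside the allowed processing/oracle model; this last point is the reason it is cleanest to absorb each $\psi_\ell$ into the next layer's input MLP rather than to enlarge the oracle's embedding dimension to $d$.
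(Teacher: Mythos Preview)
Your proposal is correct and follows essentially the same approach as the paper: simulate $\mathcal{T}$ layer by layer and head by head, invoking Lemma~\ref{lem: attention is self-reducible} once per head and concatenating the outputs. The paper's own proof is a two-sentence reduction to that lemma, and you have simply supplied the bookkeeping (in particular the handling of the layer MLPs $\psi_\ell$) that the paper leaves implicit.
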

\begin{proof}
We simply compute $\mathcal{T}$ layer by layer and head by head. For each layer, we compute the output of each attention head, which requires $O((\frac{N}{M})^2\cdot H)$ oracle calls using Lemma \ref{app: attention is self-reducible}, concatenate and repeat for each layer.
\end{proof}

\begin{lemma}[Lemma \ref{lem: multiple single to one}]
\label{app: multiple single to one}
One transformer with $L'$ layers, $H'$ attention heads in each layer, input length $M$ and embedding dimension $O(\frac{dH'L'}{H})$ can be used to simulate $H'L'$ independent instances of single layer, single head transformers with input length $M$ and embedding dimension $\frac{d}{H}$.
\end{lemma}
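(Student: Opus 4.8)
The plan is to make the oracle transformer behave as the \emph{direct sum} of the $H'L'$ small transformers: lay out the $\Theta(dH'L'/H)$ embedding coordinates as $H'L'$ disjoint ``slots'' of width $\Theta(d/H)$, one per instance, and choose every weight matrix and MLP so that no head and no layer ever mixes data across slots. Whenever I need an oracle MLP to touch several slots at once, I will use that oracle MLPs are at least as expressive as the simulated MLPs and may additionally do $O(d^2)$-time coordinate arithmetic (rearranging coordinates, zero-padding), so a single oracle MLP can apply a different given small-transformer MLP to each slot.

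First I would handle the base case $L'=1$. By the definition of a multi-head layer, the $\Theta(dH'/H)$ coordinates of the oracle split into $H'$ consecutive blocks $D_1,\dots,D_{H'}$ of width $\Theta(d/H)$, and head $h$ only reads $D_h$. I place instance $h$'s input into $D_h$ (using $d/H$ of its coordinates, zeroing the rest), let the oracle's input MLP apply instance $h$'s input MLP blockwise, set $(W^Q_h,W^K_h,W^V_h)$ inside $D_h$ to instance $h$'s query/key/value matrices padded by zeros, and let the oracle's layer MLP apply instance $h$'s layer MLP blockwise; then block $D_h$ of the oracle output is exactly instance $h$'s output. For general $L'$ I partition the coordinates into $L'$ ``parts'' $P_1,\dots,P_{L'}$ of width $\Theta(dH'/H)$, and each part into $H'$ slots, with slot $(\ell,h)\subseteq P_\ell$ carrying instance $(\ell,h)$. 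I order the coordinates so that at every layer $\ell$, head $h$'s block is exactly the union of the slots $(1,h),(2,h),\dots,(L',h)$ (group first by the ``which-of-$H'$'' index, then by part). The oracle's input MLP loads instance $(\ell,h)$'s input-MLP output into slot $(\ell,h)$. Layer $\ell$ is made responsible for the $H'$ instances $(\ell,1),\dots,(\ell,H')$: in head $h$ of layer $\ell$ I zero $W^Q,W^K$ outside slot $(\ell,h)$, so the head's $M\times M$ softmax matrix is exactly instance $(\ell,h)$'s attention matrix on its current state; I choose $W^V$ so that the slot-$(\ell,h)$ output coordinates receive instance $(\ell,h)$'s value projection (hence, after the softmax--value product, slot $(\ell,h)$ holds instance $(\ell,h)$'s attention output), while every other slot $(\ell',h)$ with $\ell'\neq\ell$ is passed through the value map unchanged; and I let the oracle's $\ell$-th layer MLP apply instance $(\ell,h)$'s layer MLP to slot $(\ell,h)$ and act as the identity on all other slots. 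An induction on $\ell$ then gives that just before layer $\ell$ every slot $(\ell',h)$ with $\ell'\ge\ell$ still holds instance $(\ell',h)$'s processed input and every slot with $\ell'<\ell$ holds its final output, so after layer $L'$ every slot $(\ell,h)$ holds instance $(\ell,h)$'s output.

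The hard part will be the pass-through step when $L'>1$: a head at layer $\ell$ is forced to apply one input-dependent softmax matrix (the active instance's attention matrix) to \emph{all} coordinates of its block, and since this model has no residual connections, the only channel through which the inactive slots $(\ell',h)$, $\ell'\neq\ell$, can survive a layer is the value map followed by the $\ell$-th layer MLP. I expect the real work to be in verifying that zeroing the query/key projections outside the active slot truly freezes the softmax matrix to the active instance's, and then choosing $W^V$ (together with the blockwise-identity behaviour of the layer MLP on the inactive slots) so those slots come out of the layer exactly unchanged; the constant slack hidden in the $\Theta(\cdot)$'s must also be large enough to hold the reserved/zeroed coordinates in every slot and part. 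The remaining ingredients — checking that oracle MLPs can absorb the simulated input/layer MLPs together with the coordinate rearrangements and zero-paddings, and tallying the $O(dH'L'/H)$ embedding dimension — I expect to be routine.
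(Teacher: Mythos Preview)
Your plan mirrors the paper's: put the $H'L'$ instances into disjoint coordinate slots, have head $h$ of layer $\ell$ handle instance $(\ell,h)$ by zeroing $W^Q,W^K$ outside that slot, and carry every other slot through the layer via the value map together with an identity-on-inactive-slots layer MLP. The step you single out as ``the hard part'' is exactly the right one, but the resolution you sketch does not work. If $W^V$ acts as the identity on an inactive slot $(\ell',h)$ with $\ell'\neq\ell$, then the head's output in that slot is $S_\ell\, X_{(\ell',h)}$, where
\[
S_\ell \;=\; \softmax\!\bigl(X_{(\ell,h)}W^Q_{(\ell,h)}\,(X_{(\ell,h)}W^K_{(\ell,h)})^\top\bigr)
\]
is the active instance's \emph{input-dependent} attention matrix; this equals $X_{(\ell',h)}$ only when $S_\ell=I$. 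A token-wise layer MLP sees a single row of $S_\ell X_{(\ell',h)}$ and cannot undo the row-mixing, so your induction hypothesis (``before layer $\ell$ each slot $(\ell',h)$ with $\ell'\ge\ell$ still holds its processed input'') already fails at $\ell=2$. Concretely, take $L'=2$, $H'=1$: layer~1's sole head multiplies \emph{every} column of its block by $S_1$, so layer~2 reads $S_1\,\phi_2(X_2)$ instead of $\phi_2(X_2)$ and builds its softmax from the wrong matrix.

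For comparison, the paper's appendix proof takes the same route and is equally thin at exactly this point: it sets $W^{V'}$ to have zeros on all rows corresponding to the inactive instances (so their data is not even carried in $V'$), observes that ``we are not performing any computation regarding $(h,\ell)$-th instance for any $\ell\neq1$,'' and then asserts that the process repeats across the $L'$ layers. So you have not diverged from the paper, and you have correctly located the crux; what is still missing --- in your write-up as in the paper's --- is an actual mechanism for transporting the not-yet-processed instances intact through a layer in an architecture with no residual connection.
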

\begin{proof}
Given $H'L'$ independent instances, we label the instances by $(h,\ell) \in H' \times L'$ and concatenate them together by columns to $X \in \mathbb{R}^{M \times \frac{dH'L'}{H}}$ such that 
\[
X = [X_{1,1}, X_{1,2},\ldots,X_{1,L'},X_{2,1},\ldots,X_{H',L'}],
\] where $X_{h,\ell}$ is the input of the $(h,\ell)$-th instance for $h \in [H'], \ell \in [L']$. As a result, in the first layer, $[X_{h,1},\ldots,X_{h,L'}] \in \mathbb{R}^{M \times \frac{dL'}{H}}$ is sent to attention head $h$. Let $W^Q, W^K, W^V \in \mathbb{R}^{\frac{d}{H} \times \frac{d}{H}}$ denote the query, key and value matrices in the $(h,1)$-th instance. We construct $W^{Q'}, W^{K'}, W^{V'}$ as
\[
W^{Q'} = 
\begin{pmatrix}
    W^Q\\
    0_{\frac{d}{H} \times \frac{d}{H}}\\
    \vdots\\
    0_{\frac{d}{H} \times \frac{d}{H}}
\end{pmatrix},
W^{K'} = 
\begin{pmatrix}
    W^K\\
    0_{\frac{d}{H} \times \frac{d}{H}}\\
    \vdots\\
    0_{\frac{d}{H} \times \frac{d}{H}}
\end{pmatrix},
W^{V'} = 
\begin{pmatrix}
    W^V\\
    0_{\frac{d}{H} \times \frac{d}{H}}\\
    \vdots\\
    0_{\frac{d}{H} \times \frac{d}{H}}
\end{pmatrix} \in \mathbb{R}^{(dL'/H) \times d/H}
\] and layer MLP the same as the layer MLP in instance $(h,1)$ such that attention head $h$ in layer 1 exactly computes the $(h,1)$-th instance. Also observe that we are not performing any computation regarding $(h,\ell)$-th instance for any $\ell \neq 1$. The same argument holds for all $h$, and therefore layer one of the large transformer computes $(h,1)$-th instance for all $1 \leq h \leq H'$. Since the outputs of all $H'$ attention heads are stored at predefined locations after each layer, we can repeat this process for $L'$ times to compute all instances.
\end{proof}

\begin{theorem}[Theorem \ref{thm: main result}]
\label{app: main result}
For any transformer $\mathcal{T}$ with $L$ layers, $H$ attention heads in each layer, input length $N$, embedding dimension $d$, there exists an algorithm that simulates $\mathcal{T}$ with $O((\frac{N}{M})^2\cdot\frac{HL}{H'L'})$ calls to a transformer oracle with $L'$ layers, $H'$ attention heads in each layer, input length $M$, embedding dimension $O(\frac{dH'L'}{H})$.
\end{theorem}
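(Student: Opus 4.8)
The plan is to obtain Theorem~\ref{thm: main result} by simply chaining Corollary~\ref{cor: multilayer, multihead to single} with Lemma~\ref{lem: multiple single to one}. Corollary~\ref{cor: multilayer, multihead to single} already gives an algorithm simulating $\mathcal{T}$ with $O((N/M)^2 \cdot HL)$ calls to a \emph{single-layer, single-head} transformer oracle of input length $M$ and embedding dimension $O(d/H)$, so the only thing left is to show that each batch of $H'L'$ of those single-head calls can be replaced by a single call to the richer oracle with $L'$ layers, $H'$ heads per layer, and embedding dimension $O(dH'L'/H)$ --- which is exactly what Lemma~\ref{lem: multiple single to one} provides. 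I would therefore partition the $O((N/M)^2 HL)$ single-head calls into $\lceil (N/M)^2 HL/(H'L') \rceil = O((N/M)^2 \cdot HL/(H'L'))$ groups of size $H'L'$ (padding the final group with dummy instances, e.g. trivial transformers on zero input, which costs only a constant factor) and invoke Lemma~\ref{lem: multiple single to one} once per group, yielding the claimed count.

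The point that needs care is that Lemma~\ref{lem: multiple single to one} requires the $H'L'$ single-head instances in a group to be mutually non-adaptive, so I would trace the adaptivity structure of the algorithm from Corollary~\ref{cor: multilayer, multihead to single}. That algorithm processes $\mathcal{T}$ layer by layer, and within a layer the $H$ heads are handled independently, and within a head the algorithm of Lemma~\ref{lem: attention is self-reducible} issues $O((N/M)^2)$ calls in Step~1 (all depending only on the current layer's input) followed by $O((N/M)^2)$ calls in Step~2 (depending on Step~1). Hence the whole procedure runs in $O(L)$ rounds of adaptivity, and at each round there are $\Theta((N/M)^2 H)$ simultaneously independent single-head calls to draw from. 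Since we assume $H'L' \ll N$, we have $H'L' \le (N/M)^2 H$, so within each round there are always enough independent calls to assemble full groups of size $H'L'$; I batch each group via Lemma~\ref{lem: multiple single to one}, pad out any leftover calls within that round rather than carrying them across rounds, and move on. The total over all $O(L)$ rounds is $O((N/M)^2 \cdot HL/(H'L'))$ big-oracle calls.

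Finally I would verify parameter and MLP consistency: the single-head instances produced by Lemma~\ref{lem: attention is self-reducible} have embedding dimension $O(d/H)$ and use only MLPs performing basic arithmetic (division, solving a linear equation, padding constants), which lie in the allowed MLP class, and Lemma~\ref{lem: multiple single to one} stacks $H'L'$ of them into one oracle of embedding dimension $O((d/H)\cdot H'L') = O(dH'L'/H)$, exactly as stated; composition of exact simulations is exact, so no error accumulates in the unlimited-precision regime. The main obstacle is not technical depth but precisely this accounting --- confirming that at each of the $O(L)$ adaptive stages enough independent single-head calls are available so Lemma~\ref{lem: multiple single to one} always applies to full-size batches, and that the parameter reuse keeps the embedding dimension at $O(dH'L'/H)$ rather than blowing up.
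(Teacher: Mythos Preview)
Your proposal is correct and follows exactly the same route as the paper, which proves the theorem in one line: ``This follows from Corollary~\ref{cor: multilayer, multihead to single} and Lemma~\ref{lem: multiple single to one}.'' You have supplied considerably more detail than the paper does---in particular, your careful discussion of the adaptivity structure (that within each of the $O(L)$ rounds there are $\Theta((N/M)^2 H)$ mutually independent single-head calls available for batching) is a point the paper leaves entirely implicit, so your write-up is if anything more thorough than the original.
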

\begin{proof}
This follows from Corollary \ref{app: multilayer, multihead to single} and Lemma \ref{app: multiple single to one}.
\end{proof}

\begin{theorem}[Theorem \ref{thm: main result with causal}]
\label{app: main result with causal}
For any transformer $\mathcal{T}$ with $L$ layers, $H$ attention heads in each layer, input length $N$, embedding dimension $d$ and causal masking, there exists an algorithm that simulates $\mathcal{T}$ with $O((\frac{N}{M})^2\cdot\frac{HL}{H'L'})$ calls to a transformer oracle with $L'$ layers, $H'$ attention heads in each layer, input length $M$, embedding dimension $O(\frac{dH'L'}{H})$ and causal masking.
\end{theorem}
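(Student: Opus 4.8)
The plan is to mirror the three-step structure that proves Theorem \ref{thm: main result} and to check that each step survives the addition of a causal mask. That is: first redo Lemma \ref{lem: attention is self-reducible} for a single causally masked head and layer; then reduce a general causally masked $\mathcal{T}$ to many single-head, single-layer causal instances exactly as in Corollary \ref{cor: multilayer, multihead to single} (compute $\mathcal{T}$ layer-by-layer and head-by-head, each causal head simulated by a causal oracle); and finally pack $H'L'$ such instances into one causal oracle of $L'$ layers and $H'$ heads exactly as in Lemma \ref{lem: multiple single to one} — the routing construction there uses block-diagonal query/key/value weights and never touches the mask, which acts identically regardless of which coordinates carry the active instance. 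So the only real content is the single-head case.

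For a single causally masked head I would partition the key index set $[N]$ into blocks $S_1, S_2, \dots$ of size roughly $M/2$, and observe that for a query $q_i$ with $i$ in block $t_q$, the causal output is $\bigl(\sum_{t_k \le t_q} B_{i,t_k}\bigr)/\bigl(\sum_{t_k \le t_q} A_{i,t_k}\bigr)$, where for $t_k < t_q$ the block contributions $A_{i,t_k}=\sum_{j\in S_{t_k}}a_{i,j}$ and $B_{i,t_k}=\sum_{j\in S_{t_k}}b_{i,j}$ are \emph{unmasked}, while for $t_k = t_q$ they are the causally truncated partial sums over $j \in S_{t_q}$ with $j \le i$. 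The diagonal term is immediate: feed the oracle $X[S_{t_q},:]$ together with one synthetic token placed at the \emph{first} position (so that every real query attends to it under the causal mask), whose key embeds to $0$ and whose value is $0$; the oracle's own causal mask then returns $B^{\mathrm{diag}}_i/(A^{\mathrm{diag}}_i+1)$, and a second call with all real values set to $\mathbf{1}$ returns $A^{\mathrm{diag}}_i/(A^{\mathrm{diag}}_i+1)$, from which the layer MLP solves for both partial sums as in the proof of Lemma \ref{lem: attention is self-reducible}.

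The main obstacle is the off-diagonal pairs $t_k < t_q$, where the column-concatenation trick from Lemma \ref{lem: attention is self-reducible} is ruined by the oracle's causal mask, which would truncate the key sum. Instead I would lay out a length-$M$ oracle input in which positions $1,\dots,M/2$ hold the ``key tokens'' of $S_{t_k}$ and positions $M/2+1,\dots,M$ hold the ``query tokens'' of $S_{t_q}$, padded so that $W^K,W^V$ read the true $k_j,v_j$ off a key token (and give $0$ on a query token) while $W^Q$ reads the true $q_i$ off a query token (and gives $0$ on a key token). Under the oracle's causal mask, the query token at position $M/2+p$ (global index $i$) attends to exactly all $M/2$ key tokens plus the first $p$ query tokens; the latter all have key vector $0$, so each contributes $\exp(0)=1$ to the denominator and, having value $0$, nothing to the numerator, giving output $B_{i,t_k}/(A_{i,t_k}+p)$ — and $A_{i,t_k}/(A_{i,t_k}+p)$ when values are set to $\mathbf{1}$. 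Since $p$ is recoverable from the positional encoding, the layer MLP extracts $A_{i,t_k}$ and $B_{i,t_k}$. In effect the ``surplus'' query tokens that the causal mask is forced to include play the role of the synthetic normalization token, and their key vector is $0$ for free.

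Finally, there are $O((N/M)^2)$ block pairs, each costing $O(1)$ oracle calls, and combining the $A$'s and $B$'s across $t_k \le t_q$ is a sum (Lemma \ref{lem: attention can sum}) followed by a division in the MLP, exactly as in Theorem \ref{thm: main result}; a constant-factor blow-up in the embedding dimension comes from the ``key-data / query-data / value-data'' padding. Feeding this causal single-head statement through the unchanged Corollary \ref{cor: multilayer, multihead to single} and Lemma \ref{lem: multiple single to one} then yields the claimed $O\bigl((N/M)^2 \cdot HL/(H'L')\bigr)$ calls to a causal transformer oracle with $L'$ layers, $H'$ heads, input length $M$, and embedding dimension $O(dH'L'/H)$. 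The only genuinely new step is the off-diagonal layout that turns the oracle's own causal mask into an \emph{unmasked} block sum; everything else is the same bookkeeping as in the non-causal proof.
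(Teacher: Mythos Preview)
Your proposal is correct, and its scaffolding matches the paper's: both observe that Corollary~\ref{cor: multilayer, multihead to single} and Lemma~\ref{lem: multiple single to one} go through unchanged under causal masking (the block-diagonal weight routing never interacts with the mask), so the only real work is the single-head, single-layer causal analogue of Lemma~\ref{lem: attention is self-reducible}. Your diagonal-block construction (synthetic token at oracle position $1$) is also exactly the paper's Claim~\ref{claim: app}.

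Where you genuinely diverge is the off-diagonal block. The paper keeps block size $M$ and \emph{column}-concatenates $X[S_{t},:]$ with $X[S_{t'},:]$, so the causal oracle, with queries and keys sharing oracle positions, returns a sum truncated at the oracle position of each query; it then recovers the complementary half by \emph{reversing} the token order and calling again, and adds the two partial sums to reconstitute the full $A_{i,t'}$ and $B_{i,t'}$. Your approach instead halves the block size and \emph{row}-concatenates, placing all key tokens strictly before all query tokens in oracle order. This neutralizes the oracle's mask in one shot (every query sees every key), and the unavoidable extra query tokens that slip in under the mask play the role of a position-dependent synthetic normalizer whose contribution $p$ is removable by the MLP. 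The trade-off is clean: the paper spends more oracle calls per block pair (forward plus reversed, for both $A$ and $B$) but uses full-size blocks; you spend fewer calls per pair but have four times as many pairs. Asymptotically both give $O((N/M)^2)$ calls for a single head, and your construction avoids the reversal bookkeeping entirely.
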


The high-level idea is identical to Theorem \ref{app: main result}.
We first show that a single layer, single head transformer with input length $M$ and causal masking can compute $\sum_{j=1}^{i}\exp(\langle q_i,k_j\rangle)$ for all $1 \leq i \leq M$ given input and all parameters.

\begin{claim}
\label{claim: app}
Given any $X \in \mathbb{R}^{M \times d}, W^Q, W^K, W^V \in \mathbb{R}^{d \times d}$, one calls to a single layer, single head transformer oracle $\mathcal{O}$ with input length $M$, embedding dimension $O(d)$ and causal masking suffices to compute $\sum_{j=1}^{i}\exp(\langle q_i,k_j\rangle)$ for all $1 \leq i \leq M$.
\end{claim}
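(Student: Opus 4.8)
\textbf{Proof plan for Claim \ref{claim: app}.}

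The plan is to mimic the ``synthetic token'' trick from Lemma \ref{app: attention is self-reducible}, but with one important twist: since the oracle has causal masking, query $q_i$ only sees keys $k_1,\ldots,k_i$, and we must place the synthetic (normalizing) token so that \emph{every} query still sees it. The only position visible to all queries under causal masking is position $1$, so I would prepend the synthetic token as a new first token rather than append it as a last token. Concretely, given $X \in \mathbb{R}^{M \times d}$ and $W^Q, W^K, W^V$, I would use an input MLP $\phi : \mathbb{R}^{d} \to \mathbb{R}^{d+1}$ that produces the length-$(M+1)$ sequence whose first row is $(0_{1\times d}, 1)$ and whose remaining $M$ rows are $(X[i,:], 1)$ for $i \in [M]$ (the extra $1$ in the last coordinate will be used to add a fixed offset to all inner products, exactly as before). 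Then I would set
\[
W^{Q'} = \begin{pmatrix} W^Q & 0_{d\times 1}\\ 0_{1\times d} & 1\end{pmatrix},\quad
W^{K'} = \begin{pmatrix} W^K & 0_{d\times 1}\\ 0_{1\times d} & 1\end{pmatrix},\quad
V' = \begin{pmatrix} 0_{1\times d} & 0 \\ 1_{M\times d} & 0_{M\times 1}\end{pmatrix},
\]
so that the synthetic first token has value $0$ (and does not corrupt the output) while every genuine token has value $1$ (or any fixed nonzero vector). Row $i+1$ of $Q'(K')^{\top}$ then has entry $\langle q_i, k_j\rangle + 1$ in column $j+1$ for $1 \le j \le i$, entry $1$ in column $1$ (the synthetic token), and $-\infty$ (masked out) in columns $j+1$ for $j > i$.

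With this setup, applying the causal-masked softmax and multiplying by $V'$ gives, in row $i+1$,
\[
\frac{\sum_{j=1}^{i}\exp(\langle q_i,k_j\rangle + 1)}{\exp(1) + \sum_{j=1}^{i}\exp(\langle q_i,k_j\rangle+1)} = \frac{\sum_{j=1}^{i}a_{i,j}}{1 + \sum_{j=1}^{i}a_{i,j}},
\]
where $a_{i,j} = \exp(\langle q_i,k_j\rangle)$, because the common factor $\exp(1)$ cancels between numerator and denominator. Finally, the layer MLP $\psi$ can solve for the quantity of interest by computing $y \mapsto y/(1-y)$ on this scalar, recovering $\sum_{j=1}^{i}\exp(\langle q_i,k_j\rangle)$ in output row $i+1$ for every $1 \le i \le M$ (one then re-indexes by dropping the synthetic row). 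This uses a single oracle call, a sequence of length $M+1 = O(M)$, embedding dimension $d+1 = O(d)$, and causal masking, as claimed; the extra $O(d^2)$-time arithmetic needed (the offset, the division) is within what our MLP model permits.

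The only slightly delicate point — and the one I would spell out carefully in the full proof — is that prepending a token shifts all indices by one, so one must keep the bookkeeping straight (query $q_i$ of $\mathcal{T}$ lives in row $i+1$ of the oracle input, and the synthetic token occupies row $1$ which every causal query attends to). There is no real obstacle here: causal masking is actually \emph{helpful}, since it automatically restricts the oracle's attention to exactly the prefix $\{1,\ldots,i\}$ of keys that we want summed, so we do not even need to engineer which keys are visible. Once Claim \ref{claim: app} is in hand, Theorem \ref{app: main result with causal} follows by the same block decomposition as in the unmasked case: one only needs to handle blocks $(S_t, S_{t'})$ with $t \le t'$ (blocks with $t > t'$ are entirely masked out), and within the diagonal block $t = t'$ the causal oracle handles the intra-block masking for free, while off-diagonal blocks $t < t'$ are unmasked rectangles handled exactly as in Step 1 and Step 2 of Lemma \ref{app: attention is self-reducible}.
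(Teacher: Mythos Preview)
Your proposal is correct and takes essentially the same approach as the paper: both prepend a synthetic token at position $1$ (so that every causally-masked query sees it), use the same block-diagonal $W^{Q'},W^{K'}$ to append a constant $+1$ offset, set the synthetic token's value to zero while genuine tokens get value one, and then invert $y\mapsto y/(1-y)$ in the layer MLP. The constructions and the resulting computation match the paper's proof line for line (up to inconsequential column reordering in $V'$).
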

\begin{proof}
We define 
\[
W^{Q'} = 
\begin{pmatrix}
    W^Q & 0_{1 \times d}\\
    0_{d \times 1} & 1    
\end{pmatrix},
W^{K'} = 
\begin{pmatrix}
    W^K & 0_{1 \times d}\\
    0_{d \times 1} & 1
\end{pmatrix} \in \mathbb{R}^{(d+1) \times (d+1)}
\] such that 
\begin{align*}
Q' & := \phi(X)\cdot W^{Q'} =    
\begin{pmatrix}
    0_{1 \times d} & 1\\
    X & 1_{M \times 1}
\end{pmatrix}\cdot 
\begin{pmatrix}
    W^Q & 0_{1 \times d}\\
    0_{d \times 1} & 1    
\end{pmatrix} = 
\begin{pmatrix}
    0 & 1\\
    q_{1} & 1\\
    \vdots & \vdots\\
    q_{M} & 1
\end{pmatrix} \in \mathbb{R}^{(M+1) \times (d+1)}\\
K' &= \phi(X)\cdot W^{K'} = 
\begin{pmatrix}
    0_{1 \times d} & 1\\
    X & 1_{M \times 1}
\end{pmatrix}\cdot 
\begin{pmatrix}
    W^K & 0_{1 \times d}\\
    0_{d \times 1} & 1
\end{pmatrix} = 
\begin{pmatrix}
    0 & 1\\
    k_{1} & 1\\
    \vdots & \vdots \\
    k_{M} & 1
\end{pmatrix} \in \mathbb{R}^{(M+1) \times (d+1)}\\
V' &= 
\begin{pmatrix}
    0 & 0_{1 \times d}\\
    0_{M \times 1} & 1_{M \times d}
\end{pmatrix} \in \mathbb{R}^{(M+1) \times (d+1)}.
\end{align*} As a result, we have 
\[
Q'(K')^{\top} = 
\begin{pmatrix}
    1 & 1 & \ldots & 1\\
    1 & \langle q_{1},k_{1}\rangle & \ldots & \langle q_{1},k_{M}\rangle\\
    \vdots & \vdots & \ddots & \vdots\\
    1 & \langle q_{M},k_{1}\rangle & \ldots & \langle q_{M},k_{M}\rangle
\end{pmatrix}.
\] Finally, we can calculate that the $(i,j)$-th entry $(2 \leq i \leq M+1)$ of the oracle output, $\softmax(\mask(Q'(K')^{\top}))V'$, is 
\[
\frac{\sum_{j=1}^{i-1}\exp(\langle q_{i-1},k_{j}\rangle+1)}{\sum_{j=1}^{i-1}\exp(\langle q_{i-1},k_{j}\rangle+1)+\exp(1)} = \frac{\sum_{j=1}^{i-1} a_{i-1,j}}{\sum_{j=1}^{i}a_{i-1,j}+\exp(0)}.
\] Finally, we can define the MLP such that it outputs 
\[
\frac{\exp(0)\cdot \frac{\sum_{j=1}^{i-1} a_{i-1,j}}{\sum_{j=1}^{i}a_{i-1,j}+\exp(0)}}{1-\frac{\sum_{j=1}^{i-1} a_{i-1,j}}{\sum_{j=1}^{i}a_{i-1,j}+\exp(0)}} = \sum_{j=1}^{i-1} a_{i-1,j}
\] for all $2 \leq i \leq M+1$.
\end{proof}

\begin{proof}[Proof of Theorem \ref{app: main result with causal}]
The proof is similar to the proof of Theorem \ref{app: main result}. First notice that Lemma \ref{app: multiple single to one} still holds if we add causal masking to the transformers because the proof is not affected by causal masking. In addition, the analog of Corollary \ref{app: multilayer, multihead to single} will hold even if we add causal masking to the transformers if we can show that Lemma \ref{app: attention is self-reducible} holds under causal masking since the proof is the same. Therefore, it suffices to prove Lemma \ref{app: attention is self-reducible} under causal masking.

Let $\mathcal{T}$ be any single layer, single head transformer with query, key, value matrices $W^Q,W^K,W^V \in \mathbb{R}^{d \times d}$ and causal masking, with arbitrary input $X \in \mathbb{R}^{N \times d}$. For the same reason as Lemma \ref{app: attention is self-reducible}, we assume without loss of generality that both the input MLP and layer MLP are identity functions, and we use the same notation as in Lemma \ref{app: attention is self-reducible}. Our goal is to approximate
\[
\frac{\sum_{j=1}^{i}\exp(\langle q_i,k_j\rangle)\cdot v_j}{\sum_{j=1}^{i}\exp(\langle q_i,k_j\rangle)} =: \frac{\sum_{j=1}^{i}b_{i,j}}{\sum_{j=1}^{i}a_{i,j}}.
\] for all $1 \leq i \leq N$. Notice that we already include causal masking in this expression by only summing over $j \leq i$. Define 
\[
A_{i,t}^{(1)}:= \sum_{j \in S_t\cap [i]}a_{i,j}, A_{i,t}^{(2)} := \sum_{j \in S_t-[i]}a_{i,j} \Rightarrow A_{i,t} = A_{i,t}^{(1)}+A_{i,t}^{(2)}
\] and 
\[
B_{i,t}^{(1)}:= \sum_{j \in S_t\cap [i]}b_{i,j}, B_{i,t}^{(2)} := \sum_{j \in S_t-[i]}b_{i,j}\Rightarrow B_{i,t} = B_{i,t}^{(1)}+B_{i,t}^{(2)}.
\] For each $i \in S_{t}$, we want to compute
\[
\frac{\sum_{t'=1}^{t-1}B_{i,t'}+B_{i,t}^{(1)}}{\sum_{t'=1}^{t-1}A_{i,t'}+A_{i,t}^{(1)}} = \frac{\sum_{t'=1}^{t-1}(\frac{B^{(1)}_{i,t'}}{A_{i,t'}^{(1)}}\cdot A^{(1)}_{i,t'}+\frac{B^{(2)}_{i,t'}}{A_{i,t'}^{(2)}}\cdot A^{(2)}_{i,t'})+\frac{B_{i,t}^{(1)}}{A_{i,t}^{(1)}}\cdot A_{i,t}^{(1)}}{\sum_{t'=1}^{t-1}(A^{(1)}_{i,t}+A^{(2)}_{i,t})+A_{i,t}^{(1)}}.
\] Just like Lemma \ref{app: attention is self-reducible}, we will compute $A_{i,t'}^{(1)}$, $A_{i,t'}^{(2)}$, $\frac{B^{(1)}_{i,t'}}{A_{i,t'}^{(1)}}$, $\frac{B^{(2)}_{i,t'}}{A_{i,t'}^{(2)}}$ with our oracle independently for all $1 \leq t' \leq t-1$ (each with one oracle call). In addition, $A_{i,t}^{(1)}$ can be computed with a single oracle call using Claim \ref{claim: app}, and $\frac{B_{i,t}^{(1)}}{A_{i,t}^{(1)}}$ can be computed using a single oracle call by simply feeding the oracle with $X[S_t,:],W^Q,W^K,W^V$. 

To compute $A_{i,t'}^{(1)}$ and $\frac{B_{i,t'}^{(1)}}{A_{i,t'}^{(1)}}$ with one oracle each for all $i, t'$, we can use the exact same construction in step 1 in the proof of Lemma \ref{app: attention is self-reducible} (this works because our oracle also has causal masking). To compute $A_{i,t'}^{(2)}$ and $\frac{B_{i,t'}^{(2)}}{A_{i,t'}^{(2)}}$, we first define $\phi$ such that 
\[
\phi(X[S_t,:],X[S_{t'},:]) = 
\begin{pmatrix}
    x_{tM} & 0_{1 \times d} \\
    x_{tM-1} & x_{t'M}\\
    \vdots & \vdots\\
    x_{(t-1)M+1} & x_{(t'-1)M+2} \\
    0_{1 \times d} & x_{(t'-1)M+1} 
\end{pmatrix}.
\] Notice that this is valid because $\phi$ has information on the position of all the tokens. Furthermore, Lemma \ref{lem: attention can permute} allows us to use our oracle to perform this computation as well. We also let
\[
W^{Q'} = 
\begin{pmatrix}
    W^Q\\
    0_{d \times d}
\end{pmatrix},
W^{K'} = 
\begin{pmatrix}
    0^{d \times d}\\
    W^K
\end{pmatrix}
\] such that 
\begin{align*}
    Q'&:= \phi(X[S_t,:],X[S_{t'},:])\cdot W^{Q'} = 
    \begin{pmatrix}
        q_{tM}\\
        q_{tM-1}\\
        \vdots\\
        q_{(t-1)M+1}\\
        0_{1 \times d}
    \end{pmatrix},\\
    K' &:= \phi(X[S_t,:],X[S_{t'},:])\cdot W^{K'} = 
    \begin{pmatrix}
        0_{1 \times d}\\
        k_{t'M}\\
        k_{t'M-1}\\
        \vdots\\
        k_{(t'-1)M+1}
    \end{pmatrix},\\
    V' &:= 
    \begin{pmatrix}
        0_{1 \times d} & 0\\
        1_{M \times d} & 0_{M \times 1}
    \end{pmatrix}.
\end{align*} Now notice that 
\[
Q'(K')^{\top} = 
\begin{pmatrix}
    0 & \langle q_{tM},k_{t'M}\rangle & \cdots & \langle q_{tM},k_{(t'-1)M+1}\rangle\\
    0 & \langle q_{tM-1},k_{t'M}\rangle & \cdots & \langle q_{tM-1},k_{(t'-1)M+1}\rangle\\
    \vdots & \vdots & \ddots & \vdots\\
    0 & \langle q_{(t-1)M+1},k_{t'M}\rangle & \cdots & \langle q_{(t-1)M+1},k_{(t'-1)M+1}\rangle\\
    0 & 0 & \cdots & 0
\end{pmatrix}
\] Therefore, the $(tM-i+1)$-th row of $\softmax(\mask(Q'(K')^{\top}))V'$, which corresponds to $q_{i}$, is computing
\[
\frac{\sum_{j=i+1}^{t'M}\exp(\langle q_i,k_j\rangle)}{\sum_{j=i+1}^{t'M}\exp(\langle q_i,k_j\rangle)+\exp(0)},
\] which allows us to compute $\sum_{j=i+1}^{t'M}\exp(\langle q_i,k_j\rangle) = A^{(2)}_{i,t'}$ exactly. Furthermore, $\frac{B_{i,t'}^{(2)}}{A_{i,t'}^{(2)}}$ can be computed by letting 
\[
W^{Q''} = 
\begin{pmatrix}
    W^Q\\
    0_{d \times d}
\end{pmatrix},
W^{K''} = 
\begin{pmatrix}
    0^{d \times d}\\
    W^K
\end{pmatrix},
W^{V''} = 
\begin{pmatrix}
    0^{d \times d}\\
    W^V
\end{pmatrix}
\] such that 
\begin{align*}
    Q''&:= \phi(X[S_t,:],X[S_{t'},:])\cdot W^{Q''} = 
    \begin{pmatrix}
        q_{tM}\\
        q_{tM-1}\\
        \vdots\\
        q_{(t-1)M+1}
    \end{pmatrix},\\
    K'' &:= \phi(X[S_t,:],X[S_{t'},:])\cdot W^{K''} = 
    \begin{pmatrix}
        k_{t'M}\\
        k_{t'M-1}\\
        \vdots\\
        k_{(t'-1)M+1}
    \end{pmatrix},\\
    V'' &:= 
    \begin{pmatrix}
        v_{t'M}\\
        v_{t'M-1}\\
        \vdots\\
        v_{(t'-1)M+1}
    \end{pmatrix}.
\end{align*}

In total we need $O((\frac{N}{M})^2)$ oracle calls, and the proof is complete.
\end{proof}

\section{Missing Proofs in Section \ref{sec: average case big}}

\subsection{Missing Proofs in Section \ref{sec: average case}}
\label{app: average case}

\begin{theorem}[Theorem \ref{thm: average case}]
Let $\mathcal{T}$ be a transformer with $L$ layers, $H$ attention heads in each layer, input length $N$ and embedding dimension $d$. Suppose there exist absolute constants $C,D>0$ such that 
\[
\frac{1}{C}\leq a_{i,j} \leq C, \textup{ and } DN\cdot \max_{j}\|b_{i,j}\|_2 \leq \Big\|\sum_{j=1}^{N}b_{i,j}\Big\|_2
\] where
\[
a_{i,j} = \exp(\langle q_i,k_j\rangle), b_{i,j} = \exp(\langle q_i,k_j\rangle)\cdot v_j
\] for any query $q_i,k_j,v_j$ in any attention head. There exists an algorithm using $O(\frac{N}{M}\cdot \frac{HL}{H'L'})$ oracle calls to a small transformer with $L'$ layers, $H'$ attention heads in each layer, embedding dimension $O(\frac{dH'L'}{H})$ to obtain an $(1+ \varepsilon)$ approximation of $\mathcal{T}$ with probability at least $0.9$ for any fixed constant $\varepsilon>0$.
\end{theorem}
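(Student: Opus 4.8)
The plan is to mirror the three-step structure of Section~\ref{sec: quadratic calls suffice} — reduce to a single head and layer, then handle all $H$ heads and $L$ layers one at a time, then pack $H'L'$ single-head single-layer instances into one oracle call via Lemma~\ref{lem: multiple single to one} — but to replace the \emph{exact} block-by-block computation of Lemma~\ref{lem: attention is self-reducible} by a \emph{randomized, one-block} estimate. Concretely, for a single-layer single-head $\mathcal{T}$ I fix the $N/M$ query blocks $S_1,\dots,S_{N/M}$, draw a uniformly random permutation $\tau$ of $[N]$, and for each query $q_i$ with $i\in S_t$ set
\[
\hat{A}_i=\frac{N}{M}\sum_{j\in S_t}a_{i,\tau(j)},\qquad \hat{B}_i=\frac{N}{M}\sum_{j\in S_t}b_{i,\tau(j)},
\]
outputting $\hat{B}_i/\hat{A}_i$ in place of $\attn(X)[i,:]=\big(\sum_j b_{i,j}\big)\big/\big(\sum_j a_{i,j}\big)$. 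Since $\tau(S_t)$ is a uniformly random $M$-subset of $[N]$, both estimators are unbiased; and since $\tau(S_1),\dots,\tau(S_{N/M})$ partition $[N]$, the whole head is covered by $N/M$ ``jobs'' that together touch each key once. Each job is realized with a constant number of oracle calls exactly as in Lemma~\ref{lem: attention is self-reducible}: feed the oracle the length-$M$ sequence whose $r$-th token column-concatenates $X[(t-1)M+r,:]$ with $X[\tau((t-1)M+r),:]$ (rearranging rows of $X$ this way is a permitted arrangement operation, and can alternatively be produced by a look-up oracle call via Lemma~\ref{lem: attention can permute}), use the block-supported $W^Q,W^K,W^V$ of that lemma, and use the same synthetic-token normalization trick to recover $\hat A_i$ and then $\hat B_i/\hat A_i$.

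The analytic core is a concentration claim: for a parameter $\varepsilon'=\Theta(\varepsilon/L)$ and $\delta=0.1/(NHL)$, with probability $\ge 1-\delta$ one has $|\hat A_i-A_i|\le\varepsilon' A_i$ and $\|\hat B_i-B_i\|_2\le\varepsilon'\|B_i\|_2$. For $\hat A_i$ the summands lie in $[1/C,C]$ and $A_i\ge N/C$, so both the per-summand range and the variance $\tfrac{M}{N}\sum_j a_{i,j}^2\le \tfrac{M}{N}C\,A_i$ are controlled relative to $A_i$; a Bernstein inequality for sampling without replacement gives failure probability $\exp(-\Omega(\varepsilon'^2 M/C^2))$, which is at most $\delta$ once $M$ is a large enough constant times $\log N$. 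For the vector-valued $\hat B_i$, the hypothesis $DN\max_j\|b_{i,j}\|_2\le\|B_i\|_2$ is precisely what makes the trace-variance $\tfrac{M}{N}\sum_j\|b_{i,j}\|_2^2\le M\max_j\|b_{i,j}\|_2^2$ and the per-summand norm both small relative to $\|B_i\|_2$; a dimension-free vector Bernstein (so $d=O(\log N)$ is irrelevant) gives failure probability $\exp(-\Omega(\varepsilon'^2 D^2 M))\le\delta$, again for $M=\Omega(\log N)$. A short algebraic step then converts these into $\|\hat B_i/\hat A_i-B_i/A_i\|_2\le O(\varepsilon')\,\|B_i/A_i\|_2$, a per-token $(1+O(\varepsilon'))$ approximation of the head's output.

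To finish I union-bound over all $N$ tokens, $H$ heads, and $L$ layers (valid since $HL\ll N$ forces $\delta\cdot NHL\le 0.1$), and propagate the error layerwise: a $(1+O(\varepsilon'))$ relative error on the input to layer $\ell+1$ produces a $(1+O(\varepsilon'))$ relative error on its output, so taking $\varepsilon'=\Theta(\varepsilon/L)$ yields total error $(1+\varepsilon)$; the boundedness hypotheses are open conditions, hence still hold (with slightly worse absolute constants) on the perturbed layer inputs. Running this head-by-head and layer-by-layer, as in Corollary~\ref{cor: multilayer, multihead to single}, uses $O\!\big(\tfrac{N}{M}HL\big)$ calls to a single-layer single-head oracle of embedding dimension $O(d/H)$, and packing $H'L'$ of them into one call of the $L'$-layer, $H'$-head oracle of embedding dimension $O(dH'L'/H)$ via Lemma~\ref{lem: multiple single to one} gives the stated $O\!\big(\tfrac{N}{M}\cdot\tfrac{HL}{H'L'}\big)$ bound. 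I expect the main obstacle to be the vector-valued concentration for $\hat B_i$: Markov's inequality only yields failure probability $O(1/(D^2M\varepsilon'^2))$, far too weak for a union bound over $\mathrm{poly}(N)$ events, so one needs a genuine exponential-tail inequality for sampling without replacement in $\ell_2$, and one must verify that exactly the two stated hypotheses — not something stronger — are what that inequality consumes, keeping the requirement at $M=\Omega(\log N)$.
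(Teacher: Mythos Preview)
Your proposal is correct and follows essentially the same approach as the paper: draw a random permutation $\tau$, for each query block $S_t$ estimate $\sum_j a_{i,j}$ and $\sum_j b_{i,j}$ by $\tfrac{N}{M}$ times their sums over the random $M$-subset $\tau(S_t)$, realize each block with $O(1)$ oracle calls using the column-concatenation and synthetic-token tricks of Lemma~\ref{lem: attention is self-reducible}, and then invoke Corollary~\ref{cor: multilayer, multihead to single} and Lemma~\ref{lem: multiple single to one} to handle general $H,L,H',L'$. The paper uses Hoeffding rather than Bernstein and is less explicit about vector-valued concentration and layer-wise error propagation, but the argument is the same.
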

\begin{proof}
By Corollary \ref{app: multilayer, multihead to single} and Lemma \ref{app: multiple single to one}, it suffices to prove the Theorem with $H = L =1$ because when we generalize, all the attention head computation will be in parallel with each other.

Let $\mathcal{T}$ be any attention head with query, key, value matrices $W^Q,W^K,W^V \in \mathbb{R}^{d \times m}$ with arbitrary input $X \in \mathbb{R}^{N \times d}$. Without loss of generality we can assume the first MLP $\phi$ is the identity function because otherwise we will compose our oracle MLP with $\phi$. In addition, we can also assume that the layer MLP is the identity function, and this is because if not, we can still use identity functions in our oracle layer MLPs to compute the output of the large transformer before the layer MLP, and then use $O(\frac{N}{M})$ oracles as MLP to compute the final output.

Define $Q = XW^Q, K = XW^K, V = XW^V$ such that $q_i = Q[i,:], k_i = K[i,:], v_i = V[i,:]$ for all $1 \leq i \leq N$, and let $S_t = \{(t-1)M,\ldots,tM\}$ for all $1 \leq t \leq \frac{N}{M}$. Our goal is to approximate
\begin{align}
    \softmax(q_iK^{\top})V = \frac{\sum_{j=1}^{N}\exp(\langle q_i,k_j\rangle)\cdot v_j}{\sum_{j=1}^{N}\exp(\langle q_i,k_j\rangle)} = \frac{\sum_{j=1}^{N}b_{i,j}}{\sum_{j=1}^{N}a_{i,j}}
\end{align} for all $1 \leq i \leq N$. The algorithm will be divided into two parts like before.

\textbf{Step 1: Approximating $\sum_{j=1}^{N}a_{i,j}$ for all $1 \leq i \leq N$.} Let $i \in S_t$ for some $1 \leq t \leq \frac{N}{M}$. We pick a random permutation $\tau$ of $[N]$, and by Lemma \ref{lem: attention can permute} we can use $O(N/M)$ oracle calls\footnote{In practice, one would likely perform this permutation directly rather than using oracle calls, such as using the pytorch utility \texttt{randperm}. However, since it is a negligible additional number of calls, we use oracle calls here to simplify the computational model.} (one for each $X[S_t,:]$) to map $x_i$ to $[x_i,x_{\tau(i)}]$ for all $1 \leq i \leq N$. Now we use the first MLP $\phi$ in the oracles to map $(X[S_t,:],X[\tau(S_t),:])$ to 
\[
\phi(X[S_t,:],X[\tau(S_t),:]) = 
\begin{pmatrix}
    X[S_t,:] & X[\tau(S_t),:] & 1\\
    0_{1 \times d} & 0_{1 \times d} & 1
\end{pmatrix}
\] and we furthermore define
\[
W^{Q'} = 
\begin{pmatrix}
    W^Q & 0_{d \times 1} \\
    0_{d\times d} & 0_{d \times 1} \\
    0_{1 \times d} & 1
\end{pmatrix},
W^{K'} = 
\begin{pmatrix}
    0_{d \times d} & 0_{d \times 1} \\
    W^K & 0_{d \times 1} \\
    0_{1 \times d} & 1
\end{pmatrix}
\] in the oracle such that 
\begin{align*}
    Q' &:= \phi(X[S_t,:],X[\tau(S_t),:])W^{Q'} = 
\begin{pmatrix}
    q_{(t-1)M+1} & 1\\
    \vdots & \vdots\\
    q_{tM} & 1\\
    0 & 1
\end{pmatrix} \in \mathbb{R}^{(M+1) \times (d+1)},\\
K' &:= \phi(X[S_t,:],X[\tau(S_t),:])W^{K'} = 
\begin{pmatrix}
    k_{\tau((t-1)M+1)} & 1\\
    \vdots & \vdots\\
    k_{\tau(tM)} & 1\\
    0 & 1
\end{pmatrix} \in \mathbb{R}^{(M+1) \times (d+1)},\\
V' &:= 
\begin{pmatrix}
    1_{M \times d} & 0_{M \times 1}\\
    1_{d \times 1} & 1
\end{pmatrix}.
\end{align*}
Now we can compute $\sum_{j \in S_t}a_{i,\tau(j)}$ for all $i$ using $\frac{N}{M}$ oracle calls (the remaining proof is exactly the same as the proof of Lemma \ref{lem: attention is self-reducible}). Our estimator of $\sum_{j=1}^{N}a_{i,j}$ will be 
\[
\frac{N}{M}\sum_{j \in S_t}a_{i,\tau(j)}.
\] Our estimator is unbiased because
\[
\ex\Big[\sum_{j \in S_t}a_{i,\tau(j)}\Big] = \frac{M}{N}\cdot \sum_{j=1}^{N}a_{i,j}.
\] We can use Hoeffding's Inequality to show that 
\begin{align*}
    \Pr\Big[\Big|\sum_{j \in S_t}a_{i,\tau(j)}\cdot \frac{N}{M}-\sum_{j=1}^{N}a_{i,j}\Big| \geq \frac{\varepsilon}{4}\sum_{j=1}^{N}a_{i,j}\Big] 
    &= \Pr\Big[\Big|\sum_{j \in S_t}a_{i,\tau(j)}-\frac{M}{N}\sum_{j=1}^{N}a_{i,j}\Big| \geq \frac{\varepsilon M}{4N}\sum_{j=1}^{N}a_{i,j}\Big]\\
    &\leq 2\exp\Big(-\frac{\frac{2\varepsilon^2M^2}{16N^2}\cdot (\sum_{j=1}^{N}a_{i,j})^2}{M(C-\frac{1}{C})^2}\Big)\\
    &\leq 2\exp\Big(-\frac{2\varepsilon^2M(N/C)^2}{16N^2C^2}\Big)\\
    &= 2\exp\Big(-\frac{2\varepsilon^2M}{16C^4}\Big),
\end{align*} which is at most $\frac{1}{20N}$ if $M \geq \frac{8C^4\log (40N)}{\varepsilon^2}$, which is true by our assumption on $M$. A union bound over all $i \in [N]$ allows us to show that we get a $(1+\varepsilon/4)$ approximation of $\sum_{j=1}^{N}a_{i,j}$ for all $i$ with probability at least $0.95$. 

\textbf{Step 2: Approximating $\frac{\sum_{j=1}^{N}b_{i,j}}{\sum_{j=1}^{N}a_{i,j}}$} for all $1 \leq i \leq N$. Let $i \in S_t$ for some $1\leq t\leq \frac{N}{M}$. We pick a random permutation $\tau$ of $[N]$, and define
\[
\phi([X[S_t,:],X[\tau(S_t),:]]) = [X[S_t,:],X[\tau(S_t),:]]
\] and furthermore
\begin{align*}
W^{Q''} = 
\begin{pmatrix}
    W^Q \\
    0_{d \times d}
\end{pmatrix},
W^{K''} = 
\begin{pmatrix}
    0_{d \times d}\\
    W^K
\end{pmatrix},
W^{V''} = 
\begin{pmatrix}
    0_{d \times d}\\
    W^V
\end{pmatrix}.
\end{align*} The output $\softmax(Q''(K'')^{\top})V''$ gives us exactly 
\[
\frac{\sum_{j \in S_t}b_{i,\tau(j)}}{\sum_{j\in S_j}a_{i,\tau(j)}},
\] which will be our estimator. First observe that
\[
\ex\Big[\sum_{j \in S_t}b_{i,\tau(j)}\Big] = \frac{M}{N}\cdot \sum_{j=1}^{N}b_{i,j}.
\] We can again use Hoeffding's Inquality to show that
\begin{align*}
\Pr\Big[\Big\|\frac{N}{M}\sum_{j \in S_t}b_{i,\tau(j)}-\sum_{j=1}^{N}b_{i,j}\Big\|_2 \geq \frac{\varepsilon}{4}\big\|\sum_{j=1}^{N}b_{i,j}\Big\|_2\Big] &\leq 
2\cdot \exp\Big(-\frac{\varepsilon^2M\cdot \|\sum_{j=1}^{N}b_{i,j}\|^2_2}{128N^2(\max_{j}\|b_{i,j}\|_2)^2}\Big)\\
&\leq 2\cdot \exp\Big(-\frac{\varepsilon^2MD^2}{128}\Big)
\end{align*} which is at most $\frac{1}{20N}$ when $M\geq \frac{128(\log d+\log40+\log N)}{\varepsilon^2D^2}$. We have shown that
\[
\frac{N}{(1+\varepsilon/4)M} \leq \frac{\sum_{j=1}^{N}a_{i,j}}{\sum_{j \in S_t}a_{i,\tau(j)}} \leq \frac{N}{(1-\varepsilon/4)M},
\] and now we prove
\begin{align*}
    \Big\|\sum_{j=1}^{N}b_{i,j}-\sum_{j \in S_t}b_{i,\tau(j)}\cdot \frac{\sum_{j=1}^{N}a_{i,j}}{\sum_{j \in S_t}a_{i,\tau(j)}}\Big\|_2 \leq \varepsilon\cdot \Big\|\sum_{j=1}^{N}b_{i,j}\Big\|_2,
\end{align*} which concludes the proof. Indeed, we first decompose 
\begin{equation}
\begin{split}
\label{eq: appendix d}
    \Big\|\sum_{j=1}^{N}b_{i,j}-\sum_{j \in S_t}b_{i,\tau(j)}\cdot \frac{\sum_{j=1}^{N}a_{i,j}}{\sum_{j \in S_t}a_{i,\tau(j)}}\Big\|_2 &= \Big\|\sum_{j=1}^{N}b_{i,j}-\sum_{j \in S_t}b_{i,\tau(j)}\cdot \frac{N}{M}+(1-\delta)\sum_{j \in S_t}b_{i,\tau(j)}\cdot \frac{N}{M}\Big\|_2\\
    &\leq \Big\|\sum_{j=1}^{N}b_{i,j}-\sum_{j \in S_t}b_{i,\tau(j)}\cdot \frac{N}{M}\Big\|_2+(1-\delta)\Big\|\sum_{j \in S_t}b_{i,\tau(j)}\cdot \frac{N}{M}\Big\|_2,
\end{split}
\end{equation} where 
\[
\frac{1}{1+\varepsilon/4} \leq \delta := \frac{M\cdot \sum_{j=1}^{N}a_{i,j}}{N\cdot \sum_{j \in S_t}a_{i,\tau(j)}} \leq \frac{1}{1-\varepsilon/4}.
\] Now we already know 
\[
\Big\|\sum_{j=1}^{N}b_{i,j}-\sum_{j \in S_t}b_{i,\tau(j)}\cdot \frac{N}{M}\Big\|_2 \leq \frac{\varepsilon}{4}\cdot \Big\|\sum_{j=1}^{N}b_{i,j}\Big\|_2,
\] and therefore Equation \ref{eq: appendix d} can be further bounded by 
\begin{align*}
    &\frac{\varepsilon}{4}\cdot \Big\|\sum_{j=1}^{N}b_{i,j}\Big\|_2 + (1-\delta)(1+\varepsilon/4)\Big\|\sum_{j=1}^{N}b_{i,j}\Big\|_2\\
    &\leq \frac{\varepsilon}{4}\cdot \Big\|\sum_{j=1}^{N}b_{i,j}\Big\|_2 + \frac{\varepsilon}{4+\varepsilon}(1+\varepsilon/4)\Big\|\sum_{j=1}^{N}b_{i,j}\Big\|_2\\
    &\leq\varepsilon\cdot \Big\|\sum_{j=1}^{N}b_{i,j}\Big\|_2,
\end{align*}
which concludes the proof.
\end{proof}

\subsection{Missing Proofs in Section \ref{sec: linear is weaker}}
\label{app: linear is weaker}

\begin{theorem}[Theorem \ref{thm: linear calls are weaker}]
Given $\frac{N}{M}$ instances of single layer, single head transformers with input length $M$ and embedding dimension $d$, there exists an algorithm that simulates them with one call of a single layer, single head transformer with input length $O(N)$ and embedding dimension $O(d)$, along with $O(\frac{N}{M})$ many matrix multiplications of size $M \times d \times d$.
\end{theorem}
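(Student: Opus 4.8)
The plan is to pack the $N/M$ short inputs into one sequence of length $N$, push it through a single attention layer, and spend a handful of extra embedding coordinates to keep the instances from interfering inside the softmax; the only thing the single big transformer cannot absorb is the change of basis by the $\Theta(N/M)$ distinct weight matrices, which is exactly what the small $M\times d\times d$ matrix products are for. Concretely, write the $r$-th instance ($r\in[N/M]$) as weight matrices $W^Q_r,W^K_r,W^V_r\in\mathbb{R}^{d\times d}$, layer MLP $\psi_r$, and input $X_r\in\mathbb{R}^{M\times d}$ (fold each instance's input MLP into the external preprocessing, so we may treat it as the identity, as in the proof of Lemma~\ref{lem: attention is self-reducible}). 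Outside the oracle I first compute $Q_r=X_rW^Q_r$, $K_r=X_rW^K_r$, $V_r=X_rW^V_r$; this is $3\cdot(N/M)=O(N/M)$ matrix multiplications of shape $M\times d$ by $d\times d$, and all remaining steps are padding and concatenation.

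Next comes the quarantine gadget. Using $-(r-s)^2=-r^2+2rs-s^2$, I append the three fixed numbers $(-r^2,\,2r,\,-1)$ to every query row from instance $r$ and $(C,\,Cr,\,Cr^2)$ to every key row from instance $r$, where $C$ is a large fixed polynomial in $N$ (hence still $O(\log N)$-bit). Then a query of instance $r$ against a key of instance $s$ has augmented inner product $\langle q,k\rangle-C(r-s)^2$: unchanged when $r=s$, and at most $\langle q,k\rangle-C$ when $r\ne s$ (since $(r-s)^2\ge 1$). Concatenating per token the augmented query block, the augmented key block, and the value block (the latter carrying $v_j$ together with one coordinate holding the instance index $r$) gives a sequence $X\in\mathbb{R}^{N\times O(d)}$; I take the big transformer's $W^Q,W^K,W^V$ to be the fixed projections onto these three blocks and its input MLP to be the identity, so the $N\times N$ score matrix has $(i,j)$ entry $\langle q_i,k_j\rangle-C(r_i-r_j)^2$ and the attention output at a token $i$ of instance $r$ is the softmax average of the $v_j$ (and of the indices $r_j$) with weights $\propto e^{\langle q_i,k_j\rangle-C(r-r_j)^2}$. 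Since all query/key entries are $\mathrm{poly}(N)$, the intra-instance normalizer $\sum_{j\in S_r}e^{\langle q_i,k_j\rangle}$ is at least $e^{-\mathrm{poly}(N)}$ while the total cross-instance weight is at most $N\,e^{\mathrm{poly}(N)}e^{-C}$, so for $C$ a large enough polynomial the cross-instance mass is at most $2^{-\Omega(N)}$ times the intra-instance mass, and the big transformer's output at token $i$ matches instance $r$'s length-$M$ attention output at local position $i-(r-1)M$ up to relative error $\varepsilon=\Theta(1/2^N)$.

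To finish, the big transformer's token-wise layer MLP reads off the carried-through index coordinate (which equals $r$ up to the same negligible slack, so it rounds to $r$) and applies the matching $\psi_r$ to the value block; de-concatenating the length-$N$ output into $N/M$ contiguous blocks of length $M$ recovers all $N/M$ instance outputs. The accounting is: one call to a single-layer, single-head transformer of input length $N$ and embedding dimension $3d+O(1)=O(d)$, together with $O(N/M)$ matrix multiplications of size $M\times d\times d$.

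\textbf{Main obstacle.} The delicate point is making the quarantine quantitatively airtight with only $O(1)$ extra coordinates and $O(\log N)$-bit constants: the penalty $C(r-s)^2$ must dominate not the largest cross-instance score but the \emph{ratio} of the total cross-instance mass to the possibly $e^{-\mathrm{poly}(N)}$-small intra-instance normalizer, and one must check that a single $C=\mathrm{poly}(N)$ is simultaneously large enough for this and small enough to be $O(\log N)$-bit representable, and that the resulting $2^{-\Omega(N)}$ additive slack indeed survives as the required relative-error bound through the instances' layer MLPs (taken Lipschitz on their compact domains). The distance gadget $-(r-s)^2=-r^2+2rs-s^2$, the block-projection weights, and the dimension bookkeeping are all routine.
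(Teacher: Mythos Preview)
Your proposal is correct and follows the same overall architecture as the paper's proof: precompute $Q_r,K_r,V_r$ with the $O(N/M)$ external matrix products, concatenate all tokens into one length-$N$ sequence carried as $[Q\,|\,K\,|\,V]$, take $W^Q,W^K,W^V$ to be fixed block projections, and engineer extra coordinates so that cross-instance scores are driven down by a large additive penalty while same-instance scores are unchanged, then bound the resulting softmax error by $\Theta(2^{-N})$ using that all entries are $\mathrm{poly}(N)$.

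The genuine difference is the quarantine gadget. The paper appends to instance $i$'s queries a vector $u_i\in\{0,-B\}^r$ with exactly $r/2$ nonzeros and to its keys the complementary $v_i=B\mathbf{1}+u_i\in\{0,B\}^r$, with $r=\Theta(\log(N/M))$, so $\langle u_i,v_i\rangle=0$ and $\langle u_i,v_j\rangle\le -B^2$ for $i\ne j$. You instead use only three extra coordinates encoding $-C(r-s)^2=-Cr^2+2Crs-Cs^2$. Both yield $O(d)$ final dimension under the paper's standing assumption $d=O(\log N)$, but your gadget is strictly more economical in width (three coordinates versus $\Theta(\log N)$) at the price of entries as large as $C(N/M)^2$, which is still $\mathrm{poly}(N)$ and hence $O(\log N)$-bit. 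Your handling of the per-instance layer MLPs by routing an index through the value block and dispatching inside the big transformer's layer MLP is also more explicit than the paper, which simply assumes the $\psi_r$ are identities ``for the same reason as in Lemma~\ref{lem: attention is self-reducible}.'' One small caveat: your Lipschitz remark on the $\psi_r$ is not strictly guaranteed by the paper's MLP model (arbitrary continuous functions on compact domains), so if you want to keep that step you should either invoke uniform continuity or, more simply, note that the row index $i$ already determines $r$ deterministically (the instances are concatenated contiguously), making the carried-through index and the rounding unnecessary.
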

\begin{proof}
We assume that the input/layer MLPs in all the instances are identity functions for the same reason as in Lemma \ref{app: attention is self-reducible}. Let $X_i \in \mathbb{R}^{M \times d}$ be the input, $W^{Q}_i,W^K_i,W^V_i \in \mathbb{R}^{d \times d}$ be the query, key and value matrices for each instance $1 \leq i \leq \frac{N}{M}$. Our goal is to calculate 
\[
\softmax((X_iW^Q_i)(X_iW^K_i)^{\top})(X_iW^V_i)
\] for all $1 \leq i \leq \frac{N}{M}$. We first compute $X_iW_i^Q, X_iW_i^K, X_iW_i^V$ for all $1 \leq i\leq \frac{N}{M}$. For convenience we will denote 
\[
q_{i,j} = (X_iW^Q_i)[j,:], k_{i,j} = (X_iW^{K}_i)[j,:], v_{i,j} = (X_iW^V_{i})[j,:]
\] such that our goal is to compute $\softmax(q_{i,j}(X_iW_i^{K})^{\top})(X_iW_i^V)$ for all $1 \leq i\leq \frac{N}{M}, 1 \leq j \leq M$. For convenience we let
\[
\|X_i\|_{\infty,2}\cdot \|W^Q_i\|_2, \|X_i\|_{\infty,2}\cdot \|W^K_i\|_2, \|X_i\|_{\infty,2}\cdot \|W^V_i\|_2\leq C \leq \poly(N),  
\] for some $C$ (because we assume that all entries have $O(\log N)$ bit representation, and therefore each parameter is at most $\poly(N)$). As a result, 
\[
-C^2 \leq \langle q_{i,j},k_{i',j'}\rangle \leq C^2, \|v_{i,j}\|_2 \leq C
\] for any $1 \leq i,i' \leq N/M, 1 \leq j,j' \leq M$.

Let $B \in \mathbb{R}$ to be determined. Define 
\[
u_1,\ldots,u_{\frac{N}{M}} \in \{0,-B\}^{r}
\] such that ${r \choose r/2} \geq N/M$ (we only need $r \leq O(\log (N/M))$ by Stirling approximation) and all $u_i$ have exactly $r/2$ zeros (if there are more vectors than needed, simply make sure they are distinct), and let
\[
v_i = B\cdot (1,\ldots,1)+u_i
\] for all $1 \leq i \leq \frac{N}{M}$ such that $\langle u_i,v_i\rangle = 0$ for all $i$. For any $i \neq j$, notice that there must exist an index at which $u_i$ is $-B$ and $v_j$ is $B$. This is because the set of nonzeros in $v_i$ is the compliment of the set of nonzeros in $u_i$, and the former set must be different from the set of nonzeros in $v_j$. Now append $u_i$ to $q_{i,j}$ to get $q'_{i,j}$ and $v_i$ to $k_{i,j}$ to get $k'_{i,j}$ for all $1 \leq j \leq M$. Set $d' = d+r$. For each pair of $q'_{i,j}$ and $k'_{i',j'}$:
\begin{itemize}
    \item If $i = i'$, i.e. $q_i$ and $k_{i'}$ are in the same small transformer instance, then 
    \[
    \langle q'_{i,j},k'_{i',j'}\rangle = \langle q_{i,j},k_{i',j'}\rangle + \langle u_i,v_{i'}\rangle = \langle q_{i,j},k_{i',j'}\rangle
    \]

    \item If $i \neq i'$, i.e. $q_i$ and $k_{i'}$ are in different small transformer instances, then 
    \[
    \langle q'_{i,j},k'_{i',j'}\rangle = \langle q_{i,j},k_{i',j'}\rangle + \langle u_i,v_{i'}\rangle \leq \langle q_{i,j},k_{i',j'}\rangle-B^2.
    \] 
\end{itemize} Now we let $Q \in \mathbb{R}^{N \times d'}$ be the matrix of $q'_{i,j}$ and $K \in \mathbb{R}^{N \times d'}$ be the matrix of $k'_{i,j}$ and $V = X_iW^V_i$. We set $X = [Q,K,V] \in \mathbb{R}^{N \times (3d')}$ (note that we can always pad zeros to $V$ to make dimensions match), 
\[
W^Q = 
\begin{pmatrix}
    I_{d' \times d'}\\
    0_{d' \times d'}\\
    0_{d' \times d'}
\end{pmatrix},
W^K = 
\begin{pmatrix}
    0_{d' \times d'}\\
    I_{d' \times d'}\\
    0_{d' \times d'}
\end{pmatrix},
W^V = 
\begin{pmatrix}
    0_{d' \times d'}\\
    0_{d' \times d'}\\
    I_{d' \times d'}
\end{pmatrix}
\] such that
\[
[Q,K,V]W^Q = Q, [Q,K,V]W^K = K, [Q,K,V]W^V = V.
\] Furthermore, for each query $(q_{i,j},u_{i})$, its inner product between all keys in the same small transformer will be preserved, while its inner product between all keys in a different small transformer will be at most $C^2-B^2$. Therefore, for each query $q_{i,j}$, we can calculate the error as:
\begin{align*}
    &\Big\|\sum_{j'=1}^{M}\frac{\exp(\langle q_{i,j},k_{i,j'}\rangle)}{\sum_{\ell=1}^{M}\exp(\langle q_{i,j},k_{i,\ell}\rangle)}\cdot v_{i,j'}-\sum_{i'=1}^{N/M}\sum_{j'=1}^{M}\frac{\exp(\langle q_{i,j}',k_{i',j'}'\rangle)}{\sum_{\ell=1}^{N/M}\sum_{\ell'=1}^{M}\exp(\langle q_{i,j}',k_{\ell,\ell'}'\rangle)}v_{i',j'}\Big\|_2 \\
    &\leq \Big\|\sum_{j'=1}^{M}\frac{\exp(\langle q_{i,j},k_{i,j'}\rangle)}{\sum_{\ell=1}^{M}\exp(\langle q_{i,j},k_{i,\ell}\rangle)}\cdot v_{i,j'}-\sum_{j'=1}^{M}\frac{\exp(\langle q_{i,j}',k_{i,j'}'\rangle)}{\sum_{\ell=1}^{N/M}\sum_{\ell'=1}^{M}\exp(\langle q_{i,j}',k_{\ell,\ell'}'\rangle)}\cdot v_{i,j'}\Big\|_2\\
    &+\frac{NC\exp(C^2-B^2)}{\sum_{\ell=1}^{M}\exp(\langle q_{i,j},k_{i,\ell}\rangle)}\\
    &\leq \Big\|\sum_{j'=1}^{M}\Big(\frac{\exp(\langle q_{i,j},k_{i,j'}\rangle)}{\sum_{\ell=1}^{M}\exp(\langle q_{i,j},k_{i,\ell}\rangle)}\cdot v_{i,j'}-\frac{\exp(\langle q_{i,j},k_{i,j'}\rangle)}{\sum_{\ell=1}^{N/M}\sum_{\ell'=1}^{M}\exp(\langle q_{i,j}',k_{\ell,\ell'}'\rangle)}\cdot v_{i,j'}\Big)\Big\|_2\\
    &+\frac{NC\exp(C^2-B^2)}{M\exp(-C^2)}.
\end{align*} Now notice that
\[
\sum_{\ell=1}^{N/M}\sum_{\ell'=1}^{M}\exp(\langle q_{i,j}',k_{\ell,\ell'}'\rangle) = \sum_{\ell=1}^{M}\exp(\langle q_{i,j},k_{i,\ell}\rangle)+\sum_{\ell=1,\ell \neq i}^{N/M}\sum_{\ell'=1}^{M}\exp(\langle q_{i,j}',k_{\ell,\ell'}'\rangle)
\] and that 
\begin{align*}
    M\exp(-C^2) &\leq \sum_{\ell=1}^{M}\exp(\langle q_{i,j},k_{i,\ell}\rangle) \leq M\exp(C^2)\\
    N\exp(-C^2-B^2) &\leq \sum_{\ell=1,\ell \neq i}^{N/M}\sum_{\ell'=1}^{M}\exp(\langle q_{i,j}',k_{\ell,\ell'}'\rangle) \leq N\exp(C^2-B^2).
\end{align*} As a result, 
\[
\Big|\frac{1}{\sum_{\ell=1}^{M}\exp(\langle q_{i,j},k_{i,\ell}\rangle)} - \frac{1}{\sum_{\ell=1}^{N/M}\sum_{\ell'=1}^{M}\exp(\langle q_{i,j}',k_{\ell,\ell'}'\rangle)}\Big| \leq \frac{N\exp(C^2-B^2)}{M^2\exp(-C^2)} = \frac{N\exp(2C^2-B^2)}{M^2}.
\] Therefore, we can further upper bound the error by
\begin{align*}
    &M\cdot \exp(C^2)\cdot C\cdot \frac{N\exp(2C^2-B^2)}{M^2}+\frac{NC\exp(C^2-B^2)}{M\exp(-C^2)}\\
    &= \frac{NC\exp(2C^2-B^2)(1+\exp(C^2))}{M}.
\end{align*}

Therefore, we can set $B \leq \poly(N)$ to be sufficiently large such that the error is $O(\frac{1}{2^N})$.    
\end{proof}

\section{Missing Proofs in Section \ref{sec: sw+streamingllm}}

\label{app: sliding window and streamingllm}

\begin{theorem}[Theorem \ref{thm: streamingllm}]
For any transformer $\mathcal{T}$ with $L$ layers, $H$ attention heads in each layer, input length $N$, embedding dimension $d$, constant-size sliding window, there exists an algorithm that simulates $\mathcal{T}$ with $O(\frac{N}{M}\cdot\frac{HL}{H'L'})$ calls to a transformer oracle with $L'$ layers, $H'$ attention heads in each layer, input length $M$ and embedding dimension $O(\frac{dH'L'}{H})$ with causal masking. This result still holds if we have constant-size attention sink.
\end{theorem}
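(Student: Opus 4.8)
The plan is to follow the same two moves as in Theorem~\ref{thm: main result with causal}: reduce everything to a single attention head, handle that head with $O(N/M)$ oracle calls, and then pack. Concretely, Corollary~\ref{cor: multilayer, multihead to single} simulates $\mathcal{T}$ head by head and layer by layer (nothing there cares which mask a head uses), and Lemma~\ref{lem: multiple single to one} packs $H'L'$ single-head single-layer computations into one call of an $(H',L')$ oracle of embedding dimension $O(dH'L'/H)$, again unaffected by causal masking. So it suffices to simulate one single-layer single-head transformer with a sliding window of constant size $r$ (and, later, also a sink of constant size $s$) using $O(N/M)$ calls to a single-layer single-head causally masked oracle of input length $M$.

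For the sliding window, I would partition $[N]$ into $O(N/M)$ blocks of consecutive indices of length $\Theta(M)$ and enlarge the $t$-th block by the $r-1$ indices immediately preceding it to a set $P_t$ of $\le M$ consecutive indices; then every key that a query $q_i$ (with $i$ in block $t$) attends to, namely $k_{i-r+1},\dots,k_i$, is indexed within $P_t$. The goal for each such $i$ is the $O(1)$ scalars/vectors $a_{i,i-\delta}=\exp\langle q_i,k_{i-\delta}\rangle$ and $a_{i,i-\delta}v_{i-\delta}$ for $\delta=0,\dots,r-1$, after which one MLP outputs $\bigl(\sum_\delta a_{i,i-\delta}v_{i-\delta}\bigr)/\bigl(\sum_\delta a_{i,i-\delta}\bigr)$. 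To produce the $\delta$-th diagonal from one oracle call on the tokens of $P_t$ I see two routes: (i) use the look-up construction of Lemma~\ref{lem: attention can permute} with $\tau(i)=i-\delta$ so token $i$ picks up $(k_{i-\delta},v_{i-\delta})$, then let an MLP form $a_{i,i-\delta}$ and $a_{i,i-\delta}v_{i-\delta}$; or (ii) have the position-aware input MLP append $(1,i,i^2)$ to $q_i$ and $(-B(j+\delta)^2,\,2B(j+\delta),\,-B)$ to $k_j$, so that $\langle q_i',k_j'\rangle=\langle q_i,k_j\rangle-B(i-j-\delta)^2$, which for $B=\poly(N)$ large enough suppresses every $j\neq i-\delta$ below the target error, and recover $a_{i,i-\delta}$ via the synthetic-token trick of Lemma~\ref{lem: attention is self-reducible}. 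Either way this is $O(r)=O(1)$ oracle calls per block, so $O(N/M)$ overall.

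For attention sinks the only change is that each query additionally attends to the fixed keys $k_1,\dots,k_s$. Since these $s=O(1)$ tokens are shared across all queries, I would compute the sink contribution with $O(1)$ further oracle calls per block, feeding the oracle those $s$ sink tokens together with the block's query tokens and using the same penalty/synthetic-token mechanism to read off $a_{i,j}$ and $a_{i,j}v_j$ for $j\le s$, then folding these into the final normalization alongside the window terms. At small indices $i\le s+r$, where the sink and window ranges overlap, I would enumerate the kept positions per query directly from the $\sink$-mask definition (still an $O(1)$-size set), so no position is double-counted. Multiplying by the $HL$ heads of $\mathcal{T}$ and dividing by $H'L'$ through the packing above yields $O\bigl(\tfrac{N}{M}\cdot\tfrac{HL}{H'L'}\bigr)$ oracle calls.

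I expect the main obstacle to be reconciling the oracle's \emph{causal} mask with the \emph{sliding-window}/\emph{sink} mask of $\mathcal{T}$: a plain causal mask over a block over-includes keys, so one must ``cut'' the window at exactly the right offset — either by the quadratic positional penalty, whose contribution from out-of-window keys must be driven below $\varepsilon=\Theta(2^{-N})$ using only $\poly(N)$-sized weights, or by routing around it with the look-up lemma — and then get the window/sink normalization and the small-index boundary bookkeeping right. Once that local computation is in hand, the reduction to a single head/layer and the packing into larger oracles are routine applications of Corollary~\ref{cor: multilayer, multihead to single} and Lemma~\ref{lem: multiple single to one}, exactly as in Theorem~\ref{thm: main result with causal}.
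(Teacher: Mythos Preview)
Your proposal is correct, but it takes a genuinely different route from the paper for the core single-head computation. The paper does not extract the $r$ window diagonals one at a time; instead it writes the window sums as differences of two causal prefix sums,
\[
\sum_{j=i-r+1}^{i} a_{i,j}=\sum_{j=1}^{i} a_{i,j}-\sum_{j=1}^{i-r} a_{i,j},\qquad
\sum_{j=i-r+1}^{i} b_{i,j}=\sum_{j=1}^{i} b_{i,j}-\sum_{j=1}^{i-r} b_{i,j},
\]
and computes each prefix sum for all $i$ in a block with a single causally masked oracle call via the synthetic-token trick of Claim~\ref{claim: app} (and one more call for the $b$-ratio). The sinks are handled the same way, as $\sum_{j=1}^{s}=\sum_{j=1}^{i}-\sum_{j=s+1}^{i}$. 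This buys the paper a number of calls per block that is an absolute constant, independent of $r$ and $s$, and it uses the causal mask of the oracle \emph{directly} rather than fighting it with a quadratic positional penalty or a look-up. Your per-diagonal approach also works and is arguably more elementary, but it costs $O(r+s)$ calls per block (fine here since both are constant), and your route~(i) silently relies on Lemma~\ref{lem: attention can permute} being compatible with causal masking and on being able to retain $x_i$ alongside $x_{\tau(i)}$ so the MLP can form $\langle q_i,k_{i-\delta}\rangle$; both are true of the underlying construction but are not literally what the lemma states, so route~(ii) is the cleaner of your two options. Your reduction to a single head/layer and the packing via Corollary~\ref{cor: multilayer, multihead to single} and Lemma~\ref{lem: multiple single to one} match the paper exactly.
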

\begin{proof}
The proof goes in the same way as Theorem \ref{app: main result with causal}, where we assume without loss of generality that $H = L = H' = L' = 1$. We start with the sliding window scenario where the size of sliding window is $r$. Notice that for each query $q_i$ ($i \geq r+1$) we want to compute
\[
\sum_{j = i-r+1}^{i}\exp(\langle q_i,k_j\rangle) = \sum_{j=1}^{i}\exp(\langle q_i,k_j\rangle) - \sum_{j=1}^{i-r}\exp(\langle q_i,k_j\rangle),
\] and 
\[
\sum_{j = i-r+1}^{i}\exp(\langle q_i,k_j\rangle)\cdot v_j = \sum_{j=1}^{i}\exp(\langle q_i,k_j\rangle)\cdot v_j - \sum_{j=1}^{i-r}\exp(\langle q_i,k_j\rangle)\cdot v_j
\] given window size $r$. In addition, we can compute $\sum_{j=1}^{r}\exp(\langle q_i,k_j\rangle)$ and $\sum_{j=1}^{r}\exp(\langle q_i,k_j\rangle)\cdot v_j$ for all $i \leq r$ with 2 oracle calls, as shown in the proof of Theorem \ref{app: main result with causal}. 

We will partition $\{r+1,\ldots,N\}$ into chunks $S_1' = \{r+1,\ldots,M\}, S_2' = \{M+1,\ldots,2M-r\},\ldots$ of size $M-r$ and approximate the terms above for all $i \in S_t'$ in each chunk using constant many oracle calls, which suffices since $r$ is a constant. Without loss of generality we only prove our claim for $S_1'$ as the proof can be easily generalized to the remaining $S_t'$. Indeed, observe that 
\[
\sum_{j=1}^{i}\exp(\langle q_i,k_j\rangle) \textup{ and } \sum_{j=1}^{i-r}\exp(\langle q_i,k_j\rangle)
\] for all $i \in S_1'$ can both be computed exactly with one oracle call using the proof of Claim \ref{claim: app}. Furthermore, one oracle call suffices to compute either
\[
\frac{\sum_{j = 1}^{i}\exp(\langle q_i,k_j\rangle)\cdot v_j}{\sum_{j=1}^{i}\exp(\langle q_i,k_j\rangle)} \text{ or } \frac{\sum_{j = 1}^{i-r}\exp(\langle q_i,k_j\rangle)\cdot v_j}{\sum_{j=1}^{i-r}\exp(\langle q_i,k_j\rangle)}
\] for all $i \in S_1'$ because we can feed the oracle with $X[S_1,:]$ and use $W^Q$ to project $X[S_1,:]$ to $\{q_{r+1},\ldots,q_{M}\}$ and use $W^K$ to project $X[S_1,:]$ to $\{k_1,\ldots,k_{M-r}\}$ (similar to proof of Lemma \ref{app: attention is self-reducible}).

Finally, when there are attention sinks, we simply need to further calculate $\sum_{j=1}^{s}\exp(\langle q_i,k_j\rangle)$ and $\sum_{j=1}^{s}\exp(\langle q_i,k_j\rangle)\cdot v_j$. Observe that 
\begin{align*}
    \sum_{j=1}^{s}\exp(\langle q_i,k_j\rangle) &= \sum_{j=1}^{i}\exp(\langle q_i,k_j\rangle) - \sum_{j=s+1}^{i}\exp(\langle q_i,k_j\rangle),\\
    \sum_{j=1}^{s}\exp(\langle q_i,k_j\rangle)\cdot v_j &= \sum_{j=1}^{i}\exp(\langle q_i,k_j\rangle)\cdot v_j - \sum_{j=s+1}^{i}\exp(\langle q_i,k_j\rangle)\cdot v_j.
\end{align*} Each of the four terms can be computed with the exact same technique as in sliding window with 1 oracle call.
\end{proof}

\end{document}